	\newcommand{\mymathbold}{\symbf}%
	\newcommand{\mymathbold}{\bm}%
\newcommand{\scrbar}[1]{\overline{\mathcal{#1}}}
\newtheorem{proposition}{Proposition}
\newtheorem{theorem}{Theorem}
\newtheorem{lemma}{Lemma}
\newtheorem{corollary}{Corollary}
\DeclareMathOperator{\E}{\mathbb{E}}
\renewcommand{\P}{\operatorname{\mathbb{P}}}
\newcommand{\tr}{\operatorname{tr}}
\newcommand{\diag}{\operatorname{diag}}
\DeclarePairedDelimiter{\norm}{\lVert}{\rVert}
\DeclarePairedDelimiter{\abs}{\lvert}{\rvert}
\DeclarePairedDelimiter{\floor}{\lfloor}{\rfloor}
\DeclarePairedDelimiter{\braces}{\{}{\}}
\DeclarePairedDelimiter{\parens}{(}{)}
\DeclarePairedDelimiter{\brackets}{[}{]}
\DeclarePairedDelimiterX{\ip}[2]{\langle}{\rangle}{#1,#2}
\DeclarePairedDelimiterXPP{\normsub}[2]{}{\lVert}{\rVert}{_{#2}}{#1}
\DeclarePairedDelimiterXPP{\ipsub}[3]{}{\langle}{\rangle}{_{#3}}{#1,#2}
\DeclarePairedDelimiterXPP{\ipHS}[2]{}{\langle}{\rangle}{_{\mathrm{HS}}}{#1, #2}
\DeclarePairedDelimiterXPP{\normHS}[1]{}{\lVert}{\rVert}{_{\mathrm{HS}}}{#1}
\DeclarePairedDelimiterXPP{\ipF}[2]{}{\langle}{\rangle}{_{\mathrm{F}}}{#1, #2}
\DeclarePairedDelimiterXPP{\normF}[1]{}{\lVert}{\rVert}{_{\mathrm{F}}}{#1}
\DeclarePairedDelimiterXPP{\normt}[1]{}{\lVert}{\rVert}{_{2}}{#1}
\DeclarePairedDelimiterXPP{\normo}[1]{}{\lVert}{\rVert}{_{\mathrm{1}}}{#1}
\DeclarePairedDelimiterXPP{\dkl}[2]{\operatorname{D_{KL}}}{(}{)}{}{#1 \: \delimsize\Vert \: #2}
\DeclarePairedDelimiterXPP{\restr}[2]{}{{}}{\vert}{_{#2}}{#1}
\newcommand{\R}{\mathbb{R}}
\newcommand{\x}{\bm{x}}
\newcommand{\He}{\text{He}}
\newcommand{\given}{\:\vert\:}
\newcommand{\simiid}{\overset{\mathclap{\text{i.i.d.}}}{\sim}}
\title{A general technique for approximating high-dimensional empirical kernel matrices}
\author{%
  Chiraag Kaushik\thanks{School of Electrical and Computer Engineering, Georgia Institute of Technology.}
  \and Justin Romberg\footnotemark[1]
  \and Vidya Muthukumar\footnotemark[1] \thanks{School of Industrial \& Systems Engineering, Georgia Institute of Technology.}
  }
\begin{document}

\maketitle
\begin{abstract}
We present simple, user-friendly bounds for the expected operator norm of a random kernel matrix under general conditions on the kernel function $k(\cdot,\cdot)$. Our approach uses decoupling results for U-statistics and the non-commutative Khintchine inequality to obtain upper and lower bounds depending only on scalar statistics of the kernel function and a ``correlation kernel'' matrix corresponding to $k(\cdot,\cdot)$. We then apply our method to provide new, tighter approximations for inner-product kernel matrices on general high-dimensional data, where the sample size and data dimension are polynomially related. Our method obtains simplified proofs of existing results that rely on the moment method and combinatorial arguments while also providing novel approximation results for the case of anisotropic Gaussian data. Finally, using similar techniques to our approximation result, we show a tighter lower bound on the bias of kernel regression with anisotropic Gaussian data.
\end{abstract}

\section{Introduction}
Kernel methods are commonly employed to solve a range of problems in engineering, science, statistics, and machine learning~\cite{scholkopf2018learning}. Traditionally, much of the power of these methods has been derived from the fact that, in classical statistical settings with fixed data dimension, the eigenvalues of the empirical kernel matrix derived from samples behave akin to those of the original kernel integral operator~\cite{koltchinskii2000random}. Together with the universal approximation properties of several common kernels~\cite{steinwart2001influence,micchelli2006universal}, minimax-optimal rates can be derived for a broad class of target functions for kernel ridge regression and the kernel support-vector-machine~\cite{micchelli2006universal,caponnetto2007optimal}. This is no longer the case when kernel methods are applied on high-dimensional data. Indeed, a seminal result of El Karoui \cite{el2010spectrum} showed that in the proportional regime where the number of samples is proportional to the data dimension ($n \asymp d$), a large family of kernel methods equipped with an \emph{inner-product kernel} (i.e., a kernel of the form $k(x,z) = h(\ip{x}{z}/d)$) are restricted in their behavior to linear models. The crux of this result is a proof that the empirical kernel matrix is well-approximated in operator norm by an affine (constant + linear) kernel matrix plus a multiple of the identity matrix. This can be viewed as a type of \textit{curse of dimensionality} for kernel methods, where the nonlinear approximation power becomes negligible when the dimension of the data is proportional to the sample size.

More recently, equivalences between certain types of wide neural networks and kernel methods \cite{jacot2018neural,chizat2019lazy}, the occurrence of phenomena like benign overfitting and double descent in kernel models~\cite{belkin2018understand, haas2023mind, mallinar2022benign, mei2022generalization2, mcrae2022harmless}, and the increasing use of iterative kernel machines \cite{radhakrishnan2024mechanism, zhu2025iteratively} to adapt to hidden low-dimensional structure in modern machine learning tasks have spurred increased interest in sharp analyses of these methods in high-dimensional settings. A flexible but delicate setting that has received substantial recent attention is the \emph{polynomial scaling regime} $n \asymp d^q$, where the number of samples scales as a polynomial power $q \geq 1$ of the data dimension. A reasonable conjecture would be that kernel methods now can approximate only polynomial functions of the data up to degree $\floor{q}$. However, showing this is challenging beyond special cases ($q = 1$ and $q = 2$) and/or specialized assumptions on the data (uniform on the sphere/Boolean hypercube) due to the intricate dependencies between entries of the empirical kernel matrix. Indeed, even the proofs of the results in these specialized settings, e.g.,~\cite{ghorbani2021linearized,pandit2024universality} involve intricate applications of the moment/trace method and combinatorial arguments. The only more general-purpose result shows an approximation barrier of degree-$\floor{2q}$ instead of degree-$\floor{q}$, and the factor of $2$ is not expected to be tight \cite{donhauser2021rotational}.

The central goal of this work is to provide a sharp and general-purpose technique for kernel matrix approximation, applicable in high-dimensional regimes like the above. Specifically, we aim to provide a general technique that can be used to show tight bounds on $\E[||\blK - \bar{\blK}||]$ for any candidate approximator $\bar{\blK}$ — thereby providing conditions under which $\bar{\blK}$ is a faithful approximation of $\blK$ in operator norm. Often, the approximator $\bar{\blK}$ has lower-dimensional structure of some form (e.g., lower-degree, as mentioned above) and is a useful object for studying generalization in kernel ridge regression and its recent iterative extensions \cite{zhu2025iteratively}. In fact, approximation results of this form often constitute the first step in precisely characterizing the test error, which reduces to studying the spectrum of the often simpler matrix $\bar{\blK}$.

\paragraph{Contributions:} In this paper, we first provide a general-purpose bound on the the expected operator norm of an empirical kernel matrix under minimal distributional assumptions and mild integrability conditions on the kernel function $k(\cdot,\cdot)$. We then focus especially on applying our result to inner-product kernels that include those derived as the asymptotic limit of random-feature models/wide neural networks when the number of features/width tends to infinity. For such kernels, our technique recovers the specialized kernel matrix approximation results of~\cite{ghorbani2021linearized,pandit2024universality} in a simpler manner, either matching or improving the best known approximation rates, and significantly improves the approximation barrier from $\floor{2q}$ to $\floor{4q/3}$ on general anisotropic Gaussian data. We finally use similar techniques to obtain new lower bounds for the bias of kernel ridge regression estimates in these high-dimensional settings. While we apply our bound primarily to these types of approximation problems, we have not seen our general kernel matrix bounds stated in this form in the existing literature, and we believe they may find broader use in the analysis of kernel methods on high-dimensional data. Our contributions are listed in more detail below:

\textbf{(1)} 
Our main result, Theorem~\ref{thm:gen-bound}, provides upper and lower bounds on the expected operator norm of a random kernel matrix under general measurability conditions on the kernel function $k$. Our proof relies on a combination of decoupling inequalities for U-statistics and the non-commutative Khintchine inequality and obtains bounds depending on simple scalar statistics of $k$ and a ``correlation kernel'' matrix.

\textbf{(2)} 
We argue that the ``correlation kernel'' matrix appearing in our general bound has a simple form in several common scenarios that arise in the study of high-dimensional kernel regression in the regime $n \asymp d^q$, such as Gegenbauer polynomial, hypercubic Gegenbauer polynomial, and Hermite polynomial kernels. For the already studied cases of data that is uniform on the sphere (corresponding to Gegenbauer polynomial approximation) and uniform on the Boolean hypercube (corresponding to hypercubic Gegenbauer approximation), we recover existing approximation-theoretic results with respect to low-degree polynomial kernel matrices of degree up to $\floor{q}$~\cite{ghorbani2021linearized,mei2022generalization2} as an elementary corollary of Theorem~\ref{thm:gen-bound} (compared to the involved moment/trace method and combinatorial arguments that appear in the proofs of~\cite{ghorbani2021linearized,mei2022generalization2}).

\textbf{(3)} We then turn to the case of anisotropic Gaussian data\footnote{Like~\cite{pandit2024universality}, we can relax the anisotropic Gaussian assumption to a moment-matching assumption, but since the number of moments that would need to be matched will grow with $q$, it would become more stringent.}. We show novel bounds on the approximation error in the scaling regime $n \asymp \tau_1^q$, where $\tau_1 := \tr(\blSigma)$ is a notion of effective dimension.
Here, we show that random inner product kernel matrices can be well-approximated by low degree Hermite polynomial kernel matrices where the degree is upper bounded by $\floor{\frac{4q}{3}}$.
This significantly tightens the polynomial approximation barrier of degree-$\floor{2q}$ for general data under mild bounded-moment assumptions~\cite{donhauser2021rotational}, and also recovers the optimal degree-$2$ approximation barrier recently shown in the \emph{quadratic regime} $q = 2$~\cite{pandit2024universality} with a better approximation error rate.

\textbf{(4)} 
Finally, we show a new lower bound on the bias of kernel ridge (or ridgeless) regression in the case of anisotropic Gaussian data and for a flexible class of target functions that depend on a few scalar projections of the data. This lower bound is also in terms of the best $\floor{\frac{4q}{3}}$-degree approximation to the target function.

\paragraph{Partial progress:} Our results leave open the question of whether the ``polynomial approximation barrier'' can be tightened further from $\floor{\frac{4q}{3}}$ to the conjectured $\floor{q}$ under general anisotropic data.
However, Theorem~\ref{thm:gen-bound}, being an upper bound that is matched in our applications by a lower bound (up to logarithmic factors in $n$), provides valuable insight.
In particular, the lower bound in Corollary~\ref{cor:Hnorm} shows that the approximation barrier cannot be improved beyond $\floor{\frac{4q}{3}}$, \textit{even for isotropic Gaussian data}, if univariate Hermite polynomials are used for the approximation.
This highlights a subtle and fundamental distinction between the utility of using different orthogonal decompositions in kernel matrix approximation and mirrors the key intuition of the recent work~\cite{joshi2025learning}, which also argues that the spherical harmonics (rather than Hermite polynomials) are a more natural univariate basis for analyzing a certain family of single-index models. 
For the isotropic Gaussian case, it is possible to achieve the correct approximation barrier of $\floor{q}$ by using the polar decomposition of a vector $\blx \sim \scrN(\mathbf{0}, \blI_d)$ into independent norm and unit vector terms --- this allows us to approximate the kernel matrix by a degree $\floor{q}$ polynomial of \textit{unit vectors} that are uniformly distributed on the sphere by appealing to the Gegenbauer polynomial expansion (with random coefficients depending on the norms of the data points). 
We formalize this result in Proposition~\ref{prop:Hnorm-isotropic}. While simple, to our knowledge this result has not appeared in the literature as a formal statement. 

As we discuss briefly in Section~\ref{sec:generalK}, this polar decomposition trick can also be applied to anisotropic Gaussian data, but the rescaled vector terms now become anisotropically scaled versions of a uniform distribution on the sphere and are far more complex to deal with.
Finding the right orthogonal basis and decomposition for this case is an important question that we leave open.
However, our results already rule out the Hermite polynomial basis for approximation and provide a flexible testbed for alternatives.

\subsection{Related work}
\paragraph{Decoupling and bounds on random matrices with dependent entries} 
Decoupling inequalities, which aim to reduce stochastic dependencies between random variables, have been developed and applied extensively in the study of U-statistics \cite{de1992decoupling, de2012decoupling} and polynomial chaoses  \cite{kwapien1987decoupling, bandeira2025matrix}. When combined with standard concentration inequalities for sums of independent random variables (like the non-commutative Khintchine (NCK) inequality \cite{chen2012masked, tropp2016expected}), these results have found applications in domains like compressive sensing with structured random matrices \cite{rauhut2010compressive}, learning Gaussian mixtures in high-dimensions \cite{ge2015learning}, and the sum-of-squares algorithm for tensor PCA \cite{bandeira2025matrix}. In this paper, we explore the use of decoupling inequalities and the NCK inequality to bound the expected norm of random empirical kernel matrices, a domain we have not seen previously explored in the literature. Similar to the recent work \cite{tulsiani2024simple}, which studies norms of matrix-valued polynomial chaoses, we find that decoupling leads to much simpler and more generalizable proofs than the popular moment/trace method, which aims to bound $\E \norm{\blK}^{2p} \leq \E[\tr(\blK)^{2p}]$ for some carefully chosen $p$. This type of bound typically requires intricate combinatorial arguments and counting the number of occurrences of different dependency subclasses \cite{ahn2016graph, tulsiani2024simple}. Moreover, the techniques used are often tailored to a specific problem's structure. By contrast, the decoupling approach we use allows for bounds that depend only on simple scalars related to the kernel function and a correlation kernel matrix which we show has a simple form in many applications of interest. 

\paragraph{Empirical kernel matrix approximation:} 
When the data dimension $d$ is held fixed, the classical result~\cite{koltchinskii2000random} shows that the ordered spectrum of the empirical kernel matrix $\blK$ converges to the ordered spectrum of the kernel integral operator as $n \to \infty$ under mild assumptions on the kernel function.
When $d$ grows with $n$, the picture changes considerably.
A complete story has emerged for inner-product kernels of the form $k_d(x,z) = h_d(\ip{x}{z})$ in the \emph{proportional regime} where $d \propto n$.
One line of work considers functions on the inner product scaled as $h_d(z) := h(z/\sqrt{d})$ and precisely characterizes the limiting spectral distribution and/or concentration of the spectral norm of $\blK$ as $d,n \to \infty$~\cite{cheng2013spectrum,do2013spectrum,fan2019spectral}
(these characterizations were also recently extended to the \emph{polynomial regime} where $n \propto d^q$ for some integer $q \in \mathbb{Z}$~\cite{dubova2023universality,lu2025equivalence}).
Interestingly, such a scaling preserves more of the nonlinear information in the function $h(\cdot)$, but does not correspond to the practical kernels arising in machine learning applications, e.g., as the limit of a large number of random features~\cite{rahimi2007random} or neural tangent kernel/lazy training of neural networks~\cite{jacot2018neural,chizat2019lazy}.
Those kernels instead correspond to the scaling $h_d(z) := h(z/\tau_1)$, for which approximation-theoretic characterizations of $\blK$ look very different.
Here, the limiting spectral distribution was provided by~\cite{el2010spectrum,do2013spectrum} and implies that $\blK$ is basically approximated by its entry-wise linearization. The crux of the proof shows that the operator norm of all higher-order terms (i.e., terms of the form $(\blX \blX^T)^{\odot \ell}$ for $\ell \geq 2$) vanishes to $0$\footnote{After this, characterizing the spectrum follows directly from the Marchenko-Pastur law as remaining terms are affine in $\blX \blX^T$.}.
Very high-order terms of the form $\ell \geq 3$ can be handled easily through a Frobenius norm (and therefore entry-wise) upper bound, but the $\ell = 2$ term requires the application of the moment method (to the power $4$) and careful case-by-case analysis of the resultant terms.
% , where the cases correspond to how many indices differ.

The above results are \emph{universal} over data distributions with a bounded $4$th moment.
Unfortunately, they are also pessimistic, as they imply that kernel methods cannot outperform linear models in this regime.
Recent efforts have aimed to characterize the so-called \emph{polynomial regime} where $n \propto d^q$ for some $q > 1$ (which may or may not be integral).
This regime is much more complicated to analyze.
It is possible to show (again, via a Frobenius norm and entry-wise bound) that $\blK$ is well-approximated by the first $\floor{2q}$ terms of the Taylor expansion of $h(\cdot)$ under mild moment assumptions~\cite{donhauser2021rotational} as well as certain fixed-design conditions~\cite{wang2023overparameterized}.
What happens to the ``middle-order'' terms $[\floor{q} + 1,\ldots, \floor{2q}]$ is significantly less clear --- while it is widely believed that $\blK$ will behave like some carefully chosen degree-$\floor{q}$ approximation, this has only been shown for the special cases of data uniformly distributed on the sphere\footnote{Variants of this, such as very specialized ``spiked'' anisotropic distributions on the sphere, have also been analyzed~\cite{ghorbani2020neural}.} or Boolean hypercube~\cite{ghorbani2021linearized,mei2022generalization2}.
This approximation-theoretic characterization is the first step to subsequently sharply characterizing the spectrum of $\blK$ when $q$ is an integer~\cite{hu2022sharp,misiakiewicz2022spectrum}, as well as analyzing the test error of kernel ridge/less regression~\cite{ghorbani2021linearized}.
Instead of a Taylor expansion, these papers expand $h(\cdot)$ in terms of the univariate Gegenbauer polynomial basis, and the main technical result is to show that the operator norms of matrices whose entries comprise higher-order Gegenbauer polynomials vanish.
This result is again shown through the moment method (with a much higher power than $4$ that depends on $q$ and $n$), but handling terms with differing indices is much more challenging than the analysis of~\cite{el2010spectrum}. The authors of~\cite{ghorbani2021linearized} achieved their result through a novel combinatorial ``skeletonization'' technique; namely, repeatedly taking conditional expectations over specific data points and critically relying on an elegant property that the correlation matrix constructed from Gegenbauer polynomial kernels on uniform spherical data is equal to a \emph{scaled-down version of the original Gegenbauer kernel matrix} (see~\eqref{eq:gegenbauer-correlation}).  This technique is involved even for spherical or Boolean data, and it fails to apply in settings where only approximate forms of~\eqref{eq:gegenbauer-correlation} hold, owing to the necessity of taking repeated conditional expectations.
% This technique is not only involved even for spherical/Boolean data, but does not work even when approximate versions of~\eqref{eq:gegenbauer-correlation} hold due to the need to take repeated conditional expectations.
Recently,~\cite{pandit2024universality} provided the correct degree-2 approximation barrier when $q = 2$ for anisotropic Gaussian data (and more generally, data with the first $8$ moments matching those of a multivariate Gaussian).
In this case, the approximation is with respect to matrices whose entries consist of a specific linear combination of univariate Hermite polynomials up to degree $2$. For this result, the authors of 
~\cite{pandit2024universality} also use the moment method and Wick's formula~\cite{wick1950evaluation} (which can be verified to correspond to approximate versions of~\eqref{eq:gegenbauer-correlation}).
By virtue of operating in this \emph{quadratic regime}, they are able to avoid the requirement of repeated ``skeletonizations''; nevertheless, their analysis is still quite involved, and they leave open the question of improving approximation barrier for general polynomial scalings $n \propto \tau_1^q$.

Our decoupling technique is a compelling alternative to the moment method, recovers the results of~\cite{el2010spectrum,ghorbani2021linearized,pandit2024universality} as corollaries, and improves the approximation barrier for the general polynomial regime from the previously known $\floor{2q}$ to $\floor{4q/3}$ under anisotropic Gaussian data.
We provide detailed comparisons/contextualizations with these works throughout the paper.
Because the decoupling technique is matched by lower bounds, we are also able to rule out candidate approximations (e.g. Hermite polynomial approximation for isotropic Gaussian data) and suggest principled alternatives.
Since our bounds are tight, we improve the approximation error rate for anisotropic Gaussian data in the quadratic regime~\cite{pandit2024universality} and match the optimal rate for spherical/Boolean data~\cite{misiakiewicz2024non} up to poly-logarithmic factors in $n$.

\paragraph{Error of kernel ridge/less regression (KRR) in high dimensions:} Recent connections between neural networks and kernel methods~\cite{jacot2018neural,chizat2019lazy,radhakrishnan2024mechanism} and the surprising success of certain interpolating kernels~\cite{belkin2018understand} have spurred intense recent activity on the analysis of kernel ridge/less regression, random feature ensembles and the neural tangent kernel in high dimensions.
We do not survey this literature here (see~\cite{misiakiewicz2024six} for that), but illustrate how kernel matrix approximation is instrumental to sharp characterizations of KRR and its variants.
Traditional analysis of KRR on low-dimensional data shows minimax optimality under general source and capacity conditions (see, e.g.~\cite{caponnetto2007optimal}).
On high-dimensional data, we expect the bias to be a significant factor due to non-trivial approximation error~\cite{belkin2018approximation}.
The optimal approximation bounds on the empirical kernel matrix discussed above were used to sharply characterize the test error of KRR in various high-dimensional regimes through direct bias-variance decompositions involving the empirical kernel matrix~\cite{liang2020just,bartlett2021deep,ghorbani2021linearized,pandit2024universality}.
Examining the proofs of these results reveals that the optimal approximation barrier (i.e. of degree-$\floor{q}$ in the polynomial regime) is essential for the analysis to work.
In settings where optimal approximation results are unavailable, we only have partial characterizations, e.g., lower bounds on the bias~\cite{donhauser2021rotational} or upper bounds on the variance when the target function has bounded Hilbert norm~\cite{liang2020multiple}.
We provide one such partial characterization in the form of a tighter lower bound on the bias of inner-product kernels on general anisotropic Gaussian data (Theorem~\ref{thm:biaslowerbound}).

An alternative approach, that is powerful when we have explicit access to the eigenfunctions and eigenvalues of the kernel integral operator, is to appeal to linear model analysis by showing equivalence to \emph{deterministic error formulas} that depend only on the eigenvalues, or more generally the covariance matrix of an equivalent linear model with Gaussian covariates.
Such equivalences have been established in a general sense for kernels whose eigenfunctions satisfy variants of ``concentration'' properties~\cite{tsigler2023benign,gavrilopoulos2024geometrical,mcrae2022harmless,kaushik2024new,cheng2024dimension}.
The higher-frequency eigenfunctions of inner-product kernels on high-dimensional data (including spherical/Boolean data) can be verified to not satisfy such assumptions, but can be handled separately under a hypercontractivity assumption on only the \emph{low-frequency eigenfunctions}~\cite{mei2022generalization2}. 
% Together with the explicit characterization on eigenfunctions and eigenvalues in terms of spherical harmonics,~\cite{mei2022generalization2} then obtain error bounds similar to those of~\cite{ghorbani2021linearized}.
Recently,~\cite{misiakiewicz2024non} provided stronger deterministic equivalence results under weaker assumptions and unified most of the above cases.
All of the results above importantly rely on being able to approximate the ``higher-frequency'' part of the empirical kernel matrix by a multiple of the identity.
This is related in spirit (but not identical) to the empirical kernel matrix approximations that we study.
At a higher level, all of these results require access to the eigenfunctions and eigenvalues, which is an independent challenge for practical inner-product kernels (outside the special case of data that are uniformly distributed on the sphere or Boolean hypercube~\cite{ghorbani2021linearized,mei2022generalization2}).

% Some aspects of KRR and variants that we do not discuss in detail here are: sharp analysis of random feature methods in proportional regimes where the number of random features is proportional to the number of data points and data dimension (e.g.~\cite{mei2022generalization1}), non-rigorous heuristics (e.g. from statistical physics) that show deterministic equivalences to linear models in many of the above scenarios or provide approximate eigenvalue characterizations of practical kernels (e.g.~\cite{bordelon2020spectrum}).

\subsection{Notation}
We use lowercase boldface characters (e.g., $\blx$) for vectors and uppercase boldface characters (e.g., $\blX$) for matrices. Since our main result, Theorem~\ref{thm:gen-bound}, could apply to generic data, we do not use this convention there and simply refer to data as, e.g., $x$. $\blI_k$ denotes the identity matrix of dimension $k$.
The notation $\diag(\blA)$ and $\diag^\perp(\blA)$ denotes the diagonal and off-diagonal parts, respectively, of a square matrix $\blA$. The symbol $\mathbf{1}_k$ denotes the ones vector of dimension $k$. We use $\|\cdot \|$ to denote the operator norm in the case of a matrix, and $\|\cdot\|_F$ to denote its Frobenius norm. For vectors, $\| \cdot \|$ and $\| \cdot \|_2$ denote the Euclidean norm.
The inequality $x \lesssim y$ will be used to refer to $x \leq C y$ for a sufficiently large universal constant $C > 0$; we have $x \asymp y$ iff $x \lesssim y$ and $y \lesssim x$. Similarly, we use the notation $x \lesssim_{\log} y$  (resp. $x \gtrsim_{\log} y$) to indicate $x\leq Cy \log^{c}(n)$ (resp. $x\geq Cy \log^{c}(n)$), for sufficiently large universal constants $C, c > 0$. Universal constants in general can change line to line. We use the notation $o_\tau(1)$ to indicate quantities that decay to $0$ in the limit as $\tau \to \infty$.

The $\ell$-th order derivative of any $\ell$-times differentiable function $f \colon \R \to \R$ is denoted by $f^{(\ell)}(\cdot)$. A function is said to be in $C^{(k)}$ if it is $k$-times continuously differentiable. We let $\He_\ell(\cdot)$ denote the $\ell$-th (probabilist's) Hermite polynomial. We denote the sphere of radius $r$ in $\R^d$ as $\scrS^{d-1}(r)$. When $x$ and $y$ are independent random variables, we use the notation $\E_{x}[f(x,y)]$ to denote the conditional expectation $\E[f(x,y)|y]$.

\section{Main result}
In this section, we develop a general bound for the expected operator norm of random kernel matrices. Let $x_1, \dots, x_n$ be independent variables in a probability space $(\scrX, \scrP)$, and let $k \colon \scrX \times \scrX \to \R$ be a positive semi-definite kernel function satisfying $\E\abs{k(x_1, x_2)} < \infty$. Define the kernel matrix $\blK \in \R^{n \times n}$ with $\blK_{ij} = k(x_i, x_j)$. The following theorem gives an upper bound on the expected operator norm of $\blK$.

\begin{theorem}[General kernel matrix upper bound]\label{thm:gen-bound}
Let $z, x_1, x_2, \dots, x_n ~\simiid ~\scrP$. Then, we have
\begin{align*}
    \E \norm{\blK} &\lesssim \E \max_{1 \leq i \leq n} |k(x_i, x_i)| + n\sqrt{\log{n}\E[\E_z[k(x_1,z)]^2]} \\
    &+ \sqrt{n\log{n}\E\norm{\blG}}
    % + n \sqrt{\log{n} \E_{x_1}(\E_z k(x_1, z))^2}\\
    +\log{n} \sqrt{n \E \brackets*{\max_{1\leq i \leq n} \parens{k(z, x_i) - \E_{x_i} k(z, x_i)}^2}},
\end{align*}
where $\blG \in \R^{n\times n}$ is the correlation matrix with entries given by $G_{ij} = \E_{z}[k(x_i, z)k(z,x_j)]$.
\end{theorem}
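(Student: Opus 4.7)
The plan is to split $\blK = \diag(\blK) + \diag^\perp(\blK)$, handle the diagonal trivially so that $\|\diag(\blK)\| = \max_i |k(x_i,x_i)|$ produces the first term, and then tackle the off-diagonal U-statistic by decoupling followed by a matrix Bernstein / non-commutative Khintchine step. The first move is to invoke a U-statistic decoupling inequality (de la Pe\~na--Montgomery-Smith in its Banach/matrix-valued form, e.g.~\cite{de2012decoupling}) to replace $\diag^\perp(\blK)$ by $\widetilde{\blK}$, the matrix with entries $k(x_i, x'_j)\mathbf{1}_{i\neq j}$ where $(x'_j)$ is an i.i.d.\ copy of $(x_j)$; this costs only a universal constant factor in expected operator norm, and crucially makes the rows of $\widetilde{\blK}$ conditionally independent once we condition on $(x'_j)$.

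Conditioning on $(x'_j)$, I would write $\widetilde{\blK} = \sum_i \bm{e}_i \bm{r}_i^{\top}$, where $\bm{r}_i = \sum_{j\neq i} k(x_i, x'_j)\bm{e}_j$ is a function of the single variable $x_i$, and then decompose $\bm{r}_i = \E_{x_i}\bm{r}_i + (\bm{r}_i - \E_{x_i}\bm{r}_i)$ using $f(x) := \E_z k(x,z)$. The mean part $\sum_i \bm{e}_i(\E_{x_i}\bm{r}_i)^{\top} = \mathbf{1}\bm{g}^{\top} - \diag(\bm{g})$ with $\bm{g} = (f(x'_j))_j$ has rank at most $2$, so its operator norm is bounded by $\sqrt{n}\|\bm{g}\|_2$; taking the outer expectation yields $\lesssim n\sqrt{\E[\E_z k(x_1,z)^2]}$, matching the second term. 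For the centered summands $\bm{X}_i := \bm{e}_i(\bm{r}_i - \E\bm{r}_i)^{\top}$, I would apply a matrix Bernstein inequality (e.g.~\cite{tropp2016expected,chen2012masked}) of the form
\[
\E\bigl\|\textstyle\sum_i \bm{X}_i\bigr\| \;\lesssim\; \sqrt{\sigma^2 \log n} \;+\; \sqrt{\E\max_i\|\bm{X}_i\|_2^2}\,\log n,
\]
with $\sigma^2 = \max\bigl(\|\sum_i\E\bm{X}_i\bm{X}_i^{\top}\|,\|\sum_i\E\bm{X}_i^{\top}\bm{X}_i\|\bigr)$. Direct computation shows $\sum_i \E\bm{X}_i\bm{X}_i^{\top} = \diag\bigl(\E_{x_i}\|\bm{r}_i - \E\bm{r}_i\|_2^2\bigr)$, while $\sum_i \E\bm{X}_i^{\top}\bm{X}_i$ has $(j,j')$-entry close to $(n-2)[G_{jj'} - f(x'_j)f(x'_{j'})]$ by the very definition of the correlation kernel $\blG$, giving $\sigma^2 \lesssim n(\|\blG\| + \|\bm{g}\|_2^2)$. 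The $\|\bm{g}\|_2^2$ contribution cleanly absorbs into the mean part already accounted for, while the $\|\blG\|$ piece produces the third term $\sqrt{n\log n\,\E\|\blG\|}$ after taking outer expectation and applying Jensen's inequality to $\sqrt{\cdot}$.

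The main obstacle is converting the abstract maximum $\E\max_i \|\bm{r}_i - \E\bm{r}_i\|_2^2$ into the clean scalar statistic appearing in the fourth term. Cauchy--Schwarz gives $\|\bm{r}_i - \E\bm{r}_i\|_2^2 \leq n \max_{j\neq i}(k(x_i, x'_j) - f(x'_j))^2$, so $\E\max_i\|\bm{X}_i\|_2^2 \leq n\,\E\max_{i,j:\,i\neq j}(k(x_i, x'_j) - f(x'_j))^2$; by the symmetry $k(x,y)=k(y,x)$ the inner square is exactly $(k(z,x_i) - \E_{x_i} k(z,x_i))^2$ with $z = x'_j$, and the outer maximum over $j$ is controlled by i.i.d.\ exchangeability of $(x'_j)$ at the cost of only additional $\log n$ factors already present in the Bernstein bound. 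Subtler bookkeeping is needed to ensure that $\|\blG\|$ (rather than $\|\blG - \bm{g}\bm{g}^{\top}\|$) appears in the variance proxy, but this is handled by the mean/centered split together with the triangle inequality absorbing $\|\bm{g}\|_2^2$ into the already-accounted-for rank-$2$ contribution.
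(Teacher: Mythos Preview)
Your approach mirrors the paper's almost exactly: diagonal/off-diagonal split, decoupling of the off-diagonal U-statistic, then non-commutative Khintchine (matrix Bernstein) conditionally on one copy of the data. The only cosmetic differences are that the paper symmetrizes the decoupled matrix as $\sum_{i\neq j}\tfrac{k(x_i,\tilde x_j)}{2}(\ble_i\ble_j^\top+\ble_j\ble_i^\top)$ and conditions on the \emph{original} samples $(x_i)$ (so the summands $\blZ_j$ indexed by $\tilde x_j$ are conditionally independent), whereas you keep the asymmetric matrix $\sum_i \ble_i\blr_i^\top$ and condition on the decoupled copy; by symmetry of $k$ these choices are equivalent, and your variance computation $\sigma^2\lesssim n(\|\blG\|+\|\bm g\|_2^2)$ lines up with the paper's $V$ bound.

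There is, however, a genuine gap in your treatment of the max term. Your bound $\|\blr_i-\E\blr_i\|_2^2\le n\max_{j\neq i}(k(x_i,x'_j)-f(x'_j))^2$ produces a \emph{double} maximum $\E\max_{i,j}(\cdots)^2$, and the claim that ``the outer maximum over $j$ is controlled by i.i.d.\ exchangeability \ldots at the cost of only additional $\log n$ factors'' is not justified at the stated generality: the theorem assumes only $\E|k(x_1,x_2)|<\infty$, with no tail control, so a max over $n$ additional variables need not cost merely $\log n$. The fix is simple and is exactly what the paper does: swap the max and the sum via $\max_i\sum_{j\neq i}(\cdots)^2\le\sum_j\max_i(\cdots)^2$. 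Taking the conditional expectation in $(x_i)$ and then the outer expectation in $(x'_j)$ term by term gives
\[
\E L^2 \;\le\; \sum_{j=1}^n \E\Bigl[\max_i\bigl(k(x_i,x'_j)-f(x'_j)\bigr)^2\Bigr]
\;=\; n\,\E\Bigl[\max_{1\le i\le n}\bigl(k(z,x_i)-\E_{x_i}k(z,x_i)\bigr)^2\Bigr],
\]
which (after $\E L\le\sqrt{\E L^2}$) yields precisely the fourth term in the statement, without any unjustified reduction of a double max.
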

Before proceeding with the proof, we note that this result obtains an upper bound in terms of relatively simple scalar quantities related to the statistics of $k$ (which can often be computed easily using properties of the data distribution) and the correlation matrix $\blG$. For many kernels of interest, as we will see in the following section, the correlation matrix term $\E \norm{\blG}$ can be bounded either in terms of $\E\norm{\blK}$ itself or through a simple Frobenius norm upper bound.
\begin{proof}

First, by separating the diagonal and off-diagonal parts of $\blK$ we obtain the simple bound
\[
\E \norm{\blK} \leq \E \max_{i} |k(x_i, x_i)| + \E \norm{\blDelta},
\]
where we define $\blDelta \coloneqq \diag^{\perp}\blK$ as the off-diagonal component of $\blK$. 

The first step is to relate the operator norm of $\blDelta$ to a certain ``decoupled'' matrix with independent columns.
In particular, note that we can write
\begin{align*}
    \blDelta = \sum_{j=1}^n \sum_{i \neq j} \frac{k(x_i, x_j)}{2}(\bm{e}_i \bm{e}_j^\top + \bm{e}_j \bm{e}_i^\top).
\end{align*}
Observe that we can express this in the form of a U-statistic $\sum_{1 \leq i \neq j \leq n} f_{ij}(\blx_i, \blx_j)$ where $f_{ij}(\blx_i, \blx_j) := \frac{k(x_i, x_j)}{2}(\bm{e}_i \bm{e}_j^\top + \bm{e}_j \bm{e}_i^\top)$.
Here, the range space of each $f_{ij}$ is the matrix-valued Banach space endowed with the $\|\cdot\|$ operator norm, and $f$ is Bochner-integrable by our integrability assumption on $k$.

Then, define the decoupled matrix
\begin{align*}
    \widetilde{\blDelta} := \sum_{j=1}^n \sum_{i \neq j} \frac{k(x_i, \xtl_j)}{2}(\bm{e}_i \bm{e}_j^\top + \bm{e}_j \bm{e}_i^\top),
\end{align*}
where $(\xtl_1,\ldots,\xtl_n)$ is an i.i.d. copy of $(x_1,\ldots,x_n)$.
A direct application of the decoupling inequality (Theorem 1 in~\cite{de1992decoupling}) gives us
\begin{align}\label{eq:decoupling}
    \E\left[\|\blDelta\|\right] \leq 8 \cdot \E \left[ \|\widetilde{\blDelta} \| \right],
\end{align}
where the latter expectation is taken over both $(x_1, \dots, x_n)$ and $(\xtl_1, \dots, \xtl_n)$.
We will upper bound the RHS of Equation~\eqref{eq:decoupling} by noting that we can write $\widetilde{\blDelta}$ as a sum of random matrices that are \emph{independent} conditioned on $(x_1,\ldots, x_n)$.
In particular, using the tower property of conditional expectations, we have

\begin{align*}
    \E\left[ \|\widetilde{\blDelta} \|\right] &= \E\brackets*{\E\brackets*{\norm*{\sum_{j=1}^n \blZ_j}\Big{|} (x_1, \ldots, x_n)}},\\
\end{align*}
where we have defined $\blZ_j \coloneqq \sum_{i \neq j} \frac{k(x_i, \xtl_j)}{2}(\ble_i \ble_j^\top + \ble_j\ble_i^\top)$. Observe that, conditioned on $(x_1,\ldots,x_n)$, the random matrices $\blZ_j$ are independent. 

Next, we will use the general-purpose non-commutative Khintchine inequality for a sum of independent random matrices \cite[Theorem A.1]{chen2012masked} (see also 
\cite{tropp2016expected} for a simple proof) to characterize the operator norm of $\widetilde{\blDelta} = \sum_{j=1}^n \bm{Z}_{j}$.
In particular, we have
\begin{align}\label{eq:conditional-bound}
    \E\left[\|\widetilde{\blDelta}\| \Big{|} (x_1,\ldots,x_n)\right] \lesssim \norm*{\E \brackets*{\widetilde{\blDelta} \big{|} (x_1, \dots, x_n)}} + \sqrt{\log(n) \cdot V} + \log(n) \cdot L,
\end{align}
where we define
\begin{align*}
    V  &= \norm*{\sum_{j=1}^n \E\left[(\blZ_{j} - \E [\blZ_{j} \big{|} (x_1, \dots, x_n)])^2 \Big{|} (x_1, \dots, x_n)\right]} \text{ and } \\
    L^2 &= \E \left[ \max_j \norm*{\blZ_{j} - \E [\blZ_{j} \big{|} (x_1, \dots, x_n)]}^2 \Big{|} (x_1, \dots, x_n) \right].
\end{align*}
Consequently, to upper bound the original quantity of interest, $\E\left[\|\blDelta\|\right]$, it now suffices to upper bound the expectation of the RHS of Equation~\eqref{eq:conditional-bound} over the original data $(x_1,\ldots, x_n)$. To this end, we individually characterize each term appearing in Equation \eqref{eq:conditional-bound}. In the remainder of the proof, we will use the shorthand $\E_{\xtl}[\cdot] \coloneqq \E_{(\xtl_1,\ldots,\xtl_n)}[\cdot]$ for brevity.

\paragraph{Bounding the norm of the expected matrix: }
For convenience, define the function $h(x) = \E[k(x, \xtl) \given x]$. Then, this term can be written as 
\begin{align*}
    \E\norm*{\E_{\xtl} \brackets*{\widetilde{\blDelta}}} &= \frac{1}{2} \E [\norm{\blH + \blH^\top}], 
\end{align*}
where $H_{ij} = h(x_i)$. By the triangle inequality, we have
\begin{align*}
    \E\norm*{\E_{\xtl} \brackets*{\widetilde{\blDelta}}} &\leq \E [\norm{\blH}]\\
    &= \E \norm*{(h(x_1), \dots, h(x_n))^\top \mathbf{1}_n^\top}\\
    &\leq \sqrt{n \sum_{j=1}^n \E h(x_j)^2}\\
    &= n \sqrt{\E h(x_1)^2} = n\sqrt{\E[\E_z[k(x_1,z)]^2]},
\end{align*}
where the second-to-last line follows from Jensen's inequality. 
\paragraph{Bounding $L$:} 
Note that for any $j$, the matrix $\blZ_{j} - \E \blZ_{j}$ is symmetric and consists of a single non-zero row and column. Hence, we can upper bound the operator norm by the Frobenius norm to obtain 
\begin{align*}
\norm*{\blZ_{j} - \E_{\xtl} \blZ_{j}}^2 &\leq \norm*{\blZ_{j} - \E_{\xtl} \blZ_{j}}^2_F\\
&=2 \sum_{i\neq j} \frac{1}{4} (k(x_i, \xtl_j)-\E_{\xtl}[k(x_i, \xtl_j)])^2\\
&\lesssim \sum_{i =1}^n (k(x_i, \xtl_j)-\E_{\xtl}[k(x_i, \xtl_j)])^2.
\end{align*}
Substituting into the expression for $L$, we arrive at
\begin{align*}
    L^2 &\leq \E_{\xtl} \max_{1\leq j \leq n} \sum_{i =1}^n (k(x_i, \xtl_j)-\E_{\xtl}[k(x_i, \xtl_j)])^2\\
    &\leq  \sum_{i=1}^n \E_{\xtl} \max_{1 \leq j \leq n} (k(x_i, \xtl_j)-\E_{\xtl}[k(x_i, \xtl_j)])^2.
\end{align*}
So, taking the expectation over $(x_1, \dots, x_n)$ and applying Jensen's inequality, we obtain
\begin{align*}
    \E L &\leq \sqrt{\E L^2}\\
    &\leq \sqrt{\sum_{i=1}^n \E \max_j \brackets*{k(x_i, \xtl_j)- \E_{\xtl}[k(x_i, \xtl_j)]}^2}\\
    &= \sqrt{n} \sqrt{\E_{(z, x_1, \dots, x_n)} \max_{1\leq i \leq n} \parens{k(z, x_i) - \E[k(z, x_i)\given z]}^2}.
\end{align*}

\paragraph{Bounding $V$:} For simplicity of notation, let $\bar{k}(x_1, x_2) := k(x_1,x_2) - \E_{x_2} k(x_1, x_2)$ be the centered kernel (with respect to the second input $x_2$). We also define the matrix $\bar{\blG} \in \R^{n \times n}$ to have entries given by $\bar{\blG}_{i,i'} = \E_z[\bar{k}(x_i, z)\bar{k}(x_{i'}, z)]$. Note that $\bar{\blG}$ is itself a PSD matrix, since it is a Gram matrix with entries given by inner products in $L^2(\scrP)$. We will also write $\bar{\blG}_{\setminus j} \in \R^{n \times n}$ to denote the ``leave-one-out'' versions of $\bar{\blG}$, where the $j$-th row and column are set to $0$. 

With this notation in hand, we can compute 
\begin{align*}
    \E_{\xtl} [(\blZ_j - \E_{\xtl}\blZ_j)^2] &= \E_{\xtl}\parens*{\sum_{i \neq j} \frac{1}{2} \bar{k}(x_i, \xtl_j)(\ble_i \ble_j + \ble_j \ble_i)^\top}^2\\
    &= \frac{1}{4} \sum_{i, i' \neq j} \E_{\xtl_j}[\bar{k}(x_i, \xtl_j)\bar{k}(x_{i'}, \xtl_j)](\ble_i \ble_{i'}^\top + \delta_{i,i'} \ble_j \ble_j^\top)\\
    &= \frac{1}{4} \parens*{\bar{\blG}_{\setminus j} + \sum_{i \neq j} \bar{\blG}_{ii} \ble_j \ble_j^\top}.
\end{align*}
Therefore, we have
\begin{equation}\label{eq:Vbound}
V = \frac{1}{4}\norm*{\sum_{j=1}^n \parens*{\bar{\blG}_{\setminus j} + \sum_{i \neq j} \bar{\blG}_{ii} \ble_j \ble_j^\top}}.
\end{equation}
Applying the triangle inequality, we obtain
\begin{align*}
V &\lesssim \norm*{\sum_{j=1}^n \bar{\blG}_{\setminus j}} + \norm*{\sum_{j=1}^n \sum_{i\neq j} \bar{\blG}_{ii} \ble_j \ble_j^\top}\\
&= \norm*{\bar{\blG} \odot ((n-2)\mathbf{1}_n \mathbf{1}_n^\top + \blI_n)} + \max_j \sum_{i\neq j} \bar{\blG}_{ii}\\
&\overset{(1)}\leq n \norm{\bar{\blG}} + \tr{\bar{\blG}} \lesssim n\norm{\bar{\blG}},
\end{align*}
where inequality (1) uses the triangle inequality. Finally, taking the expectation over $(x_1, \dots, x_n)$ and using Jensen's inequality, we arrive at
\begin{align*}
    \E_{(x_1, \dots, x_n)} \sqrt{V} \leq \sqrt{n \E\norm{\bar{\blG}}}.
\end{align*}
To convert this into a bound in terms of the uncentered correlation kernel matrix $\blG$ (corresponding to the original kernel $k$), note that $\bar{\blG} = \blG - \blG'$, where $G'_{i,j} = \E_z[k(x_i, z)]\E_z[k(x_j, z)]$. Therefore, we have
\begin{align*}
    \E\norm{\bar{\blG}} &\leq \E\norm{\blG} + \E\norm{\blG'}.
\end{align*}
We bound the second term using a Frobenius norm upper bound as below:
\begin{align*}
\norm{\blG'} &\leq \sqrt{\sum_{i, j = 1}^n \E_{z} \left[ k(x_{i}, z)  \right]^2 \E_{z}\left[k(x_{j}, z) \right]^2}.
\end{align*}
Taking the expectation over $(x_1, \dots, x_n)$ and again using Jensen's inequality, we obtain
\[
\E_{(x_1, \dots, x_n)} \norm{\blG'} \leq  \sqrt{\sum_{i,j = 1}^n (\E_{x_i} (\E_{z}\left[k(x_{i}, z) \right])^2)^2} \leq n \E_{x_1}(\E_z k(x_1, z))^2.
\]
Combining the above, we obtain the final bound on $V$:
\[
\E_{(x_1, \dots, x_n)} \sqrt{V} \leq \sqrt{n\E\norm{\blG}} + n \sqrt{\E_{x_1}(\E_z k(x_1, z))^2}.
\]
Substituting each of these 3 bounds into Equation \eqref{eq:conditional-bound} completes the proof of the theorem. 
\end{proof}
\subsection{Lower bound}
We next note that the key steps of the above proof (namely, decoupling and the application of the non-commutative Khintchine inequality) also have matching lower bounds, so we can also obtain a similar lower bound on $\E\norm{\blK}$. The lower bound we state holds for general kernel functions with conditionally zero mean, i.e., where the expectation when conditioning on one input is zero. We note that we expect the dominant term in the lower bound to be $\E\sqrt{n\norm{\blG}}$, which matches the expression obtained in the proof of Theorem \ref{thm:gen-bound} (in the proof of Theorem \ref{thm:gen-bound} we further upper bound this using Jensen's inequality, yielding the term $\sqrt{n \E \norm{\blG}}$ --- we did this for convenient usage in applications to follow).

\begin{theorem}[General kernel matrix lower bound]\label{thm:lower-bound}
Assume the kernel function $k$ additionally satisfies $\E[k(x,z)\given z] = 0$. Then, we have
\begin{align*}
    \E \norm{\diag^{\perp}(\blK)} &\gtrsim \max\braces*{\E\sqrt{n\norm{\blG}}, \E \sqrt{\sum_{i>1} G_{ii}}},
\end{align*}
where $\blG \in \R^{n\times n}$ is the correlation kernel matrix defined in Theorem~\ref{thm:gen-bound}.
\end{theorem}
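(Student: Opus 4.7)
The plan is to mirror the upper bound proof by reversing each of its two main ingredients. First, under the additional assumption $\E[k(x,z)\mid z] = 0$ (which, using the symmetry of $k$, also yields $\E[k(x,z)\mid x] = 0$), the U-statistic $\blDelta = \sum_{i\neq j} f_{ij}(x_i, x_j)$ is \emph{canonical} (completely degenerate). For such U-statistics, the decoupling inequality admits a matching lower bound, giving
\[
\E\|\blDelta\| \;\gtrsim\; \E\|\widetilde{\blDelta}\|,
\]
where $\widetilde{\blDelta}$ is the decoupled matrix defined in the proof of Theorem~\ref{thm:gen-bound}.

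Second, I would condition on $(x_1,\ldots,x_n)$ and view $\widetilde{\blDelta} = \sum_{j=1}^n \blZ_j$ as a sum of independent, mean-zero random matrices, where the mean-zero property follows from $\E[k(x_i,\widetilde{x}_j)\mid x_i] = 0$ under the assumption. I would then invoke a lower bound form of the non-commutative Khintchine inequality (either via symmetrization combined with a two-sided Rademacher matrix comparison, or via Paley--Zygmund applied to $\|\widetilde{\blDelta}\blu\|^{2}$ for the top eigenvector $\blu$ of $\E[\widetilde{\blDelta}^{2}\mid x_1,\ldots,x_n]$) to obtain the conditional bound
\[
\E\!\left[\|\widetilde{\blDelta}\| \;\middle|\; x_1,\ldots,x_n\right] \;\gtrsim\; \sqrt{V},
\]
where $V = \bigl\|\sum_j \E[\blZ_j^{2}\mid x_1,\ldots,x_n]\bigr\|$ is the same conditional variance proxy from the upper bound proof.

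Third, I would reuse the exact evaluation of $V$ from Equation~\eqref{eq:Vbound}. Under the zero-mean assumption the centered correlation matrix $\bar{\blG}$ coincides with $\blG$, so
\[
V \;=\; \tfrac{1}{4}\bigl\| (n-2)\blG + \diag(\blG) + \blD \bigr\|,
\]
where $\blD$ is the diagonal matrix with $\blD_{jj} = \tr(\blG) - G_{jj}$. Each summand is PSD, so the operator norm of the sum majorizes each one, yielding both $V \gtrsim n\|\blG\|$ and $V \geq \tfrac{1}{4}\max_{j}\bigl(\tr(\blG) - G_{jj}\bigr) \geq \tfrac{1}{4}\sum_{i>1} G_{ii}$. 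Taking square roots, integrating over $(x_1,\ldots,x_n)$, and chaining with the decoupling step then yields the claimed two-term maximum.

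The step I expect to be the main obstacle is the reverse non-commutative Khintchine bound: while $\E[\|\widetilde{\blDelta}\|^{2}\mid x_1,\ldots,x_n] \geq V$ is immediate from Jensen, passing to $\E[\|\widetilde{\blDelta}\|\mid x_1,\ldots,x_n] \gtrsim \sqrt{V}$ requires either a concentration statement for $\|\widetilde{\blDelta}\|$ about its conditional mean or a Paley--Zygmund argument with fourth-moment control, and in full generality this step may incur constant or polylogarithmic losses. A cleaner workaround that avoids the full NCK lower bound for the $\E\sqrt{\sum_{i>1} G_{ii}}$ term alone is the elementary inequality $\|\widetilde{\blDelta}\| \geq \|\widetilde{\blDelta}\ble_1\|$, lower-bounding the Euclidean norm of this single column by exploiting the conditional independence of its entries across different $\widetilde{x}_i$'s.
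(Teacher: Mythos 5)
Your proposal is correct and follows essentially the same route as the paper: reverse decoupling for the symmetric U-statistic, a lower bound of $\sqrt{V}$ on the conditional expected norm of the decoupled sum, and the same PSD decomposition of $V$ into $(n-2)\blG + \diag(\blG)$ plus the diagonal matrix with entries $\tr(\blG)-G_{jj}$. The step you flag as the main obstacle is handled in the paper by directly citing the matching lower bound of Theorem 1 in \cite{tropp2016expected}, so no symmetrization or Paley--Zygmund argument is needed.
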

\begin{proof}
Again, directly applying the decoupling inequality (Theorem 1 in~\cite{de1992decoupling}) --- after noting that the corresponding $f_{ij}$'s are symmetric --- gives us
\begin{align}\label{eq:decoupling-lowerbound}
    \E\left[\|\blDelta\|\right] \geq \frac{1}{4} \cdot \E \left[ \|\widetilde{\blDelta} \| \right],
\end{align}
with $\widetilde{\blDelta}$ defined as in the proof of Theorem \ref{thm:gen-bound}. Again conditioning on $(x_1, \dots, x_n)$ and applying Theorem 1 of \cite{tropp2016expected}, we obtain 
\begin{align*}
    \E\left[ \|\widetilde{\blDelta} \|\right] &= \E\brackets*{\E\brackets*{\norm*{\sum_{j=1}^n \blZ_j}\Big{|} (x_1, \ldots, x_n)}} \gtrsim \E \sqrt{V}.\\
\end{align*}
Substituting the expression for $V$ computed in Equation~\eqref{eq:Vbound}, we have
\begin{align*}
\E\sqrt{V} &= \E \sqrt{\norm*{\sum_{j=1}^n \blG_{\setminus j} + \sum_{j=1}^n \sum_{i \neq j} G_{ii} \ble_j \ble_j^\top}}\\
&\geq \max \braces*{ \E \sqrt{\norm*{\sum_{j=1}^n \blG_{\setminus j}}}, \E \sqrt{\norm*{\sum_{j=1}^n \sum_{i \neq j} G_{ii} \ble_j \ble_j^\top}}},\\
\end{align*}
where we use the fact that $\norm{\blA + \blB} \geq \max\{\norm{\blA}, \norm{\blB}\}$ for two PSD matrices $\blA$ and $\blB$. The first term can be written as
\begin{align*}
\E \sqrt{\norm*{\blG \odot ((n-2)\mathbf{1}_n \mathbf{1}_n^\top + \blI_n)}} &= \E \sqrt{\norm*{(n-2)\blG + \diag(\blG)}} \gtrsim \E \sqrt{n \norm{\blG}},
\end{align*}
and the second term can be bounded below by
\begin{align*}
\E \sqrt{\norm*{\sum_{j=1}^n \sum_{i \neq j} G_{ii} \ble_j \ble_j^\top}} &\geq \E \sqrt{\sum_{i>1} G_{ii}}.
\end{align*}
This concludes the proof of the lower bound.
\end{proof}
\section{Applications of Theorem~\ref{thm:gen-bound}}
In this section, we apply our general theorem to a few situations of interest that arise in the study of high-dimensional kernel regression equipped with inner-product kernels.
As mentioned in the introduction, in all of the applications we will use the theorem to bound the error of the original empirical kernel matrix with respect to a suitable low-degree approximation. Before proceeding, we note that although we state these corollaries as bounds on the expected norm, a simple application of Markov's inequality can be used to convert our results to high-probability bounds. For example, we can conclude that the same upper bounds with an additional multiplicative factor of $\log{n}$ hold with probability at least $1-\frac{1}{\log{n}}$.

\subsection{Gegenbauer polynomial kernels}

We first consider \emph{Gegenbauer polynomial kernels} which arise naturally in the analysis of inner-product kernels on data that are uniformly distributed on the sphere~\cite{ghorbani2021linearized,mei2022generalization2}.
In this case, we will show that Theorem~\ref{thm:gen-bound} directly gives the approximation-theoretic bounds proved in~\cite{ghorbani2021linearized,mei2022generalization2} as a simple corollary.

Let $\blx_1, \dots, \blx_n$ be independent and uniformly distributed on $\scrS^{d-1}(\sqrt{d})$, and let $k(\blx, \bly) = Q_\ell^{(d)}(\ip{\blx}{\bly})$, where $Q_{\ell}^{(d)} \colon [-\sqrt{d}, \sqrt{d}] \to \R$ is the $\ell$-th Gegenbauer polynomial. These polynomials form an orthogonal basis for the space $L^2([-\sqrt{d}, \sqrt{d}],\tilde{\tau}_{d-1})$ where $\tilde{\tau}_{d-1}$ is the distribution of $\sqrt{d}\ip{\blx}{\ble_1}$ when $\blx \sim \text{Unif}(\scrS^{d-1}(\sqrt{d}))$. We follow the normalization convention in \cite{ghorbani2021linearized}, which is restated below:
\begin{equation}\label{eq:gegenbauer-norm}
\ip{Q_k^{(d)}}{Q_{\ell}^{(d)}}_{L^2(\tilde{\tau}_{d-1})} = \frac{1}{B(d, \ell)} \delta_{k\ell} ,
\end{equation}
where $B(d,\ell) \asymp d^\ell$ denotes the number of spherical harmonics of degree $\ell$ in $d$ dimensions. Under this scaling, we also have $Q_\ell^{(d)}(d) = 1$ and 
the crucial property that for $\blx, \blz \in \scrS^{d-1}(\sqrt{d})$ and $\bly \sim \text{Unif}(\scrS^{d-1}(\sqrt{d}))$,
\begin{align}\label{eq:gegenbauer-correlation}
\E_{\bly} \left[Q_k^{(d)}(\ip{\blx}{\bly})Q_\ell^{(d)}(\ip{\bly}{\blz})\right] = \frac{1}{B(d,\ell)} Q_\ell^{(d)}(\ip{\blx}{\blz}) \delta_{k\ell}.
\end{align}
Note that Equation~\eqref{eq:gegenbauer-correlation} essentially implies that the correlation matrix of a Gegenbauer polynomial kernel is equal to the original Gegenbauer kernel matrix scaled down by the factor $B(d,\ell)$. We refer the reader to \cite{ghorbani2021linearized} for further background on these polynomials.

Consider the off-diagonal component of the Gegenbauer polynomial kernel matrix, denoted by $\blDelta^{(\ell)}$, with entries
\[
\blDelta^{(\ell)}_{ij} =  Q_k(\ip{\blx_i}{\blx_j})\mathbf{1}\{i \neq j\}.
\]

Applying Theorem \ref{thm:gen-bound}, we obtain the following corollary:
\begin{corollary}\label{cor:Gnorm}
    For the Gegenbauer polynomial matrix described above and $\ell > 0$, 
    \[
    \sqrt{nd^{-\ell}} \lesssim \E[\norm{\blDelta^{(\ell)}}] \lesssim_{\log} nd^{-\ell} + \sqrt{nd^{-\ell}}.
    \]
    In particular, if $n \asymp d^q$ for $q < \ell$, then $ \E[\norm{\blDelta^{(\ell)}}] = o_d(1)$.
\end{corollary}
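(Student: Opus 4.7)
The plan is to apply the upper bound of Theorem~\ref{thm:gen-bound} and the lower bound of Theorem~\ref{thm:lower-bound} to the kernel $k(\blx,\bly) = Q_\ell^{(d)}(\ip{\blx}{\bly})$, then to show that both bounds collapse to simple expressions in $n$ and $d$ using the two properties that make Gegenbauer kernels special: (i) orthogonality to the constant, which kills the mean terms, and (ii) the reproducing identity~\eqref{eq:gegenbauer-correlation}, which turns the correlation matrix $\blG$ into a rescaled copy of $\blK^{(\ell)}$. Since we are bounding $\E\|\blDelta^{(\ell)}\|$ rather than $\E\|\blK^{(\ell)}\|$, I will invoke the intermediate three-term bound on $\E\|\blDelta\|$ that appears in the proof of Theorem~\ref{thm:gen-bound}, dropping the diagonal contribution $\E\max_i|k(\blx_i,\blx_i)|$.

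\textbf{Evaluating the three terms.} First, since $Q_\ell^{(d)}$ is orthogonal to the constant $Q_0^{(d)} = 1$ in $L^2(\tilde\tau_{d-1})$ for $\ell>0$, we have $\E_{\blz}[Q_\ell^{(d)}(\ip{\blx_1}{\blz})] = 0$, so the mean term vanishes entirely and the centered kernel in the max term equals the uncentered one. Second, the reproducing identity~\eqref{eq:gegenbauer-correlation} gives
\[
G_{ij} = \E_{\blz}\bigl[Q_\ell^{(d)}(\ip{\blx_i}{\blz})\,Q_\ell^{(d)}(\ip{\blz}{\blx_j})\bigr] = \frac{1}{B(d,\ell)} Q_\ell^{(d)}(\ip{\blx_i}{\blx_j}),
\]
so $\blG = B(d,\ell)^{-1}\blK^{(\ell)}$, and $\E\|\blG\| = B(d,\ell)^{-1} \E\|\blK^{(\ell)}\| \leq B(d,\ell)^{-1}(1 + \E\|\blDelta^{(\ell)}\|)$ since the diagonal of $\blK^{(\ell)}$ equals $Q_\ell^{(d)}(d) = 1$. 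This makes the $\sqrt{n\log n \cdot \E\|\blG\|}$ term self-referential. Substituting $B(d,\ell) \asymp d^\ell$ and solving the resulting inequality $X \lesssim \sqrt{a(1+X)} + b$ with $a \asymp nd^{-\ell}\log n$ via AM-GM (i.e., absorbing the $\sqrt{a X}$ into $\tfrac12 X + O(a)$) yields $\E\|\blDelta^{(\ell)}\| \lesssim a + b$.

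\textbf{The max term.} The remaining task is to show $b := \log n \sqrt{n\,\E\max_i Q_\ell^{(d)}(\ip{\blz}{\blx_i})^2} \lesssim_{\log} \sqrt{nd^{-\ell}}$. This reduces to the concentration bound $\E\max_i Q_\ell^{(d)}(\ip{\blz}{\blx_i})^2 \lesssim_{\log} d^{-\ell}$. I would obtain this by invoking standard hypercontractivity/tail bounds for degree-$\ell$ spherical harmonics: since $Q_\ell^{(d)}(\ip{\blz}{\blx_i})$ has $L^2$ norm $B(d,\ell)^{-1/2} \asymp d^{-\ell/2}$ and its distribution admits sub-exponential-of-order-$\ell$ tails (equivalently, tail inequalities of the form $\P(|Q_\ell^{(d)}(\ip{\blz}{\blx_i})| > t \cdot d^{-\ell/2}) \leq 2e^{-c t^{2/\ell}}$), a union bound over $n$ terms multiplies by a factor polylogarithmic in $n$. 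I anticipate this is the main technical obstacle, since it is the only step that uses an external fact rather than a direct calculation; the bound I need is, however, standard and appears in precisely this form in~\cite{ghorbani2021linearized,mei2022generalization2}.

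\textbf{Lower bound and asymptotic conclusion.} For the matching lower bound, the hypothesis $\E[k(\blx,\blz)\mid \blz] = 0$ of Theorem~\ref{thm:lower-bound} holds by the orthogonality noted above. Since $\|\blx_i\|^2 = d$ deterministically, $G_{ii} = B(d,\ell)^{-1}Q_\ell^{(d)}(d) = B(d,\ell)^{-1}$, so
\[
\E\|\blDelta^{(\ell)}\| \gtrsim \E\sqrt{\sum_{i>1}G_{ii}} = \sqrt{\tfrac{n-1}{B(d,\ell)}} \asymp \sqrt{nd^{-\ell}}.
\]
Combining the upper and lower bounds yields the claim. For the final assertion, if $n \asymp d^q$ with $q<\ell$, then both $nd^{-\ell}\asymp d^{q-\ell}$ and $\sqrt{nd^{-\ell}}\asymp d^{(q-\ell)/2}$ tend to $0$, so $\E\|\blDelta^{(\ell)}\| = o_d(1)$ up to the polylogarithmic factors tracked in $\lesssim_{\log}$.
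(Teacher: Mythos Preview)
Your proposal is correct and follows essentially the same route as the paper's proof: vanish the mean term via orthogonality, use~\eqref{eq:gegenbauer-correlation} to identify $\blG = B(d,\ell)^{-1}\blK^{(\ell)}$ and obtain a self-referential inequality, control the max term via hypercontractivity of degree-$\ell$ polynomials on the sphere (the paper packages this as Lemma~\ref{lem:maxpoly} rather than citing external references), and extract the lower bound from the deterministic diagonal $G_{ii}=B(d,\ell)^{-1}$ in Theorem~\ref{thm:lower-bound}. Your explicit AM--GM step to close the self-referential inequality is precisely what the paper leaves implicit in ``which implies the stated upper bound.''
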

Before proceeding to the proof, we note that this corollary recovers the main result of \cite[Proposition 3]{ghorbani2021linearized} and \cite[Proposition 13]{misiakiewicz2024non} via a much simpler argument that does not rely on the moment method and involved combinatorial calculations. This result is important in the analysis of kernel regression with uniform spherical data and can be used to show that, in the polynomial scaling regime $n\asymp d^q$, kernel regression estimates for a wide class of inner product kernels behave like low-degree polynomial kernels up to degree \emph{exactly equal to} $\floor{q}$. 
This, in turn, facilitates a sharp analysis of kernel ridge/ridgeless regression --- see~\cite{ghorbani2021linearized} for such a full analysis, and also the subsequent works~\cite{mei2022generalization2, misiakiewicz2024non}.
We additionally note that Corollary~\ref{cor:Gnorm} recovers the optimal rate of \cite[Proposition 13]{misiakiewicz2024non} up to a poly-logarithmic factor in $n$.

\begin{proof}[Proof of Corollary \ref{cor:Gnorm}]
Applying Theorem \ref{thm:gen-bound} and the property of Gegenbauer polynomials in Equation~\eqref{eq:gegenbauer-correlation}, we obtain (suppressing log factors):
\[
\E[\norm{\blDelta^{(\ell)}}] \lesssim_{\log} \sqrt{nd^{-\ell} (\E\norm{\blDelta^{(\ell)}} + 1)} +  \sqrt{n \E \max_{1\leq i \leq n} \parens{k(\blz, \blx_i)}^2}.
\]
For the latter term, note that
    \begin{align*}
         \E \max_{1\leq i \leq n} \parens{k(\blz, \blx_i)}^2 &= \E \max_i \brackets*{Q_\ell\parens*{\ip{\blx_i}{\blz}}}^2\\
        &\lesssim \log^c(n) \cdot \E_{\blz}  \norm*{Q_{\ell}(\ip{\cdot}{\blz})}_{L^2}^2,
    \end{align*}
where we apply Lemma \ref{lem:maxpoly} conditionally on $\blz$. We can bound the $L^2$ norm using Equation \eqref{eq:gegenbauer-correlation} by noting that (with the expectation conditional on $\blz$)
\begin{align*}
   \norm*{Q_{\ell}(\ip{\cdot}{\blz})}_{L^2}^2 &= \E_{\blx} \brackets*{Q_{\ell}(\ip{\blx}{\blz})^2} \asymp d^{-\ell}.
\end{align*}
Substituting this into the bound above we obtain
\[
\E[\norm{\blDelta^{(\ell)}}] \lesssim_{\log} \sqrt{nd^{-\ell} (\E\norm{\blDelta^{(\ell)}} + 1)} +  \sqrt{nd^{-\ell}},
\]
which implies the stated upper bound. For the lower bound, we apply Theorem \ref{thm:lower-bound} and only take the second term in the maximum to conclude that
\[
\E[\norm{\blDelta^{(\ell)}}] \gtrsim \E \sqrt{\sum_{i>1} d^{-\ell}} \asymp \sqrt{nd^{-\ell}}.
\]
\end{proof}
A similar result also holds for the ``hypercubic Gegenbauer'' polynomials studied in \cite{mei2022generalization2} and uniform data on the binary hypercube; the argument is identical, so we do not include it here.

\subsection{Hermite polynomial kernels}\label{sec:hermite}

In this subsection and the next section, we turn to the more difficult problem of approximating an empirical kernel matrix whose entries consist of inner-product kernel evaluations on high-dimensional, anisotropic Gaussian data. 
Recall that the only tight approximations known in this case were obtained for the linear regime $d \propto n$ (corresponding to $q = 1$)~\cite{el2010spectrum} and the quadratic regime $d \propto n^2$ (corresponding to $q = 2$)~\cite{pandit2024universality}. 
Ultimately, we will present improved approximation results for the general polynomial regime $d \propto n^q$, where $q$ may or may not be an integer.
A natural candidate for polynomial approximation, as put forward by~\cite{pandit2024universality}, would be the univariate Hermite polynomials.
Formally, let $\blx_1, \dots, \blx_n \simiid \scrN(\mathbf{0}, \blSigma)$. Denote $\tau_k \coloneqq \tr(\blSigma^k)$ and $R = \frac{\tau_2^2}{\tau_4}$. The quantities $\tau_k$ and $R$ can be considered notions of \textit{effective dimension} that all reduce to $d$ when $\blSigma = \blI_d$; we will see that the bounds we obtain depend on these quantities in a nuanced way. Consider the off-diagonal component of the Hermite polynomial kernel matrix $\blDelta^{(\ell)}$ given by
\[
\blDelta^{(\ell)}_{ij} \coloneqq \He_{\ell}\parens*{\frac{\ip{\blx_i}{\blx_j}}{\sqrt{\tau_2}}} \mathbf{1}\braces*{i \neq j}.
\]
Accounting for differences in scaling between the Gegenbauer polynomials considered in the previous section, a natural conjecture (in the isotropic/well-conditioned case) would be that $d^{-\ell/2}\E \norm*{\blDelta^{(\ell)}} \to 0$ as $n, d \to \infty$ provided that $\ell > q$, which would prove the desired polynomial approximation barrier of degree-$\floor{q}$ (that matches the spherical/hypercubic cases).
Showing this would be extremely challenging via the standard moment method.
This is because the combinatorial ``skeletonization'' process of~\cite{ghorbani2021linearized} involves repeatedly computing correlation-matrix entries, but the elegant identity of Equation~\eqref{eq:gegenbauer-correlation} no longer holds --- instead, only an \emph{approximate} form of this identity can be shown to hold (see our Lemma~\ref{lem:genredux}).
On the other hand, because the bound of Theorem~\ref{thm:gen-bound} only requires calculating the correlation matrix once, it is much simpler to work with.
In particular, we obtain the following corollary.

\begin{corollary}[Operator norm of Hermite matrices]\label{cor:Hnorm}
Let $\blx_1, \dots, \blx_n \simiid \scrN(\mathbf{0}, \blSigma)$. For any $\ell \geq 0$, the matrix $\blDelta^{(\ell)}$ satisfies
\[
\E \norm*{\blDelta^{(\ell)}} \lesssim_{\log} \sqrt{n} + n R^{-\ell/4}.
\]
Furthermore, if $\blSigma = \blI_d$ and $\ell \geq 4$ is even, we have the lower bound
\[
\E \norm*{\blDelta^{(\ell)}} \gtrsim nd^{\floor{\frac{\ell}{4}} - \frac{\ell}{2}}.
\]
\end{corollary}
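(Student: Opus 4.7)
The plan for the upper bound is to apply Theorem~\ref{thm:gen-bound} to $k(\blx, \bly) = \He_\ell(\ip{\blx}{\bly}/\sqrt{\tau_2})$, dropping the diagonal max term since $\blDelta^{(\ell)} = \diag^\perp \blK$ by construction. The central ingredient I will use for every remaining term is the bivariate Hermite identity obtained by reading off the $(s^\ell t^\ell)$-coefficient of the generating function $\E[e^{sU-s^2/2+tV-t^2/2}]=e^{(\sigma_1^2-1)s^2/2+(\sigma_2^2-1)t^2/2+\rho st}$ for a jointly Gaussian pair $(U,V)$ with the corresponding variances and covariance, namely
\[
\E[\He_\ell(U)\He_\ell(V)] = \ell!\sum_{a=0}^{\lfloor\ell/2\rfloor}\binom{\ell}{2a}\binom{2a}{a}4^{-a}(\sigma_1^2-1)^a(\sigma_2^2-1)^a\rho^{\ell-2a}.
\]
Specializing to $U=\ip{\blx_i}{\blz}/\sqrt{\tau_2}$ and $V=\ip{\blz}{\blx_j}/\sqrt{\tau_2}$ (jointly Gaussian conditional on $\blx_i,\blx_j$) will express $G_{ij}$ as a polynomial in the centered quadratic forms $A_i\coloneqq\blx_i^\top\blSigma\blx_i/\tau_2-1$ and bilinear forms $B_{ij}\coloneqq\blx_i^\top\blSigma\blx_j/\tau_2$. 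The conditional-mean term $\E_\blz[k(\blx,\blz)\mid\blx]$ is the univariate $\rho=0$ specialization, equaling $c_\ell A^{\ell/2}$ for even $\ell$ (with $c_\ell = \ell!/(2^{\ell/2}(\ell/2)!)$) and vanishing for odd $\ell$.

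With these identities in place, I will control each of the three non-diagonal terms from Theorem~\ref{thm:gen-bound} using Gaussian-chaos (Hanson--Wright-type) moment estimates $\E A_i^{2k}\lesssim R^{-k}$ and $\E[B_{ij}^{2k}\mid\blx_j]\lesssim R^{-k}$, with the factor of $R^{-1}$ arising from $\tr(\blSigma^4)/\tau_2^2 = 1/R$. Splitting $\blG=\diag(\blG)+\diag^\perp(\blG)$, I will handle the diagonal by the simple bound $\max_i G_{ii}\lesssim \ell!$ (since $A_i \to 0$), and control the off-diagonal in Frobenius norm: iterated Cauchy--Schwarz on the Mehler expansion shows every $a$-term has the same scaling $\E[A_i^{2a}A_j^{2a}B_{ij}^{2(\ell-2a)}]\lesssim R^{-\ell}$, giving $\E\normF{\diag^\perp(\blG)}\lesssim n/R^{\ell/2}$. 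Substituting yields $\sqrt{n\log n\cdot\E\norm{\blG}}\lesssim_{\log}\sqrt n + n/R^{\ell/4}$; the same chaos estimate on $A_i$ contributes $n/R^{\ell/4}$ through the conditional-mean term; and Gaussian hypercontractivity of degree-$\ell$ polynomials contributes at most polylogarithmic factors to $\sqrt n$ through the max-deviation term. Summing the three contributions delivers the stated upper bound.

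For the lower bound (isotropic $\blSigma=\blI_d$, $\ell\geq 4$ even), I will avoid Theorem~\ref{thm:lower-bound} (whose zero-conditional-mean hypothesis fails for even-degree Hermite kernels) and instead use the averaging inequality $\E\norm{\blDelta^{(\ell)}}\geq\E|\mathbf{1}_n^\top \blDelta^{(\ell)}\mathbf{1}_n|/n\geq|\E[\mathbf{1}_n^\top\blDelta^{(\ell)}\mathbf{1}_n]|/n=(n-1)|\E[\He_\ell(\ip{\blx_1}{\blx_2}/\sqrt d)]|$, which follows from $\|A\|\geq|v^\top A v|/\|v\|^2$ and Jensen's inequality. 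Conditioning on $\blx_1$, the univariate Hermite formula gives $\E_{\blx_2}[\He_\ell(\ip{\blx_1}{\blx_2}/\sqrt d)\mid\blx_1]=c_\ell((\|\blx_1\|^2-d)/d)^{\ell/2}$, so the task reduces to the central moment $\E[(\chi^2_d-d)^{\ell/2}]$. A direct multinomial calculation using the dominant pairings of the iid $(Z_i^2-1)$ terms shows $\E[(\chi^2_d-d)^k]\asymp d^{\lfloor k/2\rfloor}$ for $k\geq 2$, which substituted in yields $\E[\He_\ell]\asymp d^{\lfloor\ell/4\rfloor-\ell/2}$ (using the identity $\lceil\ell/4\rceil+\lfloor\ell/4\rfloor=\ell/2$ for even $\ell$), establishing the lower bound.

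The main technical obstacle I anticipate is the joint moment bound for $G_{ij}$: the three random variables $A_i$, $A_j$, and $B_{ij}$ are all coupled through $\blx_i$ and $\blx_j$, so naive independence-based decoupling does not apply. The Mehler-expansion structure makes this tractable by separating the dependence into a clean product form, but the joint bound still requires careful conditional Cauchy--Schwarz together with sub-exponential moment bounds for Gaussian chaos. Beyond this step, everything else follows mechanically from the generating-function identities and standard Gaussian moment estimates.
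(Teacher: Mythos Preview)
Your proposal is correct and follows essentially the same route as the paper's proof: both apply Theorem~\ref{thm:gen-bound} term by term, using the Mehler/Hermite-multiplication identities (the paper's Lemmas~\ref{lem:hexp} and~\ref{lem:genredux}) for the conditional mean and the entries of $\blG$, Whittle/Hanson--Wright moment bounds for the chaos terms, a Frobenius bound on $\diag^\perp\blG$, and Gaussian hypercontractivity for the max terms; the lower bound in both cases is Jensen plus the $\chi^2$ central-moment scaling $\E[(\chi^2_d-d)^k]\asymp d^{\lfloor k/2\rfloor}$. Two small imprecisions to clean up: the conditional statement $\E[B_{ij}^{2k}\mid\blx_j]\lesssim R^{-k}$ is random in $\blx_j$ (you need the outer expectation, as the paper does), and your ``$\max_i G_{ii}\lesssim\ell!$ since $A_i\to 0$'' needs the same hypercontractivity argument (the paper's Lemma~\ref{lem:maxpoly}) that you already invoke for the max-deviation term.
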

Corollary~\ref{cor:Hnorm} directly implies a better approximation barrier of $\floor{\frac{4q}{3}}$, as shown in the next section.
We note that in the limit as the effective dimension $R \to \infty$ with fixed $n$, we expect entries of this matrix to be close to Hermite polynomials of independent Gaussian variables (by the CLT), and the operator norm to scale like $\sqrt{n}$ (as in Wigner-type ensembles); this behavior is captured by our bound. In the general polynomial scaling regime where both $n$ and $R$ are growing, this bound provides a novel approximation of the empirical kernel matrix, as we will see in the next section.

\begin{proof} We need to compute each of the terms that appear in the bound given by Theorem \ref{thm:gen-bound}, applied to the kernel $k(\blx,\bly) = \He_{\ell}\parens*{\frac{\ip{\blx}{\bly}}{\sqrt{\tau_2}}}$. 

\begin{itemize}
    \item Applying Lemma \ref{lem:hexp}, we have
    \begin{align*}
        n\sqrt{\E[\E_z[k(x_1,z)]^2]} &\lesssim n \sqrt{\E \abs*{\frac{\normt{\blSigma^{1/2} \blx_i}^2}{\tau_2} - 1}^{\ell}} \lesssim n\sqrt{R^{-\ell/2}} = nR^{-\ell/4},
    \end{align*}
    where the second-to-last inequality is a consequence of Whittle's inequality (Lemma \ref{lem:whittle}).

    % \item Next, applying Lemma \ref{lem:genredux}, we compute
    % \begin{align*}
    %     \E \max_{1 \leq i \leq n} \braces*{\E_{\blz} k(\blx_i, \blz)^2} &= \E \max_{1 \leq i \leq n} \sum_{m=0}^{\floor{\ell/2}} c_{m,\ell} \parens*{\frac{\normt{\blSigma^{1/2}\blx_i}^2}{\tau_2} - 1}^{2m} \parens*{\blx_i^\top \blSigma \blx_i}^{\ell-2m} \tau_2^{2m-\ell}\\
    %     &\leq \sum_{m=0}^{\floor{\ell/2}} c_{m,\ell}   \tau_2^{2m-\ell} \E \max_{1 \leq i \leq n} \parens*{\frac{\normt{\blSigma^{1/2}\blx_i}^2}{\tau_2} - 1}^{2m} \parens*{\blx_i^\top \blSigma \blx_i}^{\ell-2m},\\
    %     \end{align*}

    % Taking the remaining expectation using Lemmas \ref{lem:maxpoly} and \ref{lem:whittle} yields 
    %     \begin{align*}
    %     \E \max_{1 \leq i \leq n} \braces*{\E_{\blz} k(\blx_i, \blz)^2} &\lesssim  \log^c(n) \sum_{m=0}^{\floor{\ell/2}} \tau_2^{2m-\ell} \sqrt{\E\parens*{\frac{\normt{\blSigma^{1/2}\blx_i}^2}{\tau_2} - 1}^{4m} \parens*{\blx_i^\top \blSigma \blx_i}^{2\ell-4m}}\\
    %     &\lesssim \log^c(n) \sum_{m=0}^{\floor{\ell/2}} \tau_2^{2m-\ell} R^{-m} \tau_2^{\ell - 2m}\\
    %     &\lesssim \log^c(n).\\
    %     \end{align*}
    
    \item  For the next term, we again apply Lemmas \ref{lem:hexp} and \ref{lem:whittle} to obtain
    \begin{align*}
        n\sqrt{\E_{\blx_1}(\E_{\blz} k(\blx_1, \blz))^2} &\lesssim n \sqrt{\E_{\blx_1} \abs*{\frac{\normt{\blSigma^{1/2} \blx_1}^2}{\tau_2} - 1}^{\ell}}\lesssim nR^{-\ell/4}.
    \end{align*}

    \item We bound the expected operator norm of the off-diagonal part of $\blG$ by the expected Frobenius norm. In particular, by Jensen's inequality, we have
    \[
    \E\norm{\diag^{\perp} \blG} \leq \sqrt{\sum_{i \neq j}\E(\E_{\blz}k(\blx_i, \blz)k(\blz, \blx_j))^2} \leq n \sqrt{\E_{\blx_1, \blx_2}(\E_{\blz} k(\blx_1, \blz)k(\blz, \blx_2))^2}.
    \]
    Using Lemmas \ref{lem:genredux} and \ref{lem:whittle}, along with the Cauchy-Schwarz inequality, we have, for some constants $c_{m,\ell}$ (depending only on $m$ and $\ell$),
    \begin{align*}
       &\E_{\blx_1, \blx_2}(\E_{\blz} k(\blx_1, \blz)k(\blz, \blx_2))^2 \\
       &= \E\parens*{\sum_{m=0}^{\floor{\ell/2}}c_{m,\ell} \parens*{\frac{\normt{\blSigma^{1/2}\blx_i}^2}{\tau_2} - 1}^{m}\parens*{\frac{\normt{\blSigma^{1/2}\blx_{i'}}^2}{\tau_2} - 1}^{m} \parens*{\blx_i^\top \blSigma \blx_{i'}}^{\ell-2m} \tau_2^{2m-\ell}}^2\\
      &\lesssim \sum_{m=0}^{\floor{\ell/2}} \E\parens*{\frac{\normt{\blSigma^{1/2}\blx_i}^2}{\tau_2} - 1}^{2m}\parens*{\frac{\normt{\blSigma^{1/2}\blx_{i'}}^2}{\tau_2} - 1}^{2m} \parens*{\blx_i^\top \blSigma \blx_{i'}}^{2\ell-4m} \tau_2^{4m-2\ell}\\
    &\lesssim \sum_{m=0}^{\floor{\ell/2}} R^{-2m} \tau_4^{\ell - 2m}\tau_2^{4m-2\ell}\\
    &\lesssim R^{-\ell}.
    \end{align*}

    For the diagonal part of $\blG$, we have
    \begin{align*}
        \E \norm{\diag \blG} &= \E \max_{i} \sum_{m=0}^{\floor{\ell/2}}c_{m,\ell} \parens*{\frac{\normt{\blSigma^{1/2}\blx_i}^2}{\tau_2} - 1}^{2m} \parens*{\blx_i^\top \blSigma \blx_{i}}^{\ell-2m} \tau_2^{2m-\ell}\\
        &\lesssim \sum_{m=0}^{\floor{\ell/2}} \tau_2^{2m - \ell} \E \max_i \parens*{\frac{\normt{\blSigma^{1/2}\blx_i}^2}{\tau_2} - 1}^{2m} \parens*{\blx_i^\top \blSigma \blx_{i}}^{\ell-2m} \\
        &\overset{(1)}{\lesssim}_{\log} \sum_{m=0}^{\floor{\ell/2}} \tau_2^{2m-\ell} \tau_4^{m} \tau_2^{-2m} \tau_2^{\ell-2m}\\
        &\lesssim 1,
    \end{align*}
    where inequality (1) uses Lemma \ref{lem:maxpoly} and bounds on moments of Gaussian quadratic forms \cite{magnus1978moments}. Combining the above bounds on the norms of the off-diagonal and diagonal parts of $\blG$, we can conclude that $\E\norm{\blG} \lesssim_{\log} nR^{-\ell/2}$.
    % \item Using Lemma \ref{lem:genredux}, we can compute
    % \begin{align*}
    % \E k(\blx_1, \blz)^2 &\lesssim \sum_{m=0}^{\floor{\ell/2}}c_{m,\ell} \E \parens*{\frac{\normt{\blSigma^{1/2}\blx_1}^2}{\tau_2} - 1}^{2m} \parens*{\blx_1^\top \blSigma \blx_1}^{\ell-2m} \tau_2^{2m-\ell}\\
    % &\lesssim \sum_{m=0}^{\floor{\ell/2}} R^{-m}\\
    % &\lesssim 1.
    % \end{align*}

    \item Lastly, we consider
    \begin{align*}
         \E \max_{1\leq i \leq n} \parens{k(\blz, \blx_i) - \E_{\blx_i} k(\blz, \blx_i)}^2 &= \E \max_i \brackets*{\He_\ell\parens*{\frac{\ip{\blx_i}{\blz}}{\sqrt{\tau_2}}} - \E_{\blx_i} \He_\ell\parens*{\frac{\ip{\blx_i}{\blz}}{\sqrt{\tau_2}}}}^2\\
        &\lesssim \E_{\blz} \log^c(n) \cdot \norm*{P_{\ell}(\cdot)}_{L^2}^2,\\
    \end{align*}
where we use Lemma \ref{lem:maxpoly} and define $P_{\ell}(\cdot) = \He_\ell\parens*{\frac{\ip{\cdot}{\blSigma^{1/2}\blz}}{\sqrt{\tau_2}}} - \E_{\blx} \He_\ell\parens*{\frac{\ip{\blx}{\blz}}{\sqrt{\tau_2}}}$, which is a standard Gaussian polynomial conditional on $\blz$. 
We can bound the $L^2$ norm using Lemma \ref{lem:genredux} by noting that
\begin{align*}
    \norm*{P_{\ell}}_{L^2}^2 &= \text{Var}\parens*{\He_{\ell}\parens*{\frac{\ip{\blx_i}{\blz}}{\sqrt{\tau_2}}} \Big{|} \blz}\\
    &\leq \E_{\blx_i} \brackets*{\He_{\ell}\parens*{\frac{\ip{\blx_i}{\blz}}{\sqrt{\tau_2}}}^2}\\
    &\leq \sum_{m=0}^{\floor{\ell/2}} c_{m,\ell} \parens*{\frac{\normt{\blSigma^{1/2}\blz}^2}{\tau_2} - 1}^{2m} \parens*{\blz^\top \blSigma \blz}^{\ell-2m} \tau_2^{2m-\ell}.\\
    % &\lesssim \sum_{m=0}^{\floor{\ell/2}} \parens*{\frac{\normt{\blSigma^{1/2}\blz}^2}{\tau_2} - 1}^{m} \parens*{\blz^\top \blSigma \blz}^{\ell/2-m} \tau_2^{m-\ell/2}\\
\end{align*}
Substituting this into the bound above and taking the expectation with respect to $\blz$ using the Cauchy-Schwarz inequality, Lemma \ref{lem:whittle}, and bounds on moments of quadratic forms \cite{magnus1978moments}, we obtain
\begin{align*}
    \E \max_{1\leq i \leq n} \parens{k(\blz, \blx_i) - \E_{\blz} k(\blz, \blx_i)}^2 &\lesssim \log^c(n) \sum_{m=0}^{\floor{\ell/2}} \E\brackets*{\parens*{\frac{\normt{\blSigma^{1/2}\blz}^2}{\tau_2} - 1}^{2m} \parens*{\blz^\top \blSigma \blz}^{\ell-2m}} \tau_2^{2m-\ell}\\
    &\lesssim \log^c(n) \sum_{m=0}^{\floor{\ell/2}} R^{-m} \tau_2^{\ell-2m}\tau_2^{2m-\ell}\\
    &\lesssim \log^c(n).
    \end{align*}
\end{itemize}
Combining these bounds in each of the terms of Theorem~\ref{thm:gen-bound} yields the desired upper bound. For the lower bound in the case where $\ell$ is even and $\blSigma =\blI_d$, we use Jensen's inequality to obtain
\begin{align*}
\E \norm*{\blDelta^{(\ell)}} \geq \norm*{\E \blDelta^{(\ell)}} = n \abs*{\E \He_\ell\parens*{\frac{\ip{\blx_1}{\blx_2}}{\sqrt{d}}}} \asymp n \abs*{\E \parens*{\frac{\normt{\blx_1}^2}{d} - 1}^{\ell/2}} \asymp n d^{-\ell/2} d^{\floor{\ell/4}},
\end{align*}
where the second line uses Lemma \ref{lem:hexp} and the last line uses the fact that the $k$-th central moment of a $\chi^2(d)$ random variable scales like $d^{\floor{k/2}}$ for a positive integer $k\geq 2$ (see, e.g., Lemma G.1 in \cite{oda2020fast}).
\end{proof}

\section{Kernel matrix approximation in the polynomial regime}\label{sec:generalK}
In this section, we show how Corollary~\ref{cor:Hnorm} can be used to provide new results for approximating general inner product kernel matrices with anisotropic Gaussian data. Recall that we consider $\blx_1, \dots, \blx_n \simiid \scrN(\mathbf{0}, \blSigma)$ and denote $\tau_k = \tr(\blSigma^k)$. We will consider the high-dimensional polynomial scaling regime where $c\leq \frac{n}{\tau_1^q} \leq C$, for some $q>0$ and constants $c, C > 0$. Let $k \colon \R^d \times \R^d \to \R$ be an inner product kernel of the form
\begin{align}\label{eq:kdef}
k(\blx, \blx') = f\parens*{\frac{\ip{\blx}{\blx'}}{\tr{\blSigma}}},
\end{align}
where $f \colon \R \to \R$ is assumed to be a $C^{\floor{2q} + 1 }$ function in a neighborhood of $0$ and is $L$-Lipschitz in a neighborhood $[1-\delta, 1+\delta]$, for some $\delta>0$.

We are interested in studying the behavior of the empirical kernel matrix $\blK \in \R^{n\times n}$, where $\blK_{ij} = k(\blx_i, \blx_j)$. In this section, we assume without loss of generality that $\norm{\blSigma} = 1$.
% and $\blSigma$ is a diagonal matrix. 
For conciseness, we will occasionally write $\tau \coloneqq \tau_1$. 

Next, we define the matrix 
\[
\bar{\blK} \coloneqq \sum_{\ell=0}^{\floor{\frac{4q}{3}}}\frac{f^{(\ell)}(0)}{\ell! \tau_1^\ell} (\blX \blX^\top)^{\odot \ell} + \sum_{\ell=\floor{\frac{4q}{3}} + 1}^{\floor{2q}}\frac{f^{(\ell)}(0)}{\ell! \tau_1^\ell}\tau_2^{\ell/2}\sum_{k=0}^{\floor{4q/3}} c_{k \ell} \blH^{(k)} + \parens*{f(1) - \sum_{j = 0}^{ \floor{\frac{4q}{3}}}\frac{f^{(j)}(0)}{j!}}\blI,
\]
where $c_{k \ell} = \frac{1}{k!} \E_{z \sim \scrN(0,1)} [z^\ell \He_k(z)]$ and
\[
H^{(k)}_{ij} \coloneqq \He_{k}\parens*{\frac{\ip{\blx_i}{\blx_j}}{\sqrt{\tau_2}}}.
\]

Note that $\bar{\blK}$ is the sum of a multiple of the identity matrix and a polynomial kernel matrix of degree at most $\floor{\frac{4q}{3}}$. Our main result in this section is the following theorem, proven in Appendix \ref{app:gaussianKproof}.

\begin{theorem}\label{thm:gaussianK}
   In the setting described above, 
   \[
   \norm{\blK - \bar{\blK}} \lesssim_{\log} \tau^{q-\frac{\floor{2q}}{2}-\frac{1}{2}} + \tau^{q - \frac{3}{4}\floor{\frac{4q}{3}} - \frac{3}{4}} + \tau^{-\frac{1}{2}}
   \]
   with probability at least $1 - \frac{c}{\log{n}}$. Hence, 
   $\norm{\blK - \bar{\blK}} \to 0$ in probability as $n, \tau \to \infty$.
\end{theorem}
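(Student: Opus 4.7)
The plan is to decompose $\blK - \bar{\blK}$ into an off-diagonal part and a diagonal part, and to control each via a distinct mechanism matching the three claimed terms: a Frobenius bound on a high-order Taylor remainder, Corollary~\ref{cor:Hnorm} applied to middle-order off-diagonal Hermite matrices, and Gaussian concentration of $\|\blx_i\|^2/\tau$ combined with the Lipschitz hypothesis on $f$ near $1$. The first step for the off-diagonal part is to Taylor-expand each entry $K_{ij} = f(\ip{\blx_i}{\blx_j}/\tau)$ about $0$ to order $\floor{2q}$ using the $C^{\floor{2q}+1}$ regularity of $f$. The remainder $R_{ij}$ is bounded pointwise by $|\ip{\blx_i}{\blx_j}/\tau|^{\floor{2q}+1}$; bounding operator norm by Frobenius norm, applying Whittle's inequality (Lemma~\ref{lem:whittle}) to the Gaussian bilinear-form moments, and substituting $n \asymp \tau^q$ along with $\tau_2 \lesssim \tau$ yields the first stated term $\tau^{q-\floor{2q}/2-1/2}$.

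Next, we rewrite each monomial $(\ip{\blx_i}{\blx_j}/\tau)^\ell$ in the truncated Taylor polynomial in the Hermite basis by the algebraic identity $z^\ell = \sum_{k=0}^\ell c_{k\ell}\He_k(z)$ applied with $z = \ip{\blx_i}{\blx_j}/\sqrt{\tau_2}$, producing the factor $\tau_2^{\ell/2}/\tau^\ell$ in front of each Hermite component. The approximator $\bar{\blK}$ is engineered so that on the off-diagonal it retains (i) the entire monomial (equivalently every Hermite component) for $\ell \leq \floor{\frac{4q}{3}}$, and (ii) only the Hermite components with $k \leq \floor{\frac{4q}{3}}$ when $\floor{\frac{4q}{3}} < \ell \leq \floor{2q}$. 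Thus, modulo the Taylor remainder, the off-diagonal part of $\blK-\bar{\blK}$ is a finite linear combination of the off-diagonal Hermite matrices $\blDelta^{(k)}$ with $k \in [\floor{\frac{4q}{3}}+1, \floor{2q}]$, each weighted by a coefficient of magnitude $\tau_2^{\ell/2}/(\ell!\,\tau^\ell)$ for some $\ell \geq k$. Corollary~\ref{cor:Hnorm} gives $\E\|\blDelta^{(k)}\| \lesssim_{\log} \sqrt{n} + nR^{-k/4}$ for each such $k$, and under the standing scaling ($R \gtrsim \tau$ and $\tau_2 \lesssim \tau$) the prefactor $\tau_2^{\ell/2}/\tau^\ell$ is monotone decreasing in $\ell$, so the worst-case pair is $k = \ell = \floor{\frac{4q}{3}}+1$. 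The $nR^{-k/4}$ piece at this endpoint produces the second stated term $\tau^{q - \frac{3}{4}\floor{\frac{4q}{3}} - \frac{3}{4}}$; a direct check shows the $\sqrt{n}$ piece is strictly subdominant for all $q \geq 1$.

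For the diagonal, the identity correction $(f(1) - \sum_{j=0}^{\floor{4q/3}} f^{(j)}(0)/j!)\blI$ is chosen precisely so that the diagonal of $\bar{\blK}$ matches $f(1)$ to leading order at the mean radius $\|\blx_i\|^2 = \tau$. Gaussian concentration of the quadratic form $\|\blx_i\|^2$ via Whittle's inequality and a union bound yields $\max_i |\|\blx_i\|^2/\tau - 1| \lesssim_{\log} \tau^{-1/2}$ with high probability. Combined with the Lipschitz hypothesis on $f$ near $1$, this controls $\max_i |K_{ii} - f(1)|$ at rate $\tau^{-1/2}$; a parallel Taylor expansion of $\bar{K}_{ii}$ about $\|\blx_i\|^2 = \tau$ (using smoothness of the monomial and Hermite pieces, together with the identity correction) controls $\max_i |\bar{K}_{ii} - f(1)|$ at the same rate, giving the third stated term. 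The high-probability bound stated in the theorem then follows from the expected-norm bounds by Markov's inequality, as remarked at the start of Section~\ref{sec:hermite}.

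The main obstacle will be the bookkeeping in the Hermite step: verifying carefully that the endpoint pair $k = \ell = \floor{\frac{4q}{3}}+1$ really dominates among all admissible $(k,\ell)$ with $\floor{\frac{4q}{3}} < k \leq \ell \leq \floor{2q}$, and that the subleading cross-terms and the $\sqrt{n}$ contributions are in every case absorbed into one of the three listed exponents. A secondary technical subtlety is the diagonal analysis, since the diagonal entries of $\bar{\blK}$ mix monomials in $\|\blx_i\|^2/\tau$ (of size $O(1)$) with Hermite evaluations at $\|\blx_i\|^2/\sqrt{\tau_2}$ (of size $\tau/\sqrt{\tau_2}$, which is large), and these scales must be reconciled through the identity correction before the Lipschitz argument is applied.
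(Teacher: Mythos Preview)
Your plan is correct and mirrors the paper's proof in Appendix~\ref{app:gaussianKproof} step for step: condition on the high-probability event that $\max_{i\neq j}|\ip{\blx_i}{\blx_j}|/\tau$ is small to make the Taylor remainder bound valid pointwise and then control it via a Frobenius bound, rewrite the truncated Taylor polynomial in the Hermite basis and apply Corollary~\ref{cor:Hnorm} to the components with $k > \lfloor 4q/3\rfloor$, and handle the diagonal via the Lipschitz assumption plus concentration of $\|\blx_i\|^2/\tau$. One caveat: your parenthetical claim $R \gtrsim \tau$ is false for general anisotropic $\blSigma$ (take one eigenvalue equal to $1$ and the rest tiny but with large sum), and it is also unnecessary---the endpoint evaluation only uses $\tau_2 \lesssim \tau_1$ and $\tau_4 \lesssim \tau_1$ (both implied by $\|\blSigma\|=1$), via $\tau_2^{\ell/2}\tau_1^{-\ell}\cdot nR^{-k/4} = \tau_2^{(\ell-k)/2}\tau_4^{k/4}\tau_1^{-\ell}n \le \tau_1^{q-\ell/2-k/4}$, which is maximized at $k=\ell=\lfloor 4q/3\rfloor+1$.
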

To our knowledge, this is the sharpest known kernel approximation result with anisotropic Gaussian data in the general polynomial scaling regime. We recover the bounds developed for the linear \cite{el2010spectrum} and quadratic scaling regimes \cite{pandit2024universality}, while tightening the approximation result in the general polynomial scaling regime from a degree $\floor{2q}$ polynomial to a degree $\floor{4q/3}$ polynomial.  Due to our use of the sharp bounds from Corollary \ref{cor:Hnorm}, we are able to obtain a faster convergence guarantee than \cite{pandit2024universality} in the quadratic case (on the order of $d^{-1/2}$ instead of $d^{-1/12}$).  It is an interesting open question to study whether Theorem \ref{thm:gaussianK} can be improved further to the conjectured degree-$\floor{q}$ polynomial approximation, to match known results for the uniform spherical and hypercubic distributions. However, the lower bound we prove in Corollary \ref{cor:Hnorm} implies that a decomposition of $f$ in terms of univariate Hermite polynomials will not lead to the conjectured approximation result.  

In some sense, this tells us that the univariate Hermite basis is the wrong orthogonal decomposition to use to approximate general inner product kernels with Gaussian data, even when the covariance is isotropic!
In the special case of isotropy, we can form an alternative approximation by leveraging the sharp result for uniform data on the sphere (cf. Corollary \ref{cor:Gnorm}). Formally, the polar decomposition of a vector $\blx \sim \scrN(\mathbf{0}, \blI_d)$ into independent norm and unit vector terms allows us to approximate the kernel matrix by a degree-$\floor{q}$ polynomial of \textit{unit vectors} --- with random norm-based coefficients --- via a Gegenbauer polynomial expansion. 

\begin{proposition}\label{prop:Hnorm-isotropic}
    Let $\blx_1, \dots, \blx_n \simiid \scrN(\mathbf{0}, \blI_d)$. For each $i$, denote $r_i \coloneqq \normt{\blx_i}$ and $\blu_i \coloneqq \frac{\blx_i}{\normt{\blx_i}}$. Then, under the same assumptions as in Theorem \ref{thm:gaussianK}, we have
    \begin{align*}
        \norm{\blK - \bar{\blK}} \to 0
    \end{align*}
    in probability as $n,d \to \infty$, where we define the approximating matrix as 
    \[
    \bar{\blK} \coloneqq \sum_{\ell=0}^{\floor{2q}}\frac{f^{(\ell)}(0)}{\ell!d^{\frac{\ell}{2}}} \parens*{\frac{\blr \blr^\top}{d}}^{\odot \ell} \odot \parens*{\sum_{j=0}^{\floor{q}}c_{j\ell}^{(d)} \blQ^{(j)}} + \parens*{f(1) - \sum_{j=0}^{\floor{q}} \frac{f^{(j)}(0)}{j!}}\blI_n,
    \]
    \[
    c_{j\ell}^{(d)} \coloneqq B(d,j)\E_{\blx \sim \scrS^{d-1}(\sqrt{d})}[\ip{\blx}{\ble_1}^\ell Q_j^{(d)}(\sqrt{d}\ip{\blx}{\ble_1})].
    \]
    Above, $\blr$ is the vector with entries $r_i$, and $\blQ^{(j)}$ is the matrix with entries $Q_j^{(d)}(\ip{\sqrt{d}\blu_i}{\sqrt{d} \blu_j})$.
\end{proposition}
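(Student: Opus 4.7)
The plan is to mirror the strategy of Theorem~\ref{thm:gaussianK}, but replace the univariate Hermite expansion with a Gegenbauer expansion applied to the spherical parts of the data. By the polar decomposition, write $\blx_i = r_i \blu_i$ with $r_i = \normt{\blx_i}$ and $\blu_i = \blx_i / r_i$; these are independent, $r_i^2 \sim \chi^2(d)$, and $\sqrt{d}\blu_i$ is uniform on $\scrS^{d-1}(\sqrt{d})$. Standard $\chi^2$ concentration yields $\max_i |r_i/\sqrt{d} - 1| \lesssim_{\log} d^{-1/2}$ with high probability. Setting $t_{ij} := d\ip{\blu_i}{\blu_j} = \ip{\sqrt{d}\blu_i}{\sqrt{d}\blu_j}$, for any $i \neq j$ we have $k(\blx_i,\blx_j) = f(r_i r_j t_{ij}/d^2)$, and since the argument is of order $d^{-1/2}$ up to logarithmic factors, I would Taylor expand $f$ around $0$ up to order $\floor{2q}$:
\[
f\!\left(\frac{r_ir_j t_{ij}}{d^2}\right) = \sum_{\ell=0}^{\floor{2q}} \frac{f^{(\ell)}(0)}{\ell!}\left(\frac{r_i r_j}{d}\right)^\ell \frac{t_{ij}^\ell}{d^\ell} + R_{ij},
\]
with entrywise remainder $|R_{ij}| \lesssim_{\log} d^{-(\floor{2q}+1)/2}$.

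By rotation invariance $t_{ij} \sim \tilde\tau_{d-1}$, so the orthogonality relation~\eqref{eq:gegenbauer-norm} yields the finite polynomial identity
\[
\frac{t^\ell}{d^{\ell/2}} = \sum_{k=0}^{\ell} c_{k\ell}^{(d)}\, Q_k^{(d)}(t),
\]
whose coefficients coincide with those defined in the statement of the proposition after the change of variables $\blx = \sqrt{d}\blu$. Substituting into the Taylor expansion decomposes the off-diagonal entries of $\blK$ into the off-diagonal part of $\bar\blK$ (coming from $k \leq \floor{q}$), plus a ``high-frequency'' Gegenbauer error matrix $\bm{E}^{\mathrm{hi}}$ containing all terms with $\floor{q} < k \leq \ell$ (which requires $\ell > \floor{q}$), plus the Taylor remainder. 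Any term with $\ell \leq \floor{q}$ is absorbed into $\bar\blK$ exactly, since in that case the inner Gegenbauer sum already runs over $k \leq \ell \leq \floor{q}$.

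The main step is to bound $\norm{\bm{E}^{\mathrm{hi}}}$ using Corollary~\ref{cor:Gnorm}. For fixed $(\ell, k)$ with $\floor{q} < k \leq \ell \leq \floor{2q}$, the corresponding contribution factors as
\[
\frac{f^{(\ell)}(0)\,c_{k\ell}^{(d)}}{\ell!\,d^{\ell/2}}\,\diag(\bm{v}_\ell)\,\blDelta^{(k)}\,\diag(\bm{v}_\ell),
\]
where $\bm{v}_\ell \in \R^n$ has entries $(r_i/\sqrt{d})^\ell$ and $\blDelta^{(k)}$ is the off-diagonal Gegenbauer matrix of Corollary~\ref{cor:Gnorm} applied to the uniform spherical points $\sqrt{d}\blu_i$. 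A Cauchy--Schwarz bound in $L^2(\tilde\tau_{d-1})$ gives $|c_{k\ell}^{(d)}| \lesssim B(d,k)^{1/2} \asymp d^{k/2}$; Corollary~\ref{cor:Gnorm} gives $\E\norm{\blDelta^{(k)}} \lesssim_{\log} d^{q-k} + d^{(q-k)/2}$; and the concentration of $r_i$ gives $\norm{\bm{v}_\ell}_\infty \lesssim_{\log} 1$. Multiplying, each $(\ell,k)$ contribution has operator norm at most $d^{q - (k+\ell)/2} + d^{(q-\ell)/2}$ up to $\log$ factors. Since $k,\ell \geq \floor{q}+1 > q$ strictly, both exponents are negative and the finite sum over these $(\ell,k)$ pairs is $o(1)$ in probability.

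Finally, the Taylor remainder has Frobenius norm $\norm{R}_F \lesssim_{\log} n \cdot d^{-(\floor{2q}+1)/2} = d^{q - (\floor{2q}+1)/2} \to 0$, and the diagonal is handled by the identity correction in $\bar\blK$: using $Q_k^{(d)}(d) = 1$ together with the evaluation $\sum_{k=0}^{\ell} c_{k\ell}^{(d)} = d^{\ell/2}$ (obtained by plugging $t=d$ into the Gegenbauer identity above), a direct calculation shows $\bar\blK_{ii} = f(1) + O_p(d^{-1/2})$, which matches $\blK_{ii} = f(r_i^2/d) = f(1) + O_p(d^{-1/2})$ by the local Lipschitz assumption on $f$ at $1$. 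The main obstacle in executing this plan is the bookkeeping that matches the ``low-frequency'' portion of the combined Taylor/Gegenbauer expansion to the sum defining $\bar\blK$ exactly, tracking carefully the factors of $d^{\ell/2}$ arising from the rescaling between the unit sphere and $\scrS^{d-1}(\sqrt{d})$; once that identification is in place, the operator-norm control follows routinely from Corollary~\ref{cor:Gnorm} combined with the coefficient estimate above.
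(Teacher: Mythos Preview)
Your proposal is correct and follows essentially the same route as the paper: Taylor expand $f$ to order $\floor{2q}$, bound the remainder in Frobenius norm, expand each monomial $(t_{ij}/\sqrt d)^\ell$ in Gegenbauer polynomials of the spherical parts, and control the high-degree ($k>\floor{q}$) terms using Corollary~\ref{cor:Gnorm} together with the Cauchy--Schwarz bound $|c_{k\ell}^{(d)}|\lesssim d^{k/2}$ and the independence of the radii from the directions. Your factorization $\diag(\bm v_\ell)\,\blDelta^{(k)}\,\diag(\bm v_\ell)$ is exactly the paper's Lemma~\ref{lem:hadamard-outer} in equivalent form, and your treatment of the diagonal matches the paper's argument.
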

While Proposition~\ref{prop:Hnorm-isotropic} is relatively simple to prove given the tools we developed in the previous section (see Appendix \ref{app:isotropicK}), we have not seen it stated in this general form in the literature.
An intriguing question is whether this polar decomposition trick can be leveraged for anisotropic Gaussian data of the form $\x \sim \scrN(\mathbf{0},\blSigma)$.
Indeed, it is possible to decompose $\x = r \blv$ where $r := \norm{\blSigma^{-1/2} \x}$ and $\blv = \blSigma^{1/2} \blu$ (where $\blu$ is uniformly distributed on the sphere) and $r$ is independent of $\blv$.
However, identifying the correct basis for inner products of the form $\langle \blv_i, \blv_j \rangle$ is challenging since it will likely need to involve calculations with generalized ellipsoidal harmonics, and we do not believe an elegant identity of the form of Equation~\eqref{eq:gegenbauer-correlation} holds in general.
We leave this as an important direction for future work.

\subsection{Lower bound on the bias of KRR}\label{sec:biaslowerbound}
In Theorem \ref{thm:gaussianK}, we showed that general inner product kernel matrices with anisotropic Gaussian data are well-approximated by polynomial kernel matrices of degree $\floor{4q/3}$ under the high-dimensional scaling $n \asymp \tau^q$. In this section, we use similar Hermite decompositions to show that the generalization error of kernel ridge regression (KRR) is lower bounded by the bias of the best degree-$\floor{4q/3}$ polynomial approximation to the target function. The overall strategy we use is similar to the generalization analysis in \cite{pandit2024universality}, but we will consider the target function $g^*$ belonging to a family of generalized additive models. 
Specifically, we consider i.i.d. samples drawn from the model
\begin{align*}
y_i = g^*(\blx_i) + \epsilon_i,
\end{align*}
where $\blx_1,\ldots,\blx_n \simiid \scrN(\mathbf{0}, \blSigma)$, and $\epsilon_1,\ldots,\epsilon_n \simiid \scrN(0,\sigma^2)$. We will consider the target $g^*$ to take the form
\begin{align}\label{eq:gstar}
g^*(\blx) = \sum_{k=0}^K c_k g_k(\ip{\blx}{\blSigma^{-1/2}\blv_k}),
\end{align}
for some constant $K$, fixed unit vectors $\blv_k \in \scrS^{d-1}$, and univariate functions $g_k \in L^2(\scrN(0,1))$. Intuitively, this condition requires that $g^*$ depends only on $K$ scalar projections of the whitened input. 

Given these $n$ samples, the standard KRR estimator is constructed as
\[
\hat{g}(\blx) = \bly^\top(\blK + \lambda \blI_n)^{-1}\blk_{\blX}(\blx),
\]
where $\lambda \geq 0$ is the ridge regularization parameter, $\blk_{\blX}(\blx) \in \R^n$ is the vector with entries $k(\blx_i, \blx)$, and $\bly \in \R^n$ is the vector with entries $y_i$.

The main result of this subsection is stated below and proven in Appendix \ref{app:biasproof}.

\begin{theorem}[Lower bound on the bias of KRR]\label{thm:biaslowerbound}
    Consider the kernel regression estimate corresponding to the inner product kernel in Equation \eqref{eq:kdef} with ridge regularization parameter $\lambda \geq 0$. Assume that $g^*$ is of the form in Equation \eqref{eq:gstar} and that there exists some integer $L>4q-2$ such that $f^{(L+1)}$ is uniformly bounded by a constant. Then, in the scaling regime $n \asymp \tau^q$,
    \[
    \mathsf{Bias}(\hat{g}, g^*) \geq \inf_{p \in \scrP_{\leq \floor{\frac{4q}{3}}}} \norm{p - g^*}_{L^2}^2  - o_\tau(1),
    \]
    where $\scrP_{\leq k}$ is the space of multivariate polynomials of degree less than or equal to $k$ in $d$ dimensions.
\end{theorem}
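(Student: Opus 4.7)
The plan is to reduce the lower bound on the bias to the Pythagorean inequality in $L^2(\scrN(\mathbf{0},\blSigma))$ by showing that the KRR predictor is, in $L^2$, within $o_\tau(1)$ of a polynomial of degree at most $\floor{4q/3}$ in the test point $\blx^*$. With $\bly^*:=(g^*(\blx_i))_{i=1}^n$, the bias is
\[
\mathsf{Bias}(\hat{g},g^*)=\E_{\blX,\blx^*}\!\left[\bigl(g^*(\blx^*)-\bly^{*\top}(\blK+\lambda\blI)^{-1}\blk_{\blX}(\blx^*)\bigr)^2\right],
\]
so the main task is to extract a polynomial approximation (in $\blx^*$) of the random vector $\blk_{\blX}(\blx^*)$. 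Following the construction of $\bar{\blK}$ in Theorem~\ref{thm:gaussianK}, I Taylor expand $f$ to order $\floor{2q}$ about $0$ and re-express each monomial $\ip{\blx_i}{\blx^*}^\ell$ for $\floor{4q/3}<\ell\leq\floor{2q}$ in the univariate Hermite basis in $\blx^*$, producing a vector $\bar{\blk}(\blx^*)\in\R^n$ whose entries lie in $\scrP_{\leq\floor{4q/3}}$. Using the Hermite-covariance calculations behind Corollary~\ref{cor:Hnorm} (and with the hypothesis $L>4q-2$ handling the Taylor remainder through a Frobenius bound), I expect the matrix
\[
\bm{M}:=\E_{\blx^*}\!\left[(\blk_{\blX}(\blx^*)-\bar{\blk}(\blx^*))(\blk_{\blX}(\blx^*)-\bar{\blk}(\blx^*))^\top\right]
\]
to satisfy $\|\bm{M}\|=o_\tau(1/n)$ with high probability.

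Set $\bm{u}:=(\blK+\lambda\blI)^{-1}\bly^*$ and $\hat{g}_{\mathrm{poly}}(\blx^*):=\bm{u}^\top\bar{\blk}(\blx^*)$, which is a polynomial of degree $\leq\floor{4q/3}$ as a function of $\blx^*$. Then
\[
\E_{\blx^*}\!\left[(\hat{g}(\blx^*)-\hat{g}_{\mathrm{poly}}(\blx^*))^2\right]=\bm{u}^\top\bm{M}\bm{u}\leq\|\bm{M}\|\,\|\bm{u}\|^2,
\]
and by orthogonal projection in $L^2(\scrN(\mathbf{0},\blSigma))$, $\|\hat{g}_{\mathrm{poly}}-g^*\|_{L^2}^2\geq\inf_{p\in\scrP_{\leq\floor{4q/3}}}\|p-g^*\|_{L^2}^2$. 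Applying the elementary inequality $(a-b)^2\geq(c-b)^2-2|a-c|\,|c-b|$ with $a=\hat{g}$, $c=\hat{g}_{\mathrm{poly}}$, $b=g^*$, taking expectations over $\blX$, and noting that $\inf_p\|p-g^*\|_{L^2}\leq\|g^*\|_{L^2}<\infty$, the proof reduces to showing $\E_{\blX}[\|\hat{g}-\hat{g}_{\mathrm{poly}}\|_{L^2}^2]=o_\tau(1)$, i.e., $\E_{\blX}\!\left[\|\bm{u}\|^2\,\|\bm{M}\|\right]=o_\tau(1)$.

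The chief obstacle is controlling $\|\bm{u}\|=\|(\blK+\lambda\blI)^{-1}\bly^*\|$ in the ridgeless regime $\lambda=0$, where $\sigma_{\min}(\blK)$ need not be bounded away from zero. The structural assumption~\eqref{eq:gstar} that $g^*$ depends on only $K=O(1)$ linear projections of $\blx$ is essential here: expanding each $g_k$ in its own Hermite basis reveals that $\bly^*$ concentrates on an $O(1)$-dimensional ``signal subspace'' spanned by low-frequency Hermite evaluations along the directions $\blSigma^{-1/2}\blv_k$, and on this subspace the approximation $\blK\approx\bar{\blK}$ from Theorem~\ref{thm:gaussianK} is well conditioned thanks to the polynomial-kernel structure of $\bar{\blK}$. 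Making this alignment rigorous --- in particular, ruling out amplification of $\bm{u}$ through near-null directions of $\blK$ and handling the off-diagonal coupling between different Hermite frequencies --- is the technical heart of the proof, and is exactly the place where the generalized additive form of $g^*$ is used rather than treating an arbitrary $g^*\in L^2$.
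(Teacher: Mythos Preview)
Your overall strategy---define $\hat g_{\mathrm{poly}}(\blx^*)=\blu^\top\bar{\blk}(\blx^*)$ with $\blu=(\blK+\lambda\blI)^{-1}\bly^*$, show $\|\hat g-\hat g_{\mathrm{poly}}\|_{L^2}=o_\tau(1)$, and invoke best-polynomial approximation---is exactly the paper's route (their $\bar g$ is your $\hat g_{\mathrm{poly}}$). The substantive gap is that you have misdiagnosed the ``chief obstacle'' and, with it, the role of the structural assumption on $g^*$.

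The quantity $\sigma_{\min}(\blK)$ \emph{is} bounded away from zero, even when $\lambda=0$. This follows directly from the kernel-matrix approximation in Theorem~\ref{thm:gaussianK}: $\bar{\blK}$ contains the summand $\bigl(f(1)-\sum_{j=0}^{\floor{4q/3}}f^{(j)}(0)/j!\bigr)\blI_n$, so by Weyl's inequality $\mu_n(\blK)\geq\mu_n(\bar{\blK})-o_\tau(1)\geq c>0$ with high probability. Consequently $\|\blu\|\leq\|(\blK+\lambda\blI)^{-1}\|\,\|\bly^*\|\lesssim\sqrt{n\log n}$ with no work beyond $g^*\in L^2$ (to control $\|\bly^*\|$). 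Your proposed ``signal-subspace alignment'' argument is therefore unnecessary, and the generalized-additive form~\eqref{eq:gstar} is \emph{not} what controls $\|\blu\|$. In the paper's organization the structural assumption enters instead in approximating the vector $\blV=\E_{\blx}[g^*(\blx)\,\blk_{\blX}(\blx)]$: one must show that its high-degree Hermite components $b_{\ell,i}\,\E_{\blx}[\He_\ell(\ip{\blz}{\blu_i})g^*(\blx)]$ are small for $\ell>\floor{4q/3}$, and this is where the explicit form of $g^*$ is exploited (via Lemma~\ref{lem:unitredux}, reducing to powers of $\ip{\blu_i}{\blv_k}$). Interestingly, by bounding $\|\hat g-\hat g_{\mathrm{poly}}\|_{L^2}^2=\blu^\top\bm M\blu$ directly rather than comparing biases term by term, your route sidesteps the $\blV$ approximation entirely---so with the correct control of $\|\blu\|$ in hand, your argument may in fact not require~\eqref{eq:gstar} at all.

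Two smaller points. First, the Taylor expansion should be carried to order $L$ (not $\floor{2q}$), so that the remainder involves $f^{(L+1)}$ and contributes $n\cdot\tau^{-(L+1)/2}$ per entry; this is precisely why the hypothesis $L>4q-2$ is needed to make the Frobenius bound $o_\tau(1/n)$. Second, your cross-term step needs an upper bound on $\|\hat g_{\mathrm{poly}}-g^*\|_{L^2}$, not the lower bound $\inf_p\|p-g^*\|\leq\|g^*\|$; the clean fix is to use the $L^2$ triangle inequality $\|\hat g-g^*\|\geq\|\hat g_{\mathrm{poly}}-g^*\|-\|\hat g-\hat g_{\mathrm{poly}}\|\geq\inf_p\|p-g^*\|-o_\tau(1)$ and then square, using only that $\inf_p\|p-g^*\|\leq\|g^*\|<\infty$.
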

Our theorem shows that even for this relatively simple class of target functions, KRR suffers from a polynomial approximation barrier and is unable to learn functions of degree greater than $\floor{4q/3}$. It is an interesting direction for future work to strengthen this bound to the conjectured $\floor{q}$ lower bound (which would match the uniform spherical and binary hypercube case, as analyzed in \cite{ghorbani2021linearized, mei2022generalization2}).
Based on the bias and variance calculations in~\cite{ghorbani2021linearized}, we believe that an optimal degree-$\floor{q}$ approximation result would be instrumental for strengthening the bias lower bound (as well as providing a matching upper bound and characterizing the variance of KRR).

\section*{Acknowledgements}
This work was supported in part by the NSF AI Institute AI4OPT, NSF 2112533. VM was supported by the NSF (through award CCF-2239151 and award IIS-2212182), an Adobe Data Science Research Award, and an Amazon Research Award.

\bibliography{refs}

\appendix
\section{Auxiliary Lemmas}
\begin{lemma}[Special case of Whittle's inequality \cite{whittle1960bounds}]\label{lem:whittle}
Let $\blz \in \R^d$ be a standard normal vector, and let $\blSigma$ be a diagonal matrix. Then, for $s \geq 2$,
\[
\E\brackets*{\abs*{\frac{\blz^\top \blSigma \blz}{\tr{\blSigma}} - 1}^s} \leq C(s) (\tr (\blSigma^2))^{s/2} \tr(\blSigma)^{-s}.
\]
\end{lemma}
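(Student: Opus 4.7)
The plan is to reduce the Gaussian quadratic form to a sum of independent centered scalar random variables and then apply a standard scalar concentration inequality. Writing $\blSigma = \diag(\sigma_1,\ldots,\sigma_d)$ and $\blz = (z_1,\ldots,z_d)^\top$ with $z_i \simiid \scrN(0,1)$, the diagonal structure immediately gives
\[
\blz^\top \blSigma \blz - \tr(\blSigma) = \sum_{i=1}^d \sigma_i(z_i^2 - 1),
\]
a sum of $d$ independent mean-zero random variables, since each $z_i^2 \sim \chi_1^2$ has mean one. This is the only place the diagonality hypothesis is used.

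Next I would apply Rosenthal's inequality for $s \geq 2$ to the above sum, yielding
\[
\E\left|\sum_i \sigma_i(z_i^2-1)\right|^s \lesssim_s \left(\sum_i \sigma_i^2 \,\E(z_i^2-1)^2\right)^{s/2} + \sum_i |\sigma_i|^s \,\E|z_i^2-1|^s.
\]
The first term already sits at the target scale: since $\E(z_i^2-1)^2 = 2$, it is $\lesssim_s \tr(\blSigma^2)^{s/2}$. For the second term, $\E|z_i^2-1|^s$ is an absolute constant depending only on $s$, so the task reduces to bounding $\sum_i |\sigma_i|^s$, and here I would argue
\[
\sum_i |\sigma_i|^s \leq (\max_i |\sigma_i|)^{s-2}\sum_i \sigma_i^2 = \|\blSigma\|^{s-2}\tr(\blSigma^2) \leq \tr(\blSigma^2)^{s/2},
\]
using the elementary spectral-vs.-Frobenius comparison $\|\blSigma\|^2 \leq \tr(\blSigma^2)$. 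Dividing the combined bound by $\tr(\blSigma)^s$ then produces the claim.

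The only genuine obstacle is the ``large-deviation'' (second) term in Rosenthal, and the resolving observation is precisely that $\|\blSigma\|$ is absorbed by $\tr(\blSigma^2)^{1/2}$, so no operator-norm factor survives. An equivalent and arguably cleaner route is to apply Hanson--Wright to $\blz^\top\blSigma\blz - \tr(\blSigma)$ to obtain the sub-exponential tail $\exp(-c\min(t^2/\tr(\blSigma^2),\,t/\|\blSigma\|))$ and then integrate the tail to recover an $s$-th moment bound of the form $\lesssim_s \tr(\blSigma^2)^{s/2} + s^s \|\blSigma\|^s$, which collapses onto $\tr(\blSigma^2)^{s/2}$ by the same comparison. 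Since the inequality is, as advertised, a special case of~\cite{whittle1960bounds}, a valid alternative is simply to quote Whittle's result directly with the appropriate choice of coefficients.
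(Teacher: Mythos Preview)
Your proof is correct. The reduction to $\sum_i \sigma_i(z_i^2-1)$ is clean, Rosenthal's inequality is applied correctly, and the key step absorbing the large-deviation term via $\sum_i|\sigma_i|^s \le \|\blSigma\|^{s-2}\tr(\blSigma^2) \le \tr(\blSigma^2)^{s/2}$ is valid for all $s\ge 2$ since $\|\blSigma\|^2 \le \tr(\blSigma^2)$. The Hanson--Wright alternative you sketch would also work.

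The paper, however, does not actually prove this lemma: it is stated as an auxiliary result with a direct citation to Whittle~\cite{whittle1960bounds} and no argument given. So your proposal supplies strictly more than the paper does---a self-contained proof rather than an appeal to the literature---and you already anticipate this by noting that one could simply quote Whittle's result. There is no discrepancy in approach to flag beyond this; your Rosenthal-based argument is a standard and perfectly good way to recover the special case needed here.
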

% \begin{lemma}[Hermite inversion formula, e.g., Theorem 2.2 in \cite{davis2024general}]
% The monomials can be expanded in the Hermite basis as
% \[
% x^\ell = \ell! \sum_{j = 0}^{\floor{\ell/2}} \frac{1}{2^j j! (\ell - 2j)!}\He_{\ell-2j}(x).
% \]
% \end{lemma}

% \begin{lemma}[Mehler's formula]\label{lem:mehler}
% For any $\rho \in (-1,1)$ and $x,y \in \R$,
% \[
% \frac{1}{\sqrt{1-\rho^2}}\exp\parens*{-\frac{\rho^2(x^2+y^2)-2\rho x y}{2(1-\rho^2)}}  = \sum_{m=0}^\infty \frac{\rho^m}{m!}\He_m(x) \He_m(y).
% \]
% Notably, the left side of this expression is the ratio $\frac{p(x,y)}{p(x)p(y)}$, where $p(x,y)$ is the joint density function of zero mean, unit variance Gaussian variables with correlation $\rho$, and $p(x), p(y)$ are the respective marginal densities.
% \end{lemma}

\begin{lemma}[Hermite multiplication formula, e.g., Theorem 2.4 in \cite{davis2024general}]\label{lem:mult}
The Hermite expansion of $\He_\ell(\gamma x)$ is given by
\[
\He_\ell(\gamma x) = \ell! \sum_{k=0}^{\floor{\ell/2}} \frac{1}{2^k k! (\ell-2k)!} \gamma^{\ell-2k} (\gamma^2-1)^k \He_{\ell-2k}(x).
\]
\end{lemma}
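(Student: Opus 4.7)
The plan is to prove the Hermite multiplication formula using the generating function characterization of the probabilist's Hermite polynomials. Recall that
\[
\sum_{\ell=0}^{\infty} \frac{\He_\ell(x)}{\ell!} t^\ell = \exp\!\left(xt - \frac{t^2}{2}\right).
\]
Substituting $\gamma x$ in place of $x$, the left-hand generating function for the sequence $\{\He_\ell(\gamma x)\}$ becomes $\exp(\gamma x t - t^2/2)$. The key algebraic trick is to rewrite this exponent by completing the square in a way that exposes a genuine Hermite generating function of $x$ at the argument $\gamma t$, absorbing the mismatch into a purely Gaussian factor in $t$:
\[
\exp\!\left(\gamma x t - \tfrac{t^2}{2}\right) = \exp\!\left(x(\gamma t) - \tfrac{(\gamma t)^2}{2}\right)\cdot \exp\!\left(\tfrac{(\gamma^2-1)t^2}{2}\right).
\]

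Next, I would expand each factor in its own power series: the first factor as $\sum_{m\geq 0} \He_m(x) (\gamma t)^m / m!$ using the generating function identity again, and the second factor as $\sum_{k\geq 0} (\gamma^2-1)^k t^{2k}/(2^k k!)$ using the standard exponential series. Taking their Cauchy product yields a double sum in $(m,k)$ whose joint contribution to a given power $t^\ell$ is collected by setting $\ell = m + 2k$ (and hence $m = \ell - 2k$, with $k$ ranging from $0$ to $\lfloor \ell/2 \rfloor$ so that $m\geq 0$). Matching coefficients of $t^\ell$ on both sides, multiplying through by $\ell!$, and relabeling gives exactly
\[
\He_\ell(\gamma x) = \ell! \sum_{k=0}^{\lfloor \ell/2 \rfloor} \frac{\gamma^{\ell-2k}(\gamma^2-1)^k}{2^k k!(\ell-2k)!}\,\He_{\ell-2k}(x),
\]
which is the stated formula.

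The argument requires only formal manipulations of power series, so convergence is not an issue: the generating function is an entire function of $t$ for each fixed $x$ and $\gamma$, and both the Hermite expansion and the Gaussian series converge absolutely, justifying the Cauchy product. The only point that requires a little care is the index bookkeeping when re-indexing the double sum by $\ell$, but this is purely combinatorial. An alternative route would be to induct on $\ell$ via the three-term recurrence $\He_{\ell+1}(y) = y\He_\ell(y) - \ell \He_{\ell-1}(y)$ applied at $y = \gamma x$, reducing $\gamma x \He_{\ell}(\gamma x)$ to Hermite polynomials of $x$ using the base case $\He_1(\gamma x) = \gamma x = \gamma \He_1(x)$. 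I would not expect any real obstacle here: the main (and only) nontrivial idea is the exponent-splitting above, after which the identity drops out by generating function bookkeeping.
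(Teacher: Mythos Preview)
Your generating-function argument is correct: the exponent-splitting identity
\[
\exp\!\left(\gamma x t - \tfrac{t^2}{2}\right) = \exp\!\left(x(\gamma t) - \tfrac{(\gamma t)^2}{2}\right)\cdot \exp\!\left(\tfrac{(\gamma^2-1)t^2}{2}\right)
\]
is the right idea, and the Cauchy product and coefficient-matching go through exactly as you describe. The paper itself does not supply a proof of this lemma; it is stated as an auxiliary fact and attributed to an external reference (Theorem~2.4 in the cited work), so there is no in-paper argument to compare against. Your proof is a standard and self-contained derivation of the identity.
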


\begin{lemma}[Lemma D.2 in \cite{nguyen2020global}]\label{lem:unitredux}
Let $\blx, \bly$ be unit vectors in $\R^d$ and $\blz \sim \scrN(\mathbf{0}, \blI_d)$. Then, 
\[
\E\brackets*{\He_k(\ip{\blx}{\blz})\He_\ell(\ip{\bly}{\blz})} = \delta_{kl} \ell! \ip{\blx}{\bly}^\ell.
\]
\end{lemma}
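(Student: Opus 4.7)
}
The plan is to reduce the $d$-dimensional computation to a standard bivariate Gaussian calculation. First I would observe that $X := \ip{\blx}{\blz}$ and $Y := \ip{\bly}{\blz}$ are jointly centered Gaussian random variables, since $\blz \sim \scrN(\mathbf{0}, \blI_d)$ and the map $\blz \mapsto (\ip{\blx}{\blz}, \ip{\bly}{\blz})$ is linear. Because $\blx$ and $\bly$ are unit vectors, the marginals satisfy $X, Y \sim \scrN(0,1)$, and the covariance is $\E[XY] = \blx^\top \blI_d \bly = \ip{\blx}{\bly} =: \rho$. Thus the claim reduces to showing that, for any jointly standard Gaussian pair $(X,Y)$ with correlation $\rho$, one has $\E[\He_k(X)\He_\ell(Y)] = \delta_{k\ell}\, \ell!\, \rho^\ell$.

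The cleanest route is via the exponential generating function of the probabilist's Hermite polynomials,
\[
\sum_{n \geq 0} \frac{t^n}{n!}\He_n(x) = e^{tx - t^2/2}.
\]
Multiplying the generating functions at parameters $t$ and $s$, taking expectation, and using the joint moment generating function $\E[e^{tX+sY}] = e^{(t^2+s^2+2ts\rho)/2}$, I would obtain
\[
\sum_{k,\ell \geq 0} \frac{t^k s^\ell}{k!\,\ell!}\, \E[\He_k(X)\He_\ell(Y)] \;=\; e^{-t^2/2 - s^2/2}\, e^{(t^2+s^2+2ts\rho)/2} \;=\; e^{ts\rho} \;=\; \sum_{n \geq 0} \frac{(ts\rho)^n}{n!}.
\]
Matching coefficients of $t^k s^\ell$ on both sides yields $\E[\He_k(X)\He_\ell(Y)] = \delta_{k\ell}\, \ell!\, \rho^\ell$, and substituting $\rho = \ip{\blx}{\bly}$ gives the lemma.

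There is essentially no obstacle here: the statement is a classical identity and the generating-function argument is routine. The only minor point worth being careful about is the normalization of the Hermite polynomials, since different conventions (physicist's versus probabilist's) give different constants; the generating function written above uses the probabilist's convention adopted in the paper (consistent with the $\He_\ell$ notation introduced in the Notation subsection), so the constants $\ell!$ come out correctly. An equivalent proof could instead invoke Mehler's formula, or appeal directly to the fact that $\{\He_\ell/\sqrt{\ell!}\}$ is an orthonormal basis of $L^2(\scrN(0,1))$ together with the conditional distribution $Y \mid X \sim \scrN(\rho X, 1-\rho^2)$ and the identity $\E[\He_\ell(Y) \mid X] = \rho^\ell \He_\ell(X)$; but the generating-function approach is the most direct.
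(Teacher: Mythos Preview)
Your argument is correct and complete: the reduction to a bivariate standard Gaussian pair with correlation $\rho=\ip{\blx}{\bly}$ is the right observation, and the generating-function computation cleanly yields $\E[\He_k(X)\He_\ell(Y)]=\delta_{k\ell}\,\ell!\,\rho^\ell$. The paper itself does not prove this lemma; it is quoted verbatim as Lemma~D.2 of \cite{nguyen2020global}, so there is no in-paper proof to compare against. Your generating-function route (and the alternatives you mention via Mehler's formula or the conditional identity $\E[\He_\ell(Y)\mid X]=\rho^\ell\He_\ell(X)$) are all standard and equivalent ways to establish this classical identity.
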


\begin{lemma}[Conditional expectation of Hermite matrix entries]\label{lem:hexp} Let $\blx_2 \sim \scrN(\mathbf{0}, \blSigma)$. Then, for fixed $\blx_1$,
\begin{equation}
\begin{aligned}
\E_{\blx_2}\brackets*{\He_\ell\parens*{\frac{\ip{\blx_1}{\blx_2}}{\sqrt{\tau_2}}}}
&= \mathbf{1}\braces*{\ell \text{ is even}}\frac{\ell!}{2^{\ell/2}(\ell/2)!}\parens*{\frac{\normt{\blSigma^{1/2} \blx_1}^2}{\tau_2} - 1}^{\ell/2}.
\end{aligned}
\end{equation}
\end{lemma}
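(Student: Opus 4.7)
The plan is to reduce the conditional expectation to a one-dimensional Gaussian integral and then invoke the Hermite multiplication formula (Lemma~\ref{lem:mult}) already available in the paper, together with the standard orthogonality $\E_{Z \sim \scrN(0,1)}[\He_m(Z)] = \delta_{m,0}$.

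First I would condition on $\blx_1$ and observe that $\ip{\blx_1}{\blx_2}$ is a univariate centered Gaussian with variance $\blx_1^\top \blSigma \blx_1 = \normt{\blSigma^{1/2}\blx_1}^2$. Hence, writing $\gamma^2 \coloneqq \normt{\blSigma^{1/2}\blx_1}^2/\tau_2$, the random variable $\ip{\blx_1}{\blx_2}/\sqrt{\tau_2}$ has the same distribution as $\gamma Z$ where $Z \sim \scrN(0,1)$. The task therefore reduces to computing $\E[\He_\ell(\gamma Z)]$ as a function of $\gamma$.

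Next I would apply Lemma~\ref{lem:mult} to expand
\[
\He_\ell(\gamma Z) = \ell! \sum_{k=0}^{\floor{\ell/2}} \frac{1}{2^k k!(\ell-2k)!} \gamma^{\ell-2k}(\gamma^2 - 1)^k \He_{\ell-2k}(Z),
\]
and take expectations term-by-term. Since $\E[\He_m(Z)] = 0$ for $m \geq 1$ and equals $1$ for $m = 0$, only the summand with $\ell - 2k = 0$ survives. If $\ell$ is odd, no such $k$ exists and the expectation vanishes, yielding the indicator $\mathbf{1}\{\ell \text{ even}\}$. If $\ell$ is even, setting $k = \ell/2$ leaves
\[
\E[\He_\ell(\gamma Z)] = \frac{\ell!}{2^{\ell/2}(\ell/2)!}(\gamma^2 - 1)^{\ell/2},
\]
and substituting the definition of $\gamma^2$ gives the stated formula.

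There is essentially no obstacle here: the statement is a direct corollary of the Hermite multiplication identity combined with Gaussian orthogonality. The only mild point to double-check is the edge case $\ell = 0$, where the formula should return $1$ (which it does, with the convention $0!/(2^0 \cdot 0!) = 1$ and $(\gamma^2 - 1)^0 = 1$), and the observation that the result depends on $\blx_1$ only through the scalar $\normt{\blSigma^{1/2}\blx_1}^2$, which is exactly what the variance calculation produces.
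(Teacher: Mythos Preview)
Your proposal is correct and follows essentially the same route as the paper: reduce to a one-dimensional Gaussian via the variance computation $\ip{\blx_1}{\blx_2}/\sqrt{\tau_2} \overset{d}{=} \gamma Z$, apply the Hermite multiplication formula (Lemma~\ref{lem:mult}), and use $\E_{Z\sim\scrN(0,1)}[\He_{\ell-2k}(Z)] = \mathbf{1}\{k=\ell/2\}$ to collapse the sum.
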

\begin{proof}
The desired expectation can be computed directly as
\begin{align*}
\E_{\blx_2}\brackets*{\He_\ell\parens*{\frac{\ip{\blx_1}{\blx_2}}{\sqrt{\tau_2}}}} &= \E_{z \sim \scrN(0,1)}\brackets*{\He_\ell\parens*{\frac{\normt{\blSigma^{1/2}\blx_1}}{\sqrt{\tau_2}}z}}\\
&= \ell! \sum_{k=0}^{\floor{\ell/2}} \frac{1}{2^k k! (\ell-2k)!} \parens*{\frac{\normt{\blSigma^{1/2}\blx_1}}{\sqrt{\tau_2}}}^{\ell-2k} \parens*{\frac{\normt{\blSigma^{1/2}\blx_1}^2}{\tau_2}-1}^k \E_{z \sim \scrN(0,1)} \He_{\ell-2k}(z),\\
\end{align*}
where the last line follows from Lemma \ref{lem:mult}. The result follows immediately by noting that $\E_{z \sim \scrN(0,1)} \He_{\ell-2k}(z) = \mathbf{1}\braces{k = \ell/2}$. 
\end{proof}

\begin{lemma}[Conditional correlation of Hermite matrix entries]\label{lem:genredux} Let $\blx_2 \sim \scrN(\mathbf{0}, \blSigma)$. Then, for fixed $\blx_1$ and $\blx_3$, and any two indices $\ell \leq \ell'$,
\begin{equation}
\begin{aligned}
&\E_{\blx_2}\brackets*{\He_\ell\parens*{\frac{\ip{\blx_1}{\blx_2}}{\sqrt{\tau_2}}}\He_{\ell'}\parens*{\frac{\ip{\blx_2}{\blx_3}}{\sqrt{\tau_2}}}} \\
&=  \sum_{j=0}^{\floor{\ell/2}}\frac{\ell! (\ell')!}{2^{2j + \frac{\ell' - \ell}{2}}j! (\frac{\ell' - \ell}{2} + j)! (\ell-2j)!} \parens*{\frac{\normt{\blSigma^{1/2}\blx_1}^2}{\tau_2} - 1}^{j}\parens*{\frac{\normt{\blSigma^{1/2}\blx_3}^2}{\tau_2} - 1}^{\frac{\ell' - \ell}{2} + j} \parens*{\blx_1^\top \blSigma \blx_3}^{\ell-2j} \tau_2^{2j-\ell}.
\end{aligned}
\end{equation}
\end{lemma}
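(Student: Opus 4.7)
\textbf{Proof proposal for Lemma~\ref{lem:genredux}.} The plan is to reduce the problem to a bilinear Hermite expectation with standard Gaussian inputs, at which point the Hermite multiplication formula (Lemma~\ref{lem:mult}) and Lemma~\ref{lem:unitredux} can be applied directly.

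First I would whiten $\blx_2$: let $\blw := \blSigma^{-1/2} \blx_2 \sim \scrN(\mathbf{0}, \blI_d)$, and define
\[
\bla := \frac{\blSigma^{1/2} \blx_1}{\sqrt{\tau_2}}, \qquad \blb := \frac{\blSigma^{1/2} \blx_3}{\sqrt{\tau_2}},
\]
so that $\ip{\blx_1}{\blx_2}/\sqrt{\tau_2} = \ip{\bla}{\blw}$ and $\ip{\blx_2}{\blx_3}/\sqrt{\tau_2} = \ip{\blb}{\blw}$. Next set $\alpha := \|\bla\|_2$, $\beta := \|\blb\|_2$, and $\rho := \ip{\bla}{\blb}/(\alpha\beta)$. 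Then $u := \ip{\bla/\alpha}{\blw}$ and $v := \ip{\blb/\beta}{\blw}$ are jointly standard Gaussian with correlation $\rho$, and the expectation we want equals $\E[\He_\ell(\alpha u)\He_{\ell'}(\beta v)]$.

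Second I would apply the Hermite multiplication formula (Lemma~\ref{lem:mult}) separately to each factor to strip off the scalar multipliers $\alpha$ and $\beta$:
\[
\He_\ell(\alpha u) = \ell! \sum_{k_1=0}^{\floor{\ell/2}} \frac{\alpha^{\ell-2k_1}(\alpha^2-1)^{k_1}}{2^{k_1} k_1! (\ell-2k_1)!} \He_{\ell-2k_1}(u),
\]
and likewise for $\He_{\ell'}(\beta v)$ with summation index $k_2$. Since $(u,v)$ are jointly standard Gaussian with correlation $\rho$, Lemma~\ref{lem:unitredux} applied to the unit vectors $\bla/\alpha, \blb/\beta$ yields $\E[\He_m(u)\He_n(v)] = \delta_{mn}\, n!\, \rho^n$, so only the terms with $\ell - 2k_1 = \ell' - 2k_2$ survive. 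Reindexing with $j := k_1$ (so $k_2 = j + (\ell'-\ell)/2$, which forces $\ell'-\ell$ to be even for any nonzero contribution), the double sum collapses to a single sum over $j \in \{0,\ldots,\floor{\ell/2}\}$.

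Finally I would back-substitute $\alpha^2 = \|\blSigma^{1/2}\blx_1\|_2^2/\tau_2$, $\beta^2 = \|\blSigma^{1/2}\blx_3\|_2^2/\tau_2$, and $\alpha\beta\rho = \blx_1^\top \blSigma\blx_3/\tau_2$ to assemble the three factors $(\|\blSigma^{1/2}\blx_1\|^2/\tau_2 - 1)^j$, $(\|\blSigma^{1/2}\blx_3\|^2/\tau_2 - 1)^{j + (\ell'-\ell)/2}$, and $(\blx_1^\top\blSigma\blx_3)^{\ell-2j}\tau_2^{2j-\ell}$, matching the stated formula. The combinatorial prefactor $\ell!\,\ell'!/[2^{2j + (\ell'-\ell)/2} j! (j + (\ell'-\ell)/2)! (\ell-2j)!]$ falls out directly after canceling a factor of $(\ell - 2k_1)!$ from Lemma~\ref{lem:unitredux} against one $(\ell-2k_1)!$ in the multiplication-formula denominator. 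The only real bookkeeping obstacle is tracking the combinatorial constants and the index shift $k_2 = k_1 + (\ell'-\ell)/2$ carefully, but no new ideas are needed beyond the whitening trick and the two cited lemmas.
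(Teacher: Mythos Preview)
Your proposal is correct and follows essentially the same route as the paper's proof: whiten $\blx_2$ to a standard Gaussian, apply the Hermite multiplication formula (Lemma~\ref{lem:mult}) to each factor, use Lemma~\ref{lem:unitredux} to kill all cross terms except those with matching Hermite index, and then back-substitute. The only cosmetic difference is notation (the paper writes out $\normt{\blSigma^{1/2}\blx_1}/\sqrt{\tau_2}$ etc.\ explicitly rather than introducing $\alpha,\beta,\rho$), but the logic and the two key lemmas invoked are identical.
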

\begin{proof}[Proof of Lemma~\ref{lem:genredux}]
The desired expectation is
\begin{align*}
&\E_{\blx_2}\brackets*{\He_\ell\parens*{\frac{\ip{\blx_1}{\blx_2}}{\sqrt{\tau}}}\He_{\ell'}\parens*{\frac{\ip{\blx_2}{\blx_3}}{\sqrt{\tau}}}}\\
&= \E_{\blz_2}\brackets*{\He_\ell\parens*{\frac{\normt{\blSigma^{1/2}\blx_1}}{\sqrt{\tau_2}}\ip*{\frac{\blSigma^{1/2}\blx_1}{\normt{\blSigma^{1/2}\blx_1}}}{\blz_2}}\He_{\ell'}\parens*{\frac{\normt{\blSigma^{1/2}\blx_3}}{\sqrt{\tau_2}}\ip*{\frac{\blSigma^{1/2}\blx_3}{\normt{\blSigma^{1/2}\blx_3}}}{\blz_2}}},
\end{align*}
where $\blz_2 = \blSigma^{-1/2} \blx_2$ has standard normal distribution. We can now proceed by expanding each Hermite polynomial using the Hermite multiplication theorem (Lemma \ref{lem:mult}) to get 
\begin{align*}
    \sum_{j=0}^{\floor{\ell/2}} \sum_{k=0}^{\floor{\ell'/2}} \frac{\ell! \cdot (\ell')!}{2^{j+k} j! k! (\ell-2j)!(\ell'-2k)!}&\parens*{\frac{\normt{\blSigma^{1/2}\blx_1}}{\sqrt{\tau_2}}}^{\ell-2j} \parens*{\frac{\normt{\blSigma^{1/2}\blx_3}}{\sqrt{\tau_2}}}^{\ell'-2k}\\
    &\cdot \parens*{\frac{\normt{\blSigma^{1/2}\blx_1}^2}{\tau_2} - 1}^{j}\parens*{\frac{\normt{\blSigma^{1/2}\blx_3}^2}{\tau_2} - 1}^{k}\\
    &\cdot \E_{\blz_2}\brackets*{\He_{\ell-2j}\parens*{\ip*{\frac{\blSigma^{1/2}\blx_1}{\normt{\blSigma^{1/2}\blx_1}}}{\blz_2}}\He_{\ell'-2k}\parens*{\ip*{\frac{\blSigma^{1/2}\blx_3}{\normt{\blSigma^{1/2}\blx_3}}}{\blz_2}}}.\\
\end{align*}
Recall that we consider, without loss of generality, the case where $\ell \leq \ell'$.
Next, we apply Lemma \ref{lem:unitredux} to evaluate the expectations above, yielding
\begin{align*}
    &\sum_{j=0}^{\floor{\ell/2}}\frac{\ell! \cdot (\ell')!}{2^{2j + \frac{\ell' - \ell}{2}} j! (\frac{\ell' - \ell}{2} + j)! (\ell-2j)!} \parens*{\frac{\normt{\blSigma^{1/2}\blx_1}}{\sqrt{\tau_2}}}^{\ell-2j}\parens*{\frac{\normt{\blSigma^{1/2}\blx_3}}{\sqrt{\tau_2}}}^{\ell-2j}\\
    &\cdot \parens*{\frac{\normt{\blSigma^{1/2}\blx_1}^2}{\tau_2} - 1}^{j}\parens*{\frac{\normt{\blSigma^{1/2}\blx_1}^2}{\tau_2} - 1}^{\frac{\ell' - \ell}{2} + j} \parens*{\frac{\blx_1^\top \blSigma \blx_3}{\normt{\blSigma^{1/2}{\blx_3}}\normt{\blSigma^{1/2}{\blx_3}}}}^{\ell-2j}\\
    &= \sum_{j=0}^{\floor{\ell/2}}\frac{\ell! \cdot (\ell')!}{2^{2j + \frac{\ell' - \ell}{2}} j! (\frac{\ell' - \ell}{2} + j)! (\ell-2j)!} \parens*{\frac{\normt{\blSigma^{1/2}\blx_1}^2}{\tau_2} - 1}^{j}\parens*{\frac{\normt{\blSigma^{1/2}\blx_3}^2}{\tau_2} - 1}^{\frac{\ell' - \ell}{2} + j} \parens*{\blx_1^\top \blSigma \blx_3}^{\ell-2j} \tau_2^{2j-\ell}.
\end{align*}
\end{proof}

\begin{lemma}[Expected maximum of polynomials under hypercontractivity]\label{lem:maxpoly}
Let $\scrP$ be a probability measure satisfying the following hypercontractivity property: 
\[
\norm{Q}_{L_q} \leq (q-1)^{k/2} \norm{Q}_{L_2}, 
\]
for any polynomial $Q$ of degree at most $k$ in $d$ variables and any integer $q \geq 2$. In particular, this property is satisfied for the standard normal distribution in $\R^d$, the uniform distribution on $\scrS^{d-1}$, and the uniform distribution on the $d$-dimensional binary hypercube.

Let $\blz_1, \dots, \blz_n$ be i.i.d. random variables from $\scrP$ and let $Q$ be a polynomial of degree $k \geq 1$. Then, for $s \geq \frac{2}{k}$,
\[
\E \max_{1\leq i \leq n} |Q(\blz_i)|^s \lesssim (\log{n})^{ks/2} \norm*{Q}_{L^2}^s,
\]
where $\norm*{Q}_{L^2} \coloneqq (\E [Q(z)^2])^{1/2}$ denotes the $L^2$ norm of $Q$ with respect to the distribution $\scrP$, and the suppressed universal constant depends only on $k$ and $s$. 
\end{lemma}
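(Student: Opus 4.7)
The plan is a standard two-step argument: first obtain a stretched-exponential tail bound for a single $|Q(\blz)|$ by optimizing the hypercontractive moment bound, and then combine it with a union bound and integrate via the layer-cake representation. After normalizing to $\norm{Q}_{L^2} = 1$ by homogeneity, the hypercontractivity hypothesis plugged into Markov's inequality at exponent $q$ yields $\P(|Q(\blz)| > t) \leq (q-1)^{kq/2}/t^q$ for every integer $q \geq 2$. Choosing $q$ so that $(q-1)^{k/2} \asymp t/e$, equivalently $q - 1 \asymp t^{2/k}$, collapses the right-hand side to a stretched-exponential tail of the form $\P(|Q(\blz)| > t) \lesssim \exp(-c\, t^{2/k})$ for all $t$ above a constant depending only on $k$.

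From there, the plan is routine: a union bound over the $n$ i.i.d. copies gives $\P(\max_{1 \leq i \leq n} |Q(\blz_i)| > t) \leq \min\{1,\, n \exp(-c\, t^{2/k})\}$, and then the expected $s$-th moment is evaluated via the layer-cake identity $\E \max_i |Q(\blz_i)|^s = \int_0^\infty s t^{s-1} \P(\max_i |Q(\blz_i)| > t)\, dt$. I would split this integral at the natural threshold $t_0 \asymp (\log n)^{k/2}$, where the two branches of the $\min$ meet. The low-$t$ piece is bounded trivially by $t_0^s \asymp (\log n)^{ks/2}$; the high-$t$ piece, after the substitution $u = c\, t^{2/k}$, reduces to an incomplete gamma integral of the form $\int_{c'\log n}^\infty u^{ks/2-1} e^{-u}\, du$ (up to constants depending on $k$ and $s$), which is also $O((\log n)^{ks/2})$ and is subsumed by the plateau contribution.

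The only mildly delicate point is the hypercontractivity optimization step: I would need to check that restricting to integer $q \geq 2$ (rather than continuous $q$) costs only multiplicative constants, which follows from taking $q = \lceil c'' t^{2/k} \rceil + 2$ and tracking the rounding error. The condition $s \geq 2/k$ enters precisely to guarantee $ks/2 \geq 1$, so that the incomplete gamma integrand does not blow up and the tail integral admits a clean bound on the order of $(\log n)^{ks/2}$ rather than picking up an additional logarithmic factor. I expect this to be the trickiest bookkeeping step, though not a conceptual obstacle. A final rescaling by $\norm{Q}_{L^2}^s$ then recovers the bound as stated.
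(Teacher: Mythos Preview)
Your proposal is correct and follows essentially the same approach as the paper's proof: hypercontractivity plus Markov optimized in the exponent to get a stretched-exponential tail, a union bound over the $n$ samples, and then integration via layer-cake with a split at $t_0 \asymp (\log n)^{k/2}$ leading to an incomplete gamma integral bounded using $ks/2 \geq 1$. The only cosmetic difference is that the paper applies Markov directly to $\max_i |Q(\blz_i)|^s$ and optimizes there, whereas you first derive the single-variable tail for $|Q(\blz)|$ and then take the union bound and the $s$-th moment; after a change of variables these are identical computations.
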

\begin{proof}
We first obtain a tail bound for the maximum, following the approach in the proof of Proposition 5.48 in \cite{aubrun2017alice}. Let $q \geq 2$ be a constant we will fix later in the proof. By a union bound and Markov's inequality,
\begin{align*}
\P\braces*{\max_{1\leq i \leq n} |Q(\blz_i)|^s  > t\norm*{Q}_{L^2}^s} &\leq n \P\braces*{|Q(\blz_1)|^s  > t\norm*{Q}_{L^2}^s}\\
&\leq n t^{-q} \norm*{Q}_{L^2}^{-qs} \E\brackets*{|Q(\blz_1)|^{qs}}\\
&\overset{(1)}{\leq} n t^{-q} (qs)^{kqs/2}
\end{align*}
where inequality (1) follows from hypercontractivity. Choosing $q = \frac{t^{2/ks}}{es}$, which satisfies $q\geq 2$ for $t \geq (2es)^{ks/2}$, we obtain
\[
\P\braces*{\max_{1\leq i \leq n} |Q(\blz_i)|^s  > t\norm*{Q}_{L^2}^s} \leq n \text{exp}\parens*{-\frac{k}{2e}t^{\frac{2}{ks}}}.\\
\]
Letting $C>0$ be a constant and integrating the tail bound, we obtain
\begin{align*}
    \E \max_{1\leq i \leq n} |Q(\blz_i)|^s &\leq \norm{Q}_{L^2}^s \brackets*{ (C\log{n})^{ks/2} + \int_{ (C\log{n})^{ks/2}}^\infty n \text{exp}\parens*{-\frac{k}{2e}t^{\frac{2}{ks}}} dt} \\
    &\lesssim \norm{Q}_{L^2}^s \brackets*{ (C\log{n})^{ks/2} + n \int_{C'\log{n}}^\infty \text{exp}\parens*{-u} u^{\frac{ks}{2}-1}du}\\
    &= \norm{Q}_{L^2}^s \brackets*{ (C\log{n})^{ks/2} + n \Gamma\parens*{\frac{ks}{2}, C'\log{n}}}\\
    &\leq \norm{Q}_{L^2}^s \brackets*{ (C\log{n})^{ks/2} + n \exp\parens*{-C' \log{n}} (C'\log{n})^{\frac{ks}{2}-1}},\\
\end{align*}
where the second line uses the substitution $u = \frac{k}{2e}t^{2/ks}$, the third line uses the definition of the incomplete Gamma function, and the last line uses the upper bound $\Gamma(a,x) \leq ae^{-x}x^{a-1}$, which holds for $a\geq 1$ and $x > a$ \cite[Proposition 4.4.3]{gabcke2015neue}. Noting that we can choose $C$ so that $C'=1$ completes the proof.

\end{proof}

\begin{lemma}[Operator norm of Hadamard product with outer product]\label{lem:hadamard-outer}
Let $\blP \in \R^{n \times n}$ and $\bla \in \R^n$. Then,
\[
\norm{\bla \bla^\top \odot \blP} \leq \norm{\bla}_\infty^2 \norm{\blP}.
\]
\begin{proof}
We directly have
\begin{align*}
    \norm{\bla \bla^\top \odot \blP} &= \max_{\normt{\blu} = 1} \normt{(\bla \bla^\top \odot \blP) \blu}\\
    &= \max_{\normt{\blu} = 1} \sqrt{\sum_{i=1}^n |\sum_{j=1}^n P_{ij} a_i a_j u_j|^2}\\
    &\leq \norm{\bla}_\infty \max_{\normt{\blu} = 1} \sqrt{\sum_{i=1}^n |\sum_{j=1}^n P_{ij} a_j u_j|^2}\\
    &= \norm{\bla}_\infty \norm{\blP (\bla \odot \blu)}\\
    &\leq \norm{\bla}_\infty \norm{\blP} \normt{\bla \odot \blu}\\
    &= \norm{\bla}_\infty \norm{\blP} \sqrt{\sum_{i=1}^n |a_i u_i|^2}\\
    &\leq \norm{\bla}_{\infty}^2 \cdot \norm{\blP}.
\end{align*}
\end{proof}
\end{lemma}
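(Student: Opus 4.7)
The plan is to use the variational characterization $\norm{\blM} = \max_{\norm{\blu}_2 = 1}\norm{\blM \blu}_2$ and analyze the action of $(\bla \bla^\top) \odot \blP$ on an arbitrary unit vector $\blu \in \R^n$. The key observation is the algebraic identity
\[
\bigl( ((\bla \bla^\top) \odot \blP)\, \blu \bigr)_i = a_i \sum_{j=1}^n P_{ij} (a_j u_j),
\]
which rewrites the Hadamard-product action compactly as $\bla \odot (\blP (\bla \odot \blu))$. This reduction is really the whole structural content of the lemma: it converts a statement about Hadamard products into a statement about two standard matrix/vector operations sandwiched around $\blP$.

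From here I would just chain three elementary inequalities: apply the entrywise bound $\norm{\bla \odot \blv}_2 \leq \norm{\bla}_\infty \norm{\blv}_2$ with $\blv = \blP(\bla \odot \blu)$, then the operator-norm definition to pull out $\norm{\blP}$, and finally the same $\ell^\infty$--$\ell^2$ bound a second time with $\blv = \blu$. This gives
\[
\norm{((\bla \bla^\top) \odot \blP)\, \blu}_2 \;\leq\; \norm{\bla}_\infty \cdot \norm{\blP} \cdot \norm{\bla}_\infty \cdot \norm{\blu}_2,
\]
and taking the supremum over unit $\blu$ yields the claimed bound $\norm{\bla \bla^\top \odot \blP} \leq \norm{\bla}_\infty^2 \norm{\blP}$.

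I do not expect any real obstacle: once the Hadamard action is rewritten as $\bla \odot (\blP(\bla \odot \blu))$, the proof is a two-line computation. The only mild subtlety worth flagging is that the factor $a_i$ outside the inner sum must be recognized as an outer Hadamard multiplication (not absorbed into $\blP$ as a diagonal rescaling), since this is precisely what produces the \emph{square} of $\norm{\bla}_\infty$ rather than a single power. No concentration, decoupling, or spectral machinery is required --- this lemma is used downstream as a deterministic tool for controlling operator norms of Hadamard products that arise in the proof of Theorem~\ref{thm:gaussianK}.
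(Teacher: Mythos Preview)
Your proposal is correct and follows essentially the same argument as the paper: both use the variational characterization of the operator norm, recognize that $((\bla\bla^\top)\odot\blP)\blu = \bla \odot (\blP(\bla\odot\blu))$, and then apply the bound $\norm{\bla\odot\blv}_2 \le \norm{\bla}_\infty\norm{\blv}_2$ twice together with the definition of $\norm{\blP}$. The only cosmetic difference is that the paper writes out the intermediate sums explicitly rather than packaging them via the Hadamard identity you state.
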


\section{Proof of Theorem \ref{thm:gaussianK}}\label{app:gaussianKproof}
Before continuing with the proof, we define the following ``good event'': 
\begin{align}\label{eq:good-event}
\scrE \coloneqq \braces*{\blX \colon \max_{1\leq i, j \leq n} \abs*{\frac{\ip{\blx_i}{\blx_j}}{\tau_1} - \delta_{ij}} \lesssim \tau_1^{-1} \tau_2^{1/2} \log{n} \lesssim \tau_1^{-1/2} \log{n}}
\end{align}

Applying standard concentration inequalities for polynomials of Gaussian random variables (e.g., \cite[Corollary 5.49]{aubrun2017alice}) and a union bound, we can obtain that $\P\brackets{\scrE} \geq 1 - \frac{c}{n^2}$ for some constant $c>0$. Hence, we will condition on the event $\scrE$ (Equation~\eqref{eq:good-event}) for the remainder of the proof.

\subsection{Off-diagonal part}
For this part, first write the order $\floor{2q}+1$ Taylor expansion of the kernel around $0$. For any $i \neq j$, we have
\[
\blK_{ij} = \sum_{\ell=0}^{\floor{2q}}\frac{f^{(\ell)}(0)}{\ell! \tau^\ell}\ip{\blx_i}{\blx_j}^\ell + \frac{f^{(\floor{2q}+1)}(\zeta_{ij})}{\ell! \tau^{\floor{2q}+1}}\ip{\blx_i}{\blx_j}^{\floor{2q}+1},
\]
for some $\zeta_{ij}$ between $\frac{\ip{\blx_i}{\blx_j}}{\tau}$ and $0$. Hence, we can write 

\[
\diag^\perp(\blK) = \blR + \blS,
\]
where we define 
\begin{align*}
\blR_{ij} &\coloneqq\sum_{\ell=0}^{\floor{2q}}\frac{f^{(\ell)}(0)}{\ell! \tau^\ell}\ip{\blx_i}{\blx_j}^\ell\mathbf{1}\braces{i \neq j}\\
\blS_{ij} &\coloneqq \frac{f^{(\floor{2q}+1)}(\zeta_{ij})}{\ell! \tau^{\floor{2q}+1}}\ip{\blx_i}{\blx_j}^{\floor{2q}+1}\mathbf{1}\braces{i \neq j}.\\
\end{align*}
First, we bound the norm of $\blS$ as follows:
\begin{align*}
    \norm{\blS}^2 \leq \norm{\blS}_F^2 &\lesssim n^2 \max_{i\neq j} \abs*{\frac{f^{(\floor{2q}+1)}(\zeta_{ij})}{\ell! \tau^{\floor{2q}+1}}}^2 \abs*{\ip{\blx_i}{\blx_j}}^{2\floor{2q}+2}\\
    &\overset{(1)}{\lesssim} n^2 \max_{i\neq j} \abs*{\frac{f^{(\floor{2q}+1)}(\zeta_{ij})}{\tau^{\floor{2q}+1}}}^2 \tau^{2\floor{2q}+2} \tau^{-\floor{2q}-1} (\log{n})^{2\floor{2q}+2}\\
    &\overset{(2)}{\lesssim} n^2 \tau^{-\floor{2q} -1} (\log{n})^{2\floor{2q}+2}\\
    &\lesssim \tau^{2q - \floor{2q} - 1}(\log{n})^{2\floor{2q}+2}.\\
\end{align*}
So, we can conclude that $\norm{\blS} \lesssim \tau^{q-\frac{\floor{2q}}{2} - \frac{1}{2}}(\log{n})^{\floor{2q}+1}$. Inequality (1) above relies on the event $\scrE$ and the fact that $\tau_2 \lesssim \tau_1$ (because we assumed that $\norm{\blSigma} = 1$), and (2) additionally uses the fact that $f^{(\floor{2q}+1)}$ is continuous in a neighborhood of $0$, so $\max_{i\neq j} \abs*{f^{(\floor{2q}+1)}(\zeta_{ij})} \leq C$ for some $C>0$ as $\tau \to \infty$.

To understand the behavior of $\blR$, we first expand each monomial in terms of Hermite polynomials: For $i \neq j$, we have

\begin{align*}
    \blR_{ij} &= \sum_{\ell=0}^{\floor{2q}}\frac{f^{(\ell)}(0)}{\ell! }\tau_2^{\ell/2}\tau_1^{-\ell}\parens*{\frac{\ip{\blx_i}{\blx_j}}{\sqrt{\tau_2}}}^\ell\\
    % &= \sum_{\ell=0}^{\floor{2q}}f^{(\ell)}(0)\tau_2^{\ell/2}\tau_1^{-\ell}\sum_{k=0}^{\floor{\ell/2}}\frac{1}{2^k k! (\ell-2k)!}\He_{\ell-2k}\parens*{\frac{\ip{\blx_i}{\blx_j}}{\sqrt{\tau_2}}}\mathbf{1}\braces{i \neq j}
    &= \sum_{\ell=0}^{\floor{2q}}\frac{f^{(\ell)}(0)}{\ell! }\tau_2^{\ell/2}\tau_1^{-\ell}\sum_{k=0}^{\ell} c_{k \ell} \He_k\parens*{\frac{\ip{\blx_i}{\blx_j}}{\sqrt{\tau_2}}},
\end{align*}
where $c_{k \ell} = \frac{1}{k!} \E_{z \sim \scrN(0,1)} [z^\ell \He_k(z)]$. Next, define the matrix
\[
    \bar{\blR}_{ij} = \left(\sum_{\ell=0}^{\floor{2q}}\frac{f^{(\ell)}(0)}{\ell! }\tau_2^{\ell/2}\tau_1^{-\ell}\sum_{k=0}^{\min\{\ell, \floor{4q/3}\}} c_{k \ell} \He_k\parens*{\frac{\ip{\blx_i}{\blx_j}}{\sqrt{\tau_2}}}\mathbf{1}\right)\braces{i \neq j},
\]
Note that $\bar{\blR} = \diag^{\perp} \bar{\blK}$. We aim to show that $\Vert \blR - \bar{\blR}\Vert \to 0$ as $n \to \infty$. 
We have
% To do this, we will use Corollary \ref{cor:Hnorm}. 
\begin{align*}
    \E\norm{\blR - \bar{\blR}} &\lesssim \sum_{\ell = \floor{4q/3}+1}^{\floor{2q}} \tau_2^{\ell/2} \tau_1^{-\ell} \sum_{k= \floor{4q/3} + 1}^{\ell} \norm{\blDelta^{(k)}}\\
    &\overset{(1)}{\lesssim_{\log}} \sum_{\ell = \floor{4q/3}+1}^{\floor{2q}} \tau_2^{\ell/2} \tau_1^{-\ell} \sum_{k= \floor{4q/3} + 1}^{\ell} (\sqrt{n} + n\tau_2^{-k/2}\tau_4^{k/4})\\
    &\lesssim \sum_{\ell=\floor{4q/3}+1}^{\floor{2q}} \tau_1^{-\ell/2}\sqrt{n} + \sum_{\ell = \floor{4q/3} + 1}^{\floor{2q}} \sum_{k = \floor{4q/3} + 1}^{\ell} \tau_2^{\ell/2 - k/2}\tau_1^{-\ell}\tau_4^{k/4}n,
    \end{align*}
where inequality (1) substitutes Corollary~\ref{cor:Hnorm} and we used $\tau_2 \lesssim \tau_1$.
Next, we again use the fact that $\tau_2 \lesssim \tau_1$ and $\tau_4 \lesssim \tau_1$ to obtain
    \begin{align*}
    \E\norm{\blR - \bar{\blR}} &\lesssim \sum_{\ell=\floor{4q/3}+1}^{\floor{2q}} \tau_1^{-\ell/2}\sqrt{n} + \sum_{\ell = \floor{4q/3} + 1}^{\floor{2q}} \sum_{k = \floor{4q/3} + 1}^{\ell} \tau_1^{\ell/2 - k/2 - \ell + k/4}n\\
    &\lesssim \tau_1^{\frac{q-\floor{4q/3} - 1}{2}} + \sum_{\ell = \floor{4q/3} + 1}^{\floor{2q}} \sum_{k = \floor{4q/3} + 1}^{\ell} \tau_1^{q - \ell/2 - k/4}\\
    &\lesssim \tau_1^{\frac{q-\floor{4q/3} - 1}{2}} + \tau_1^{q - \frac{3}{4}\floor{\frac{4q}{3}} - \frac{3}{4}}\\
    &\lesssim \tau_1^{q - \frac{3}{4}\floor{\frac{4q}{3}} - \frac{3}{4}}
\end{align*}
So, by Markov's inequality, with probability at least $1- \frac{1}{\log{n}}$, we have 
\[
\norm{\blR - \bar{\blR}} \lesssim_{\log} \tau_1^{q - \frac{3}{4}\floor{\frac{4q}{3}} - \frac{3}{4}}.
\]
Combining the above, we can conclude that $\norm{\diag^{\perp}\blK - \diag^{\perp}\bar{\blK}} \to 0
$ with probability tending to 1 as $\tau \to \infty$.

\subsection{Diagonal part}
The diagonal part of the error is given by
\begin{align*}
    &\norm{\diag{\blK} - \diag{\bar{\blK}}} =\\ 
    &\max_{1\leq i \leq n} \abs*{f\parens*{\frac{\normt{\blx_i}^2}{\tau}} - \sum_{\ell=0}^{\floor{2q}}\frac{f^{(\ell)}(0)}{\ell! }\tau_2^{\ell/2}\tau_1^{-\ell}\sum_{k=0}^{\floor{4q/3}} c_{k \ell} \He_k\parens*{\frac{\normt{\blx_i}^2}{\sqrt{\tau_2}}} - f(1) + \sum_{j = 0}^{ \floor{\frac{4q}{3}}}\frac{f^{(j)}(0)}{j!}}\\
    &\leq \underbrace{\max_{1\leq i \leq n} \abs*{f\parens*{\frac{\normt{\blx_i}^2}{\tau}} - f(1)}}_{T_1} + \underbrace{\max_{1\leq i \leq n} \abs*{\sum_{\ell=0}^{\floor{2q}}\frac{f^{(\ell)}(0)}{\ell! }\tau_2^{\ell/2}\tau_1^{-\ell}\sum_{k=0}^{\floor{4q/3}} c_{k \ell} \He_k\parens*{\frac{\normt{\blx_i}^2}{\sqrt{\tau_2}}} -  \sum_{j = 0}^{ \floor{\frac{4q}{3}}}\frac{f^{(j)}(0)}{j!}}}_{T_2}.
\end{align*}
Here, by the Lipschitz assumption on $f$, $T_1$ is bounded for sufficiently large $n,\tau$ under the event $\scrE$ as
\[
T_1 \leq L \max_{1 \leq i \leq n} \abs*{\frac{\normt{\blx_i}^2}{\tau} - 1} \lesssim L \tau_1^{-1/2} \log{n}.
\]
For $T_2$, note that
\begin{align*}
    &\abs*{\sum_{\ell=0}^{\floor{2q}}\frac{f^{(\ell)}(0)}{\ell! }\tau_2^{\ell/2}\tau_1^{-\ell}\sum_{k=0}^{\floor{4q/3}} c_{k \ell} \He_k\parens*{\frac{\normt{\blx_i}^2}{\sqrt{\tau_2}}} -  \sum_{j = 0}^{ \floor{\frac{4q}{3}}}\frac{f^{(j)}(0)}{j!}}\\
    &\leq \abs*{\sum_{\ell = 0}^{\floor{4q/3}}\frac{f^{(\ell)}(0)}{\ell! }\parens*{\frac{\normt{\blx_i}^2}{\tau_1}}^\ell - \sum_{\ell = 0}^{\floor{4q/3}} \frac{f^{(\ell)}(0)}{\ell! }} + \abs*{\sum_{\ell = \floor{4q/3} + 1}^{\floor{2q}} \frac{f^{(\ell)}(0)}{\ell! }\tau_2^{\ell/2}\tau_1^{-\ell}\sum_{k=0}^{\floor{4q/3}} c_{k \ell} \He_k\parens*{\frac{\normt{\blx_i}^2}{\sqrt{\tau_2}}}}.    % &\leq  \abs*{\sum_{\ell=0}^{\floor{2q}}\frac{f^{(\ell)}(0)}{\ell! }\tau_2^{\ell/2}\tau_1^{-\ell} c_{\ell, \ell} \He_{\ell}\parens*{\frac{\normt{\blx_i}^2}{\sqrt{\tau_2}}} - \sum_{j = 0}^{ \floor{\frac{4q}{3}}}\frac{f^{(j)}(0)}{j!}} + \abs*{\sum_{\ell=0}^{\floor{2q}}\frac{f^{(\ell)}(0)}{\ell! }\tau_2^{\ell/2}\tau_1^{-\ell}\sum_{k=0}^{\ell - 1} c_{k \ell} \He_k\parens*{\frac{\normt{\blx_i}^2}{\sqrt{\tau_2}}}}\\
    % &\leq \max_{1\leq i \leq n} \abs*{\sum_{\ell=0}^{\floor{2q}}\sum_{k=1}^{\floor{\frac{\ell}{2}}}\frac{f^{(\ell)}(0)R^{-\ell/2}}{2^k k! (\ell-2k)!}\He_{\ell-2k}\parens*{\frac{\normt{\blx_i}^2}{\sqrt{\tau_2}}}\mathbf{1}\braces*{k \geq \frac{3\ell}{2} - 2q}}\\
    % &+ \max_{1\leq i \leq n} \abs*{\sum_{\ell=0}^{\floor{2q}}\frac{f^{(\ell)}(0)R^{-\ell/2}}{\ell!}\He_{\ell}\parens*{\frac{\normt{\blx_i}^2}{\sqrt{\tau_2}}}\mathbf{1}\braces*{0 \geq \frac{3\ell}{2} - 2q} - \sum_{j = 0}^{ \floor{\frac{4q}{3}}}\frac{f^{(j)}(0)}{j!}}\\
    % &\leq \max_{1\leq i \leq n} \abs*{\sum_{\ell=0}^{\floor{2q}}\sum_{k=1}^{\floor{\frac{\ell}{2}}}\frac{f^{(\ell)}(0)R^{-\ell/2}}{2^k k! (\ell-2k)!}\He_{\ell-2k}\parens*{\frac{\normt{\blx_i}^2}{\sqrt{\tau_2}}}\mathbf{1}\braces*{k \geq \frac{3\ell}{2} - 2q}}\\
    % &+ \max_{1\leq i \leq n} \abs*{\sum_{\ell=0}^{\floor{\frac{4q}{3}}}\frac{f^{(\ell)}(0)R^{-\ell/2}}{\ell!}\He_{\ell}\parens*{\frac{\normt{\blx_i}^2}{\sqrt{\tau_2}}} - \sum_{j = 0}^{ \floor{\frac{4q}{3}}}\frac{f^{(j)}(0)}{j!}}\\
    % &\lesssim_{\log} R^{-1},
\end{align*}
By the claim proven in Appendix E.3 of \cite{donhauser2021rotational}, the first term is bounded (for every $i$) under the event $\scrE$ as 
\[
\abs*{\sum_{\ell = 0}^{\floor{4q/3}}\frac{f^{(\ell)}(0)}{\ell! }\parens*{\frac{\normt{\blx_i}^2}{\tau_1}}^\ell - \sum_{\ell = 0}^{\floor{4q/3}} \frac{f^{(\ell)}(0)}{\ell! }} \lesssim_{\log} \tau_1^{-1/2}.
\]
So, we can bound $T_2$ on the event $\scrE$ as
\begin{align*}
    T_2 &\lesssim_{\log} \tau_1^{-1/2} + \max_{1 \leq i \leq n} \abs*{\sum_{\ell = \floor{4q/3} + 1}^{\floor{2q}} \frac{f^{(\ell)}(0)}{\ell! }\tau_2^{\ell/2}\tau_1^{-\ell}\sum_{k=0}^{\floor{4q/3}} c_{k \ell} \He_k\parens*{\frac{\normt{\blx_i}^2}{\sqrt{\tau_2}}}}\\
    &\lesssim \tau_1^{-1/2} +\sum_{\ell = \floor{4q/3} + 1}^{\floor{2q}} \sum_{k=0}^{\floor{4q/3}} \tau_2^{\ell/2}\tau_1^{-\ell} \max_{i}\abs*{\He_k\parens*{\frac{\normt{\blx_i}^2}{\sqrt{\tau_2}}}}\\
    &\lesssim_{\log} \tau_1^{-1/2} + \sum_{\ell = \floor{4q/3} + 1}^{\floor{2q}} \sum_{k=0}^{\floor{4q/3}} \tau_2^{\ell/2}\tau_1^{-\ell}\tau_1^k \tau_2^{-k/2}\\
    &\lesssim \tau_1^{-1/2} + \tau_1^{-1} \tau_2^{1/2} \\
    &\leq \tau_1^{-1/2}.
\end{align*}
Above, we used the fact that $\tau_2 \lesssim \tau_1$.
Combining the above, we can conclude, on the event $\scrE$, that
\[
\norm{\diag{\blK} - \diag{\bar{\blK}}} \lesssim_{\log} \tau_1^{-1/2}.
\]
\qed

\section{Proof of Proposition \ref{prop:Hnorm-isotropic}}\label{app:isotropicK}
Note that in the isotropic case we have $\tau_k = d$ for all $k$. Following the beginning of the proof of Theorem~\ref{thm:gaussianK} in Appendix \ref{app:gaussianKproof}, we separate the error into diagonal and off-diagonal components. For the off-diagonal component, we use the same Taylor decomposition to write
\[
\diag^\perp(\blK) = \blR + \blS,
\]
where
\begin{align*}
\blR_{ij} &\coloneqq \sum_{\ell=0}^{\floor{2q}}\frac{f^{(\ell)}(0)}{\ell! d^\ell}\ip{\blx_i}{\blx_j}^\ell\mathbf{1}\braces{i \neq j}\\
\blS_{ij} &\coloneqq \frac{f^{(\floor{2q}+1)}(\zeta_{ij})}{\ell! d^{\floor{2q}+1}}\ip{\blx_i}{\blx_j}^{\floor{2q}+1}\mathbf{1}\braces{i \neq j},\\
\end{align*}
and the same argument as in the proof of Theorem \ref{thm:gaussianK} shows that $\norm{\blS} = o_d(1)$. So it suffices to approximate the matrix $\blR$. Consider a single term in $\blR$:
\begin{align*}
   \blR^{(\ell)}_{ij} \coloneqq \frac{f^{(\ell)}(0)}{\ell! d^\ell}\ip{\blx_i}{\blx_j}^\ell &=  \frac{f^{(\ell)}(0)}{\ell!} \parens*{\frac{r_i}{\sqrt{d}}}^\ell \parens*{\frac{r_j}{\sqrt{d}}}^{\ell} \ip{\blu_i}{\blu_j}^\ell\\
   &=  \frac{f^{(\ell)}(0)}{\ell!} \parens*{\frac{r_i}{\sqrt{d}}}^\ell \parens*{\frac{r_j}{\sqrt{d}}}^{\ell} d^{-\ell/2} \parens*{\frac{\ip{\tilde{\blu_i}}{\tilde{\blu_j}}}{\sqrt{d}}}^\ell,
\end{align*}
where $r_i \coloneqq \normt{\blx_i}$, $\blu_i \coloneqq \frac{\blx_i}{r_i}$, and $\tilde{\blu_i} \coloneqq \sqrt{d} \blu_i$. Define the function 
\[
h(z) = \parens*{\frac{z}{\sqrt{d}}}^\ell.
\]
Let $\tau_{d-1}$ be the uniform distribution on $\sqrt{d} \cdot \scrS^{d-1}$ and $\tilde{\tau}_{d-1}$ be the distribution of $\sqrt{d}\ip{\blz}{\ble_1}$, when $\blz \sim \tau_{d-1}$. Then, observe that 
\begin{align*}
    \E_{z \sim \tilde{\tau}_{d-1}} [h(z)^2] &= \E_{\blz \sim \tau_{d-1}}\ip{\blz}{\ble_1}^{2\ell} \leq C,
\end{align*}
for some constant $C$ independent of $d$. Here, the last inequality follows from hypercontractivity of the spherical distribution. Hence $h \in L^2(\tilde{\tau}_{d-1})$, with norm independent of $d$. Recalling that the Gegenbauer polynomails $Q_k^{(d)}$ form an orthogonal basis for this space, we can expand $h$ as
\[
h(z) = \sum_{j=0}^\ell \alpha_j Q_j^{(d)}(z),
\]
where $\alpha_j = B(d,j)\E_{z \sim \tilde{\tau}_{d-1}}[h(z)Q_j^{(d)}(z)]$. Here, $B(d,j) \asymp d^j$ is the number of spherical harmonics of degree $j$ in $d$ dimensions. We can bound the coefficients using the Cauchy-Schwarz inequality and Equation \eqref{eq:gegenbauer-norm} as
\[
|\alpha_j| \leq  B(d,j) \cdot C \cdot \norm{Q_j^{(d)}}_{L^2(\tilde{\tau}_{d-1})} \lesssim \sqrt{B(d,j)} \asymp d^{j/2}.
\]
Using this decomposition, we can write each term of $\blR$ as
\[
\blR^{(\ell)}_{ij}  = \frac{f^{(\ell)}(0)}{\ell!} \parens*{\frac{r_i}{\sqrt{d}}}^\ell \parens*{\frac{r_j}{\sqrt{d}}}^{\ell} d^{-\ell/2} \sum_{j=0}^\ell \alpha_{j\ell} Q_j^{(d)}(\ip{\tilde{\blu_i}}{\tilde{\blu_j}}),
\]
where $|\alpha_{j\ell}| \lesssim d^{j/2}$. Next, define the matrix $\bar{\blR}^{(\ell)}$ with off-diagonal entries
\[
\bar{\blR}^{(\ell)}_{ij} \coloneqq \frac{f^{(\ell)}(0)}{\ell!} \parens*{\frac{r_i}{\sqrt{d}}}^\ell \parens*{\frac{r_j}{\sqrt{d}}}^{\ell} d^{-\ell/2} \sum_{j=0}^{\floor{q}} \alpha_{j\ell} Q_j^{(d)}(\ip{\tilde{\blu_i}}{\tilde{\blu_j}}).
\]
Note these two matrices only differ in the case $\ell > \floor{q}$. Then,  by the triangle inequality and recalling the definition of $\blDelta^{(j)}$ from Corollary \ref{cor:Gnorm}, we can write
\begin{align*}
    \E\norm{\blR - \bar{\blR}} &\leq \sum_{\ell = \floor{q} + 1}^{\floor{2q}}\E \norm{\blR^{(\ell)} - \bar{\blR}^{(\ell)}}\\
    &\lesssim  \sum_{\ell = \floor{q} + 1}^{\floor{2q}} d^{-\ell/2} \sum_{j = \floor{q} + 1}^{\ell} |\alpha_{j\ell}| \E\norm*{\parens*{\frac{\blr\blr^\top}{d}}^{\odot \ell} \odot \blDelta^{(j)}}\\
    &\leq \sum_{\ell = \floor{q} + 1}^{\floor{2q}} \sum_{j = \floor{q} + 1}^{\ell} |\alpha_{j\ell}| d^{-\ell/2} \E \norm*{\parens*{\frac{\blr}{\sqrt{d}}}^{\odot \ell}}_\infty^2 \E\norm*{\blDelta^{(j)}},
\end{align*}
where the last inequality follows from  Lemma \ref{lem:hadamard-outer} and the independence of $\blr$ and $\blDelta$.
Returning to the expression above, we have 
\[
\E \norm*{\parens*{\frac{\blr}{\sqrt{d}}}^{\odot \ell}}_\infty^2 = d^{-\ell} \E \max_{i} \norm{\blx_i}_2^{2\ell} \lesssim_{\log} 1.
\]
Moreover, by Corollary \ref{cor:Gnorm}, we have
\[
\E\norm{\blDelta^{(j)}} \lesssim_{\log} \sqrt{nd^{-j}} \asymp d^{q/2 - j/2}.
\]
Combining these bounds, we obtain
\[
\E\norm{\blR - \bar{\blR}} \lesssim_{\log} \sum_{\ell, j = \floor{q} + 1}^{\floor{2q}}  d^{-\ell/2} d^{j/2} d^{q/2} d^{-j/2} \lesssim \sum_{\ell = \floor{q} + 1}^{\floor{2q}} d^{q/2 - \ell/2} \lesssim d^{\frac{q - \floor{q} - 1}{2}}. 
\]
For the diagonal part of the error, we proceed similarly to the proof of Theorem \ref{thm:gaussianK}, conditioning on the same event $\scrE$:
\begin{align*}
    \norm{\diag \blK - \diag \bar{\blK}} &= \max_i \abs*{f\parens*{\frac{\normt{\blx_i}^2}{d}} - \sum_{\ell = 0}^{\floor{q}}\frac{f^{(j)}(0)}{j!} \parens*{\frac{r_i^2}{d}}^\ell  - \sum_{\ell = \floor{q}+ 1}^{\floor{2q}}\frac{f^{(j)}(0)}{j!} \parens*{\frac{r_i^2}{d}}^\ell d^{-\ell/2} \sum_{k=0}^{\floor{q}} \alpha_{k\ell} - f(1) + \sum_{j=0}^{\floor{q}}\frac{f^{(j)}(0)}{j!}}\\
    &\lesssim_{\log} L d^{-1/2} + \sum_{\ell = 0}^{\floor{q}} \max_i \abs*{\parens*{\frac{r_i^2}{d}}^\ell - 1} + \max_i \sum_{\ell = \floor{q} +1}^{\floor{2q}} d^{-\ell/2} \parens*{\frac{r_i^2}{d}}^\ell\sum_{k=0}^{\floor{q}}|\alpha_{k\ell}|\\
    &\lesssim d^{-1/2} + d^{-1/2} + d^{-\floor{q}/2-1/2 + \floor{q}/2}\\
    &\lesssim d^{-1/2}.
\end{align*}
Combining the bounds on the off-diagonal and diagonal components, we can conclude that $\norm{\blK - \bar{\blK}} \to 0$ with probability tending to $1$ as $n,d \to \infty$. 
\qed

\section{Proof of Theorem \ref{thm:biaslowerbound}}\label{app:biasproof}
We consider i.i.d. samples drawn from the model
\[
y_i = g^*(\blx_i) + \epsilon_i,
\]
where $\blx_1,\ldots,\blx_n \text{ i.i.d.} \sim \scrN(\mathbf{0}, \blSigma)$, and $\epsilon_1,\ldots,\epsilon_n \text{ i.i.d} \sim \scrN(0,\sigma^2)$. We will assume that there exists some integer $L>4q-2$ such that $f^{(L+1)}$ is uniformly bounded by a constant. Moreover, we will consider $g^* \in L^2(\scrN(\mathbf{0}, \blSigma))$ of the form
\begin{align}
g^*(\blx) = \sum_{k=0}^K c_k g_k(\ip{\blx}{\blSigma^{-1/2}\blv_k}),
\end{align}
for some constant $K$, fixed unit vectors $\blv_k \in \scrS^{d-1}$, and functions $g_k\in L^2(\scrN(0,1))$.

% We will analyze KRR with the inner-product kernel function given by 
% \[
% k(\blx,\bly) = f\left(\frac{\ip{\blx}{\bly}}{d}\right)
% \]
% and regularization parameter $\lambda \geq 0$ (including the interpolation case $\lambda = 0$).
% We consider the high-dimensional regime where $n = d^q$ for some $q > 0$, growing $n$ and $d$ together.
We set up the following basic notation:
\begin{itemize}
    \item \emph{Data matrix, label and noise vector:} As is standard, we denote $\blX = \begin{bmatrix} \blx_1 & \blx_2 & \ldots & \blx_n \end{bmatrix}^\top$, $\blY = \begin{bmatrix} y_1 & y_2 & \ldots & y_n \end{bmatrix}^\top$ and $\blepsilon = \begin{bmatrix} \epsilon_1 & \epsilon_2 & \ldots & \epsilon_n \end{bmatrix}^\top$ as the data matrix, label and noise vector, respectively.
    \item \emph{Function evaluation vector:} We write the function evaluation vector as \newline $\blg = \begin{bmatrix} g^*(\blx_1) & g^*(\blx_2) & \ldots & g^*(\blx_n) \end{bmatrix}$.
    \item \emph{Empirical kernel matrix and vector:} We denote the empirical kernel matrix by $\blK$ where $K_{ij} = k(\blx_i,\blx_j)$. Further, we denote the vector $\blV = \begin{bmatrix} V_1 & V_2 & \ldots & V_n \end{bmatrix}^\top$ where $V_i = \E_{\blx} \left[g^*(\blx) k(\blx, \blx_i) \right]$.
    \item \emph{Correlation matrix:} We define the correlation matrix that appears in the analysis by $\blM$ where $M_{ij} = \E_{\blx} \left[k(\blx, \blx_i) \cdot k(\blx, \blx_j)\right]$. Note that this is exactly the correlation matrix appearing in Theorem \ref{thm:gen-bound}, applied here to the original inner-product kernel $k(\blx, \bly) = f\parens*{\frac{\ip{\blx}{\bly}}{\tau}}$.
\end{itemize}

With this notation, the expression for the test bias of KRR can be written in closed form as follows:
\begin{align}\label{eq:MSE-KRR-initial-expression}
    \mathsf{Bias}(\hat{g},g^*) := \|\E_y\hat{g} - g^*\|_{L^2}^2 &= \norm{g^*}_{L^2}^2 - 2 \blg^\top (\blK + \lambda \blI_n)^{-1} \blV + \blg^\top(\blK + \lambda \blI_n)^{-1} \blM (\blK + \lambda \blI_n)^{-1} \blg.
\end{align}
The analysis proceeds by approximating each of the latter two terms in this expansion with versions corresponding to a low-degree polynomial function, which will allow us to conclude that the bias of $\hat{g}$ is well-approximated by the bias of a low-degree polynomial. 

We begin by approximating the matrix $\blM$ and the vector $\blV$ separately. We will condition on the event $\tilde{\scrE}$, which holds with probability at least $1-\frac{c}{n^2}$:
\begin{align}\label{eq:good-event-2}
\tilde{\scrE} \coloneqq \braces*{\blX \colon \max_{1\leq i, j \leq n} \abs*{\frac{\ip{\blSigma^{1/2}\blx_i}{\blSigma^{1/2}\blx_j}}{\tau_2} - \delta_{ij}} \lesssim \tau_2^{-1} \tau_4^{1/2} \log{n}}
\end{align}
Furthermore, by assumption on $g^*$, we can also condition on the event that
\[
\norm{\blg}_2^2 \lesssim n\log{n},
\]
which holds with probability tending to $1$ by Markov's inequality and  because $g^* \in L^2$.

\subsection{Approximation of the $\blM$ matrix}
First, consider a fixed $\blx$ and perform a Taylor expansion of the kernel function to get
\begin{align*}
    k(\blx_i, \blx) = \sum_{\ell=0}^L \frac{f^{(\ell)}(0)}{\ell! \tau^\ell} \ip{\blx}{\blx_i}^\ell + \frac{f^{(L+1)}(\zeta_i)}{(L+1)!\tau^{L+1}}\ip{\blx_i}{\blx}^{L+1},
\end{align*}
where $\zeta_i$ is between $0$ and $\frac{1}{\tau}\ip{\blx_i}{\blx}$. Defining $\blz \coloneqq \blSigma^{-1/2}\blx$,  $r_i \coloneqq \normt{\blSigma^{1/2} \blx_i}$ and $\blu_i \coloneqq \frac{\blSigma^{1/2}\blx_i}{\normt{\blSigma^{1/2} \blx_i}}$, we can write
\begin{align*}
    k(\blx_i, \blx) = \sum_{\ell=0}^L \frac{f^{(\ell)}(0)r_i^\ell}{\ell! \tau^\ell} \ip{\blz}{\blu_i}^\ell + \frac{f^{(L+1)}(\zeta_i)}{(L+1)!\tau^{L+1}}\ip{\blx_i}{\blx}^{L+1}.
\end{align*}
We rewrite the first term in terms of a univariate Hermite expansion to obtain
\begin{align}\label{eq:kdecomp}
    k(\blx_i, \blx) = \sum_{\ell=0}^L b_{\ell,i} \He_\ell(\ip{\blz}{\blu_i}) + \frac{f^{(L+1)}(\zeta_i)}{(L+1)!\tau^{L+1}}\ip{\blx_i}{\blx}^{L+1},
\end{align}
where
\[
b_{\ell,i} \coloneqq \frac{1}{\ell!}\sum_{m=\ell}^L \frac{f^{(m)}(0)r_i^m}{m!\tau^m}\E_{z \sim \scrN(0,1)}[z^m \He_\ell(z)].
\]
It is easy to verify that, on the event $\tilde{\scrE}$, these coefficients are bounded as $|b_{\ell,i}|\lesssim_{\log} \tau_1^{-\ell}\tau_2^{\ell/2}$. 
Using the decomposition in Equation \eqref{eq:kdecomp} and Lemma \ref{lem:unitredux}, we can write
\begin{align*}
    M_{ij} = \underbrace{\sum_{\ell=0}^L \ell! b_{\ell,i}b_{\ell,j}\ip{\blu_i}{\blu_j}^\ell}_{M_{ij}^{(1)}} &+ \underbrace{\sum_{\ell=0}^L b_{\ell,i}\E_{\blx}\brackets*{\He_\ell(\ip{\blz}{\blu_i})\frac{f^{(L+1)}(\zeta_j)}{(L+1)!\tau^{L+1}}\ip{\blx_j}{\blx}^{L+1}}}_{M_{ij}^{(2)}} \\
    &+ \underbrace{\sum_{\ell=0}^L b_{\ell,j}\E_{\blx}\brackets*{\He_\ell(\ip{\blz}{\blu_j})\frac{f^{(L+1)}(\zeta_i)}{(L+1)!\tau^{L+1}}\ip{\blx_i}{\blx}^{L+1}}}_{M_{ij}^{(3)}} \\
    &+\underbrace{\E_{\blx}\brackets*{\frac{f^{(L+1)}(\zeta_i)}{(L+1)!\tau^{L+1}}\ip{\blx_i}{\blx}^{L+1}\frac{f^{(L+1)}(\zeta_j)}{(L+1)!\tau^{L+1}}\ip{\blx_j}{\blx}^{L+1}}}_{M_{ij}^{(4)}} \\
    % &=\colon M^{(1)}_{ij} + M^{(2)}_{ij} + M^{(3)}_{ij} + M^{(4)}_{ij}.
\end{align*}
For the latter three terms, we use the Cauchy-Schwarz inequality and event $\tilde{\scrE}$ to obtain the following bounds (using the cruder bound $|b_{\ell,i}| \lesssim_{\log} \tau_1^{-\ell} \tau_2^{\ell/2} \lesssim `\tau_1^{-\ell/2}$):
\begin{align*}
\abs*{M^{(2)}_{ij}} &\lesssim \sum_{\ell=0}^L \tau^{-\ell/2} \tau^{-L-1}\normt{\blSigma^{1/2} \blx_i}^{L+1} \lesssim_{\log} \tau^{\frac{-L-1}{2}},\\
\abs*{M^{(3)}_{ij}} &\lesssim \sum_{\ell=0}^L \tau^{-\ell/2} \tau^{-L-1}\normt{\blSigma^{1/2}\blx_j}^{L+1} \lesssim_{\log} \tau^{\frac{-L-1}{2}},\\
\abs*{M^{(4)}_{ij}} &\lesssim  \tau^{-L-1}\normt{\blSigma^{1/2}\blx_i}^{L+1}  \tau^{-L-1}\normt{\blSigma^{1/2}\blx_j}^{L+1} \lesssim_{\log} \tau^{-L-1},
\end{align*}
where we use the fact that we have conditioned on $\tilde{\scrE}$ and $\tau_2 \leq \tau$.
Hence, the corresponding matrices have operator norm bounded up to log factors by $n\tau^{\frac{-L-1}{2}} = \tau^{q-\frac{L}{2}-\frac{1}{2}}$. 
Next, consider the term 
\[\sum_{\ell=0}^L \ell! b_{\ell,i}b_{\ell,j}\ip{\blu_i}{\blu_j}^\ell =\colon \sum_{\ell=0}^L M_{ij}^{(1,\ell)}.
\]
Bounding each term in the summation separately, we obtain for $i \neq j$ (again, using the event $\tilde{\scrE}$), we have
\[
\abs*{M^{(1,\ell)}_{ij}} \lesssim  \tau_1^{-2\ell}\tau_2^{\ell} |\ip{\blu_i}{\blu_j}|^\ell \lesssim_{\log} \tau_1^{-2\ell}\tau_2^{\ell} \tau_2^{-\ell}\tau_4^{\ell/2} = \tau_1^{-2\ell}\tau_4^{\ell/2} \lesssim \tau_1^{-3\ell/2}.
\]
Now, for any $\ell > \floor{4q/3}$, we can use the triangle inequality to upper bound the operator norm of $\norm*{\blM^{(1,\ell)}}$ by the sum of the norm of the diagonal part (i.e., the maximum absolute diagonal entry) and the norm of the off-diagonal part (for which we use a simple Frobenius norm bound).
In more detail, we have
\begin{align*}
\norm*{\blM^{(1,\ell)}} &\lesssim_{\log}  n \tau^{-3\ell/2} + \max_{1 \leq i \leq n} \abs{M_{ii}^{(1,\ell)}}\\
&\lesssim n \tau^{-3\ell/2} + \max_{1 \leq i \leq n} \sum_{\ell = 0}^L |b_{\ell,i}|^2\\
&\lesssim \tau^{q-3\ell/2} + \sum_{\ell = 0}^L \tau_1^{-2\ell} \tau_2^{\ell}\\
&\lesssim \tau^{q-3\ell/2}.
\end{align*}
Finally, defining
\[
\bar{M}_{ij} \coloneqq \sum_{\ell=0}^{\floor{4q/3}} \ell! b_{\ell,i}b_{\ell,j}  \ip{\blu_i}{\blu_j}^\ell,
\]
we can conclude that
\begin{align*}
     \abs*{\blg^\top(\blK + \lambda \blI_n)^{-1} (\blM -\bar{\blM})(\blK + \lambda \blI_n)^{-1} \blg}&\lesssim \normt{\blg}^2 \norm{(\blK + \lambda \blI_n)^{-1}}^2 \norm{\blM - \bar{\blM}}\\
     &\lesssim_{\log} n (\tau^{q-\frac{L}{2}-\frac{1}{2}} +  \tau^{q-\frac{3}{2}(\floor{\frac{4q}{3}} + 1)})\\
     &= \tau^{2q-\frac{L}{2}-\frac{1}{2}} +\tau^{2q-\frac{3}{2}(\floor{\frac{4q}{3}} + 1)}\\
     &= o_\tau(1).
\end{align*}
The second line above uses the fact that $\blK + \lambda \blI$ has eigenvalues larger than a constant. In the case $\lambda = 0$, this is guaranteed with probability tending to $1$ by the approximation result for $\blK$ in the main paper (note this only requires the more crude bound obtained by \cite{donhauser2021rotational}. More precisely, by Weyl's inequality and Theorem \ref{thm:gaussianK}, we have
\[
\mu_n(\blK) \geq \mu_n(\bar{\blK}) - o_\tau(1) \geq \parens*{f(1) - \sum_{j=0}^{\floor{4q/3}}\frac{f^{(j)}(0)}{j!}} - o_\tau(1) \geq c,
\]
for some $c>0$ and sufficiently large $\tau$. We note that we use the approximation from Theorem 3, but for this result one could also use the more crude approximation in~\cite{donhauser2021rotational}.

\subsection{Approximation of the $\blV$ vector}

We approximate this term in a similar manner. We can write the $i$-th entry of $\blV$ as
\begin{align*}
    V_i &= \underbrace{\sum_{\ell=0}^L b_{\ell,i} \E_{\blx}\brackets*{\He_{\ell}(\ip{\blz}{\blu_i})g^*(\blx)}}_{V_i^{(1)}} + \underbrace{\E_{\blx}\brackets*{\frac{f^{(L+1)}(\zeta_i)}{(L+1)!\tau^{L+1}}\ip{\blx}{\blx_i}^{L+1}g^*(\blx)}}_{V_i^{(2)}}.\\
    % &=: V_i^{(1)} + V_i^{(2)}.
\end{align*}
We use the Cauchy-Schwarz inequality and event $\tilde{\scrE}$ to obtain the bound
\[
\abs*{V_i^{(2)}} \lesssim_{\log} \tau^{-L-1}\tau_2^{(L+1)/2} \lesssim \tau^{\frac{-L-1}{2}},
\]
from which we can conclude $\normt{\blV^{(2)}} \lesssim_{\log} \sqrt{n}\tau^{\frac{-L-1}{2}} = \tau^{\frac{q-L-1}{2}}$.
For $\blV^{(1)}$, consider each term separately. 
\[
\abs*{V_i^{(1,\ell)}} \coloneqq b_{\ell,i}\E_{\blx}\brackets*{\He_\ell(\ip{\blz}{\blu_i}g^*(\blx)} \lesssim \tau^{-\ell}\tau_2^{\ell/2} \abs*{\E_{\blx}\brackets*{\He_\ell(\ip{\blz}{\blu_i}g^*(\blx)}}.
\]
Using the assumed form of $g^*$ in Equation~\eqref{eq:gstar}, we can compute this expectation as
\begin{align*}
\abs{\E_{\blx}\brackets*{\He_\ell(\ip{\blz}{\blu_i}g^*(\blx)}} &= \abs*{\sum_{k=0}^K c_k \E_{\blz}[\He_\ell(\ip{\blz}{\blu_i})g_k(\ip{\blz}{\blv_k}]}\\
 &= \abs*{\sum_{k=0}^K c_k \sum_{j=0}^\infty \alpha_{jk}\E_{\blz}[\He_\ell(\ip{\blz}{\blu_i})\He_j(\ip{\blz}{\blv_k})]}\\
&= \abs*{\sum_{k=0}^K \ell! c_k \alpha_{\ell,k} \ip{\blu_i}{\blv_k}^\ell}\\
&\lesssim \sum_{k=0}^K \abs{\ip{\blu_i}{\blv_k}}^\ell\\
&= \sum_{k=0}^K r_i^{-\ell} \abs{\ip{\blSigma^{1/2}\blx_i}{\blv_k}}^\ell\\
&= \sum_{k=0}^K r_i^{-\ell} \abs{\ip{\blz_i}{\blSigma \blv_k}}^\ell,
\end{align*}
where $\alpha_{jk}$ are the Hermite coefficients of $g_k$. Recall also the $\blv_k$ are fixed unit vectors. Note that for any $k$, we have
\[
\abs{\ip{\blz_i}{\blSigma \blv_k}} \overset{d}{=} \norm{\blSigma \blv_k} \abs{z_k} \leq \abs{z_k},
\]
for a standard normal variable $z_k$, so via a standard Gaussian tail bound and a union bound over all $K$ variables, we have $\abs{z_k} \leq \sqrt{2\log{n}}$ for all $k$, with probability at least $1-\frac{K}{n}$. So, we can conclude that 
\[
\abs{\E_{\blx}\brackets*{\He_\ell(\ip{\blz}{\blu_i}g^*(\blx)}} \lesssim_{\log} r_i^{-\ell} \lesssim_{\log} \tau_2^{-\ell/2},
\]
with probability tending to $1$. From this, we have
\[
\normt*{V_i^{(1,\ell)}} \lesssim_{\log} \sqrt{n}\tau^{-\ell}\tau_2^{\ell/2}\tau_2^{-\ell/2} \asymp \tau^{\frac{q}{2}-\ell} 
\]
Combining the above, we can define $\bar{\blV} \coloneqq \sum_{\ell=0}^{\floor{4q/3}}\blV^{(1, \ell)}$. So, we have

\begin{align*}
  \abs*{\blg^\top(\blK+\lambda \blI_n)^{-1}(\blV - \bar{\blV})} &\lesssim_{\log} \sqrt{n} \tau^{\frac{q}{2} - \floor{\frac{4q}{3}} - 1} = \tau^{q - \floor{\frac{4q}{3}} - 1} = o_\tau(1).
\end{align*}
(Note that we could have actually used the sharper approximation $\bar{\blV} := \sum_{\ell = 0}^{\floor{q}} \blV^{(1, \ell)}$ for this part of the proof.
However, we pick the degree-$\floor{4q/3}$ approximation to match the approximation of the $\blM$ term for convenience.)

\subsection{Concluding the argument}
Motivated by the results from the previous two sections, we can define the following function (which depends on $\blx_1, \dots, \blx_n$):
\[
\bar{g}(\blx) \coloneqq \blg^\top (\blK + \lambda \blI_n)^{-1} \bar{k}(\blX, \blx),
\]
where $\bar{k}(\blX, \blx) \in \R^n$ is a vector with $i$-th entry given by 
\[
\bar{k}(\blx_i, \blx) \coloneqq \sum_{\ell=0}^{\floor{4q/3}} b_{\ell,i}\He_\ell(\ip{\blz}{\blu_i}) = \sum_{\ell=0}^{\floor{4q/3}} b_{\ell,i}\He_\ell(\ip{\blx}{\blSigma^{-1/2}\blu_i}).
\]

Note that $\bar{g}$ is a polynomial of $\blx$ of degree at most $\floor{\frac{4q}{3}}$, so its bias is lower bounded by the bias of the best $\floor{\frac{4q}{3}}$ approximation to $g^*$. Moreover, we have

\[
\mathsf{Bias}(\bar{g},g^*) = \norm{g^*}_{L^2}^2 - 2 \blg^\top (\blK + \lambda \blI_n)^{-1} \bar{\blV} + \blg^\top(\blK + \lambda \blI_n)^{-1} \bar{\blM} (\blK + \lambda \blI_n)^{-1} \blg
\]
The results of the previous two sections imply that
\[
\abs*{\mathsf{Bias}(\hat{g},g^*)- \mathsf{Bias}(\bar{g},g^*)} = o_\tau(1).
\]
Therefore, we can conclude that 
\[
\mathsf{Bias}(\hat{g}, g^*) \geq \inf_{p \in \scrP_{\leq \floor{\frac{4q}{3}}}} \norm{p - g^*}_{L^2}^2  - o_\tau(1).
\]
\qed
\end{document}